\setlist[itemize]{noitemsep, topsep=0pt}
\newtheorem{Def}{Definition}
\newtheorem{Prop}{Proposition}
\newtheorem{?}{Problem}
\def\bW{\mathbf{W}}
\def\R{\mathbb{R}}
\def\mdef{\overset{\Delta}{=}}
\newcommand{\xmark}{\ding{55}}
\DeclareMathOperator*{\argmin}{arg\,min}
\newcommand{\E}{\mathbb{E}}
\newcommand{\rot}{\mathbf{R}}
\renewcommand{\contentsname}{}         % Redefines the name of the ToC
\patchcmd{\tableofcontents}{\section*{\contentsname}}{}{}{}
\newcommand{\remove}[1]{}
\DeclareMathOperator{\Tr}{Tr}
\newcommand{\w}{\mathbf{w}}
\title{Understanding Adam Requires Better Rotation Dependent Assumptions}
\author{%
  Tianyue H. Zhang$^{1,2,\dagger}$
  \And
  Lucas Maes$^{1,2,\dagger}$ 
  \And
  Alan Milligan$^{3}$
  \AND
  Alexia Jolicoeur-Martineau$^{4}$
  \And
  Ioannis Mitliagkas$^{1,2,5,6}$
  \And
  Damien Scieur$^{2,4}$
  \And
  Simon Lacoste-Julien$^{1,2,4,5}$
  \And
  Charles Guille-Escuret$^{1,2}$
  \AND
    \begin{tabular}[t]{@{}c@{}}
    \normalfont
    \textsuperscript{1} Mila, Quebec AI Institute \quad \textsuperscript{2} Université de Montréal \\
    \normalfont
    \textsuperscript{3} University of British Columbia \quad \textsuperscript{4} Samsung SAIL Montreal \\
    \normalfont
    \textsuperscript{5} Canada CIFAR AI Chair \quad \textsuperscript{6} Archimedes Unit, Athena Research Center
  \end{tabular}
}
\begin{document}

\maketitle

\begin{abstract}
Despite its widespread adoption, Adam's advantage over Stochastic Gradient Descent (SGD) lacks a comprehensive theoretical explanation. This paper investigates Adam's sensitivity to rotations of the parameter space. We observe that Adam's performance in training transformers degrades under random rotations of the parameter space, indicating a crucial sensitivity to the choice of basis in practice. This reveals that conventional rotation-invariant assumptions are insufficient to capture Adam's advantages theoretically. To better understand the rotation-dependent properties that benefit Adam, we also identify structured rotations that preserve or even enhance its empirical performance. We then examine the rotation-dependent assumptions in the literature and find that they fall short in explaining Adam's behaviour across various rotation types. In contrast, we verify the orthogonality of the update as a promising indicator of Adam’s basis sensitivity, suggesting it may be the key quantity for developing rotation-dependent theoretical frameworks that better explain its empirical success.
\end{abstract}

\section{Introduction}
\let\thefootnote\relax\footnotetext{${}^\dagger$ Equal contribution, correspondence to: \texttt{tianyue.zhang@mila.quebec}}
Large Language Models (LLMs) have demonstrated remarkable capabilities as their scale grows~\citep{NEURIPS2020_1457c0d6, kaplan2020scaling}. However, this unprecedented growth in model scale has led to a proportional increase in the economic~\citep{dong2024large, sharir2020cost, varoquaux2024hypesustainabilitypricebiggerisbetter} and environmental~\citep{JMLR:v24:23-0069, luccioni2019quantifying} costs associated with their training. Despite this clear motivation, Adaptive Moment Estimation (Adam)~\citep{DBLP:journals/corr/KingmaB14} has persisted as the go-to optimizer especially for language models, with only minor modifications such as AdamW~\citep{DBLP:conf/iclr/LoshchilovH19} becoming widely adopted since Adam's inception. This success has prompted extensive research to provide theoretical justification for Adam's performance. While the original convergence proof for Adam was later found to be flawed~\citep{rubio2017convergence}, recent studies have proposed rigorous convergence proofs under plausible assumptions~\citep{NEURIPS2023_a3cc5012, chen2018on, defossez2022a}. 

However, these proofs do not elucidate Adam's advantages over SGD when training transformer models~\citep{10.5555/3295222.3295349}. Numerous works attempted to explain Adam's superiority,  employing diverse assumptions and analytical frameworks~\citep{zhou2024on,pan2022toward,DBLP:journals/corr/abs-2402-16788,kunstner2024heavytailedclassimbalanceadam}. The heterogeneity of these approaches % and their key assumptions 
has led to a lack of consensus on which theoretical explanations most accurately capture the fundamental mechanisms underlying Adam's improved performance. For instance, \citet{NEURIPS2020_b05b57f6} suggests it stems from enhanced robustness to heavy-tailed noise, while ~\citet{kunstner2023noise} argues it plays no role.

\begin{figure*}[t]
    \centering
    \subfloat[GPT2 (124M)]{
        \includegraphics[width=0.46\textwidth]{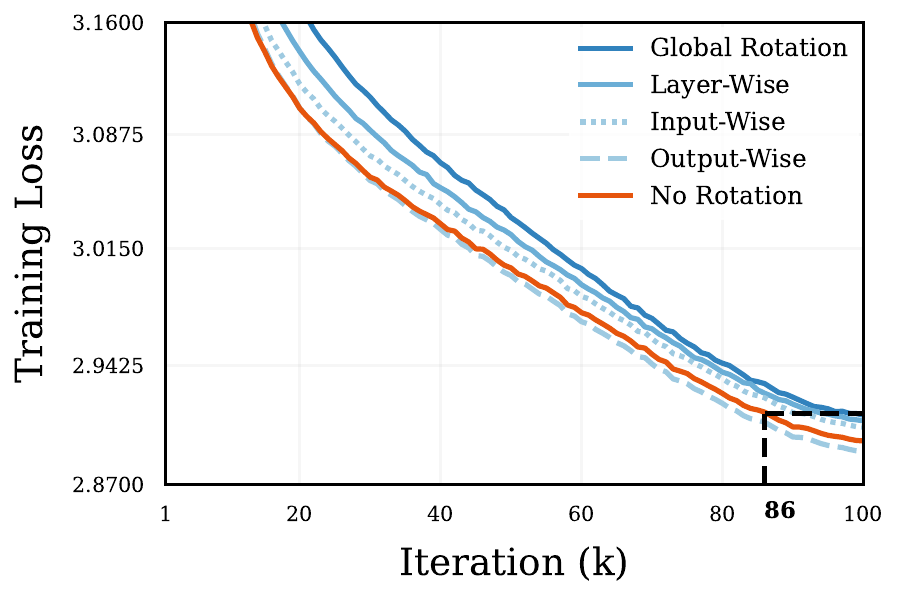}
        \label{fig:gpt_results}
    }
    \hfill
    \subfloat[ViT/S (22M)]{
        \includegraphics[width=0.45\textwidth]{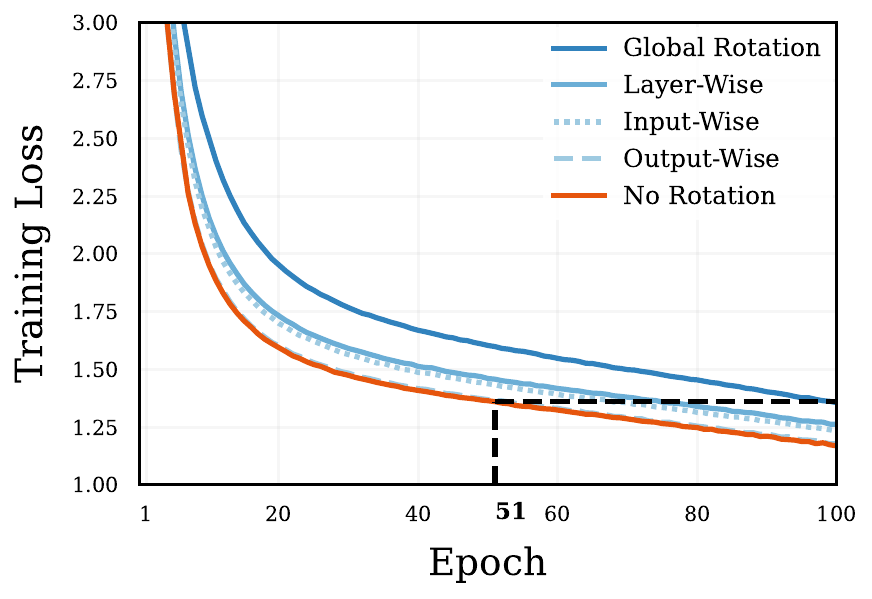}
        \label{fig:vit_results}
    }
    \caption{\textbf{Adam's performance degrades under certain random rotations of the parameter space, demonstrating its dependence on the standard basis.} (a) For GPT2, global rotations lead to a $16\%$ slowdown in training. (b) ViT experiences a more dramatic $96\%$ slowdown under global rotations. Performance is preserved under output-wise rotations but progressively worsens with input-wise, layer-wise, and global rotations, revealing Adam's increasing sensitivity to coordinate changes of broader scopes. Experimental details are provided in \Cref{subsec:main_exp}.}
    \label{fig:main_results}
    \vspace{-1em}
\end{figure*}

This study focuses on a fundamental distinction between Adam and SGD: Adam's dependency on the coordinate system. SGD is rotation-equivariant, meaning if the loss landscape is rotated, the resulting optimization trajectories from SGD will be the same up to that rotation. In contrast, Adam produces substantially different trajectories. Although the sensitivity of adaptive methods to rotations of the parameter space is well established \citep{duchi11a}, it remains unclear what properties of a rotation benefit or hinder performance, particularly in practical neural network training.

Our experimental investigation reveals that Adam's performance when training transformers empirically degrades when the objective function undergoes random rotations~(\Cref{fig:main_results}). This result demonstrates that Adam's effectiveness crucially depends on the canonical basis, challenging the adequacy of many existing theoretical frameworks used to analyze Adam's performance. Assumptions employed in the literature (\Cref{appendix:rot_dep_assumptions_proof}) are typically rotation-invariant, and thus the resulting frameworks are agnostic to the basis, preventing them from fully capturing or justifying Adam's empirical advantages.

Beyond theoretical motivations, recent studies have shown that applying rotations to the canonical basis can significantly enhance the performance of Adam and other optimizers~\citep{vyas2025soap,47079,jordan2024muon}. However, these rotations are often designed based on intuition and heuristics rather than systematically understanding their impact. By identifying the key properties that make a basis advantageous for Adam, we can develop more principled approaches to constructing optimal rotations that may outperform the existing rotation-based methods.

% To deepen our understanding of the relationship between basis orientation and Adam's performance, we address two fundamental questions:
To understand the relationship between basis orientation and Adam's performance, we address two key questions:
\begin{tcolorbox}[title=, boxsep=3pt, left=2pt, right=2pt]
\textbf{Q1.} How do various types of rotations influence Adam's performance in practice?
\label{question:rotations_adam}
\end{tcolorbox}
We investigate \textbf{Q1} by conducting experiments in \Cref{sec:rotations_influence_on_adam_efficiency}, examining Adam's convergence when rotating specific regions of the parameter space. We also identify some rotations that preserve or enhance Adam's performance. These findings provide a more nuanced picture of Adam's adaptive behaviour and its dependency on the basis.
% and which properties of the basis are most consequential.

Finally, as a rotation-invariant theoretical framework cannot fully capture Adam's advantage, we turn to rotation-dependent assumptions existing in the literature, and seek to answer:
\begin{tcolorbox}[title=, boxsep=3pt, left=2pt, right=2pt]
\textbf{Q2.} % Do existing rotation-dependent assumptions adequately capture the desirable characteristics in a basis?
What rotation-dependent assumptions adequately capture Adam's behaviour under rotations?\label{question:assumptions}
\end{tcolorbox}
\Cref{sec:adequacy_existing_assumptions} examines three rotation-dependent assumptions used in literature in this context: $L_\infty$ bounded gradients, Hessian block-diagonality, and $L_\infty$-smoothness. Our analysis reveals that none of these conditions fully capture Adam's behaviour under rotations. Recently, Muon~\citep{jordan2024muon} achieved strong performance by approximating orthogonalized gradients for each layer. Inspired by this, we measure the orthogonality of Adam's weight matrix updates for each layer (up to scalar multiplication) and find it closely aligns with performance across various rotations.

%This suggests it may be the key missing quantity for developing practically relevant theoretical assumptions, paving the way for future research on rotation-aware frameworks that better capture Adam's empirical behavior.}

We summarize our key contributions: 
\begin{enumerate}
    \item \textbf{Analysis of Adam's sensitivity to various scopes of parameter space rotations in neural network training}: We conduct in \Cref{subsec:main_exp} an empirical study demonstrating Adam's sensitivity to random parameter space rotations in practical training. We found a clear correlation between rotation scope (e.g., global, layer-wise) and performance, where a broader scope leads to greater degradation. In \Cref{subsec:improving_rots} we also employ a structured, SVD-based rotation inspired by \citet{zhao2024galore} that improves Adam's convergence.
    \item \textbf{Challenging existing rotation-dependent assumptions}: We assess the applicability of rotation-dependent properties in the literature by examining them jointly with our experimental study, and find that existing theoretical frameworks are not properly equipped to understand the beneficial properties of Adam.
    \item \textbf{Verifying a better rotation-dependent quantity:} Given that SVD-based rotations improve performance, we analyze the singular values of the layer updates and find that their orthogonality is a strong predictor of Adam’s performance across different bases. We also draw a connection to the Muon optimizer~\citep{jordan2024muon}, which approximates orthogonalized updates, and provide additional empirical support for its underlying intuition.
\end{enumerate}

\section{Preliminaries}
\label{sec:preliminaries}
% \subsection{Notations and Settings} 

\label{subsec:notations}
Let $f: \R^d\rightarrow \R$ % be an objective function, typically
be the loss of a neural network with $d$ trainable parameters. Stochastic optimization algorithms approximate $\argmin_{\w\in\R^d}
f(\w)$ by only accessing independent stochastic functions $f_B$ that depend on a stochastic minibatch $B$ following some data distribution $\mathcal{D}$ such that $\forall \w\in\R^d, \E_{B\sim\mathcal{D}}[f_B(\w)]=f(\w)$. 

Our study examines the optimization process under \textbf{rotations of the parameter space}. %\footnote{Which we also sometimes refer to as rotations of the parameter space, due to their equivalence. Indeed, \TODO{Damien}} 
More formally, let $SO(d)$ be the set of rotation matrices, 
\begin{equation*}
    \!\!\! SO(d) \!=\! \big\{ \rot \! \in \!\mathbb{R}^{d\times d} \!\!: \rot^\top\!\rot = \rot\rot^\top\!\!\! = \textbf{I}, \det(\rot) \!= \!1 \big\}.
\end{equation*}
Instead of directly optimizing $f$, we consider its rotated counterpart $f^{(\rot)}:\w\rightarrow f(\rot^\top \w), \; \rot \in SO(d)$. %, accessed through $f_B^{(\rot)}:w\rightarrow f_B(\rot^\top w)$.
This transformation rotates the coordinate system while preserving the geometry of the optimization landscape.
% i.e. $R \in \mathrm{SO}(d) = \{R \in \mathbb{R}^{d\times d} \mid R^T R = RR^T = I_d, \det(R) = 1\}$. 

\begin{figure}[t]
    \centering
    \includegraphics[width= 0.9\linewidth]{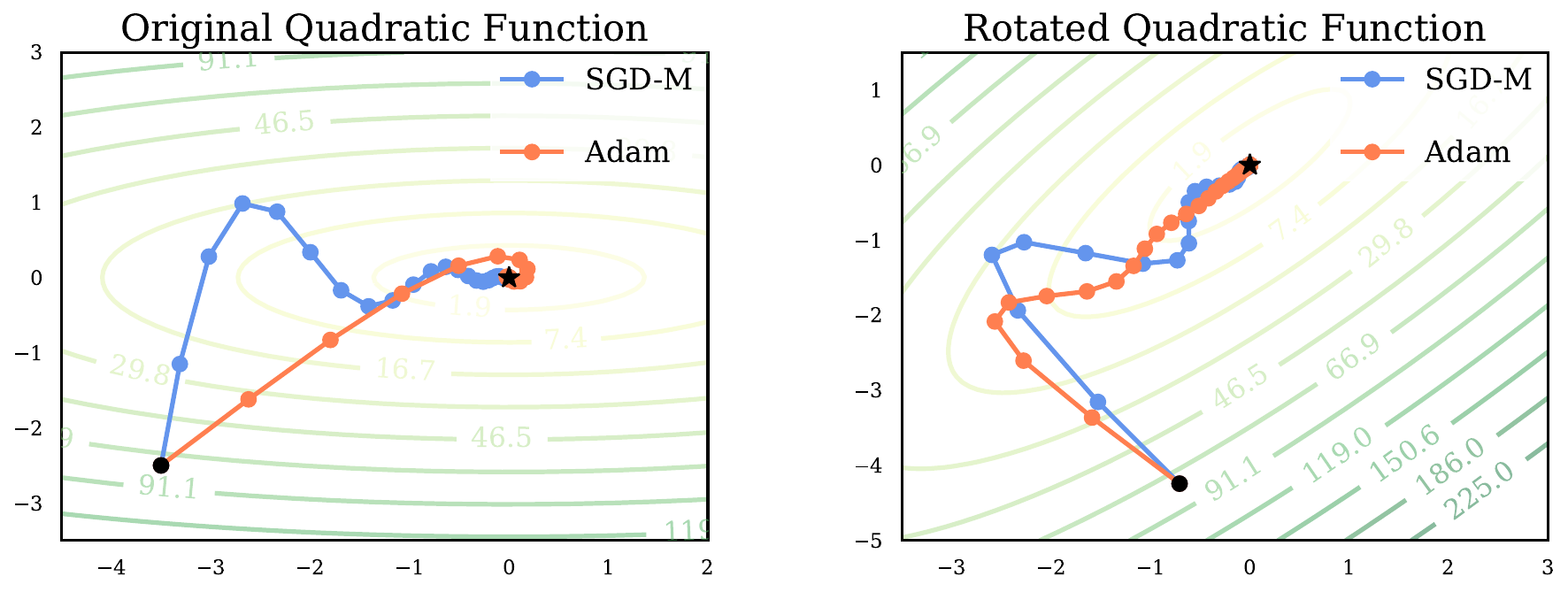}
    \caption{Trajectories of SGD-M and Adam on a quadratic under two different rotations. 
    \textbf{SGD-M maintains the same trajectory up to rotation; Adam does not.}}
    \label{fig:quadratic_sim}
    \vspace{-1em}
\end{figure}

\subsection{Rotational Equivariance of SGD}
% We say that an optimizer $\mathcal{A}$ is \textbf{rotation equivariant} if, after a rotation of the parameter space, its trajectories are equally rotated. More formally, for $w_0\in\R^d$ and $t\in\mathbb{N}$, let us denote $\mathcal{A}(w_0, f, t)$ the $t$-th iterate of the trajectory $\mathbf{w}=w_0,w_1,\dots$ obtained by applying $\mathcal{A}$ to $f$ starting in $w_0$ (to simplify notations, let us assume the sequence of sampled minibatches to be fixed).

% \begin{Def}[Rotational equivariance]
%     \label{def:rotation-invariance} $\mathcal{A}$ is said to be rotation equivariant if for any rotation matrix $\rot\in SO(d)$, $w_0\in\R^d$, and $t\in\mathbb{N}$, $\mathcal{A}(\rot w_0, f^{(\rot)}, t)=\rot\mathcal{A}(w_0, f, t)$.
% \end{Def}

We say that an optimizer is \textbf{rotation equivariant} if, after a rotation of the parameter space, its trajectories are equally rotated.

\begin{Def}[Rotational equivariance]
    \label{def:rotation-invariance} Consider an optimization algorithm $\mathcal{A}$ applied to the function $f$, generating iterates $\w_{t+1} = \mathcal{A}(\{\w_{i}\}_{i=0,\ldots, t}, f, t).$
    We say that the optimization algorithm is \textbf{rotation equivariant} if it satisfies, 
    
    $$\forall~ \rot \in SO(d), \quad \rot \w_{t+1} = \mathcal{A}(\{\rot \w_{i}\}_{i=0,\ldots, t}, f^{(\rot)}, t),$$
    
    where $f^{(\rot)}:\w\rightarrow f(\rot^\top \w)$ is the rotation of $f$.
\end{Def}

% A rotation equivariant optimizer behaves consistently under rotations of the parameter space, preserving the descent direction relative to the rotated iterate. 

\label{prop:sgd}
\begin{Prop}
    Stochastic Gradient Descent with momentum is rotation-equivariant.
\end{Prop}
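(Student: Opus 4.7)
The plan is to verify the equivariance definition directly from the SGD update rule by an induction on $t$, using the chain rule to relate $\nabla f^{(\rot)}$ to $\nabla f$. Recall the stochastic gradient step reads $w_{t+1} = w_t - \eta_t \nabla f_{B_t}(w_t)$, which depends on the previous iterate only through the gradient at that iterate; this locality is what makes the argument clean.

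First, I would establish the key algebraic identity relating the gradients of $f$ and of its rotated version $f^{(\rot)}(w) = f(\rot^\top w)$. By the chain rule, $\nabla f^{(\rot)}(w) = \rot \, \nabla f(\rot^\top w)$, and the same identity holds minibatch-wise for $f^{(\rot)}_B$ since each $f_B$ is rotated in the same way and the minibatch $B$ is drawn from the same distribution $\mathcal{D}$. This orthogonal conjugation of the gradient is the only ingredient specific to rotations; it uses $\rot \rot^\top = \textbf{I}$ implicitly only insofar as $\rot^\top$ plays the role of the inverse inside $f$.

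Next, I would run an induction on $t$. The base case is $t=0$: the initialization for the rotated trajectory is taken to be $\rot w_0$ by definition, so $\rot w_0 = \rot w_0$ trivially. For the inductive step, assume the rotated trajectory has produced the iterates $\{\rot w_i\}_{i \le t}$. Applying SGD to $f^{(\rot)}$ at the point $\rot w_t$ with the \emph{same} minibatch $B_t$ and stepsize $\eta_t$ gives
\begin{equation*}
    \mathcal{A}(\{\rot w_i\}_{i\le t}, f^{(\rot)}, t) = \rot w_t - \eta_t \nabla f^{(\rot)}_{B_t}(\rot w_t) = \rot w_t - \eta_t \rot \, \nabla f_{B_t}(\rot^\top \rot w_t) = \rot\bigl( w_t - \eta_t \nabla f_{B_t}(w_t)\bigr) = \rot w_{t+1},
\end{equation*}
which is exactly the equivariance condition in \Cref{def:rotation-invariance}.

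No step here is genuinely an obstacle: the whole content is the gradient transformation rule and a one-line induction. The only subtlety worth flagging explicitly is the stochastic one, namely that comparing the two trajectories requires coupling them through the same sequence of minibatches $B_t$, so that the random gradients on both sides are linked by the deterministic identity $\nabla f^{(\rot)}_B(\rot w) = \rot \nabla f_B(w)$ rather than only in expectation. With that coupling fixed, the argument is complete.
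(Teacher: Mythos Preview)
Your proof is correct and follows essentially the same approach as the paper's: both use the chain-rule identity $\nabla f^{(\rot)}_{B}(w) = \rot\nabla f_B(\rot^\top w)$ and verify the SGD update directly, with a fixed (coupled) sequence of minibatches. Your version is simply a bit more explicit about the induction and the coupling.
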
   

The rotation equivariance of SGD-M is a straightforward result as the gradient operator is rotation equivariant; we provide the proof in \Cref{appendix:algorithms_and_rotations} for clarity. In contrast, Adam is not rotation equivariant, due to its element-wise division (\Cref{fig:quadratic_sim}).

\subsection{Training Neural Networks in Rotated Parameter Spaces}
\label{subsec:train_in_rotated_space}
A crucial aspect of our study is the empirical evaluation of Adam's performance under parameter space rotations. Our approach (\Cref{fig:illustration_rotation_1}) maintains the weights $\w_t$ in the standard basis and performs Adam's optimization steps in the rotated space. This allows us to leverage existing neural network frameworks for forward and backward propagation while examining Adam's behaviour under rotation.

\begin{figure}[h]
    \centering
    \begin{minipage}{0.52\textwidth}
        \centering
        \includegraphics[width=0.8\linewidth]{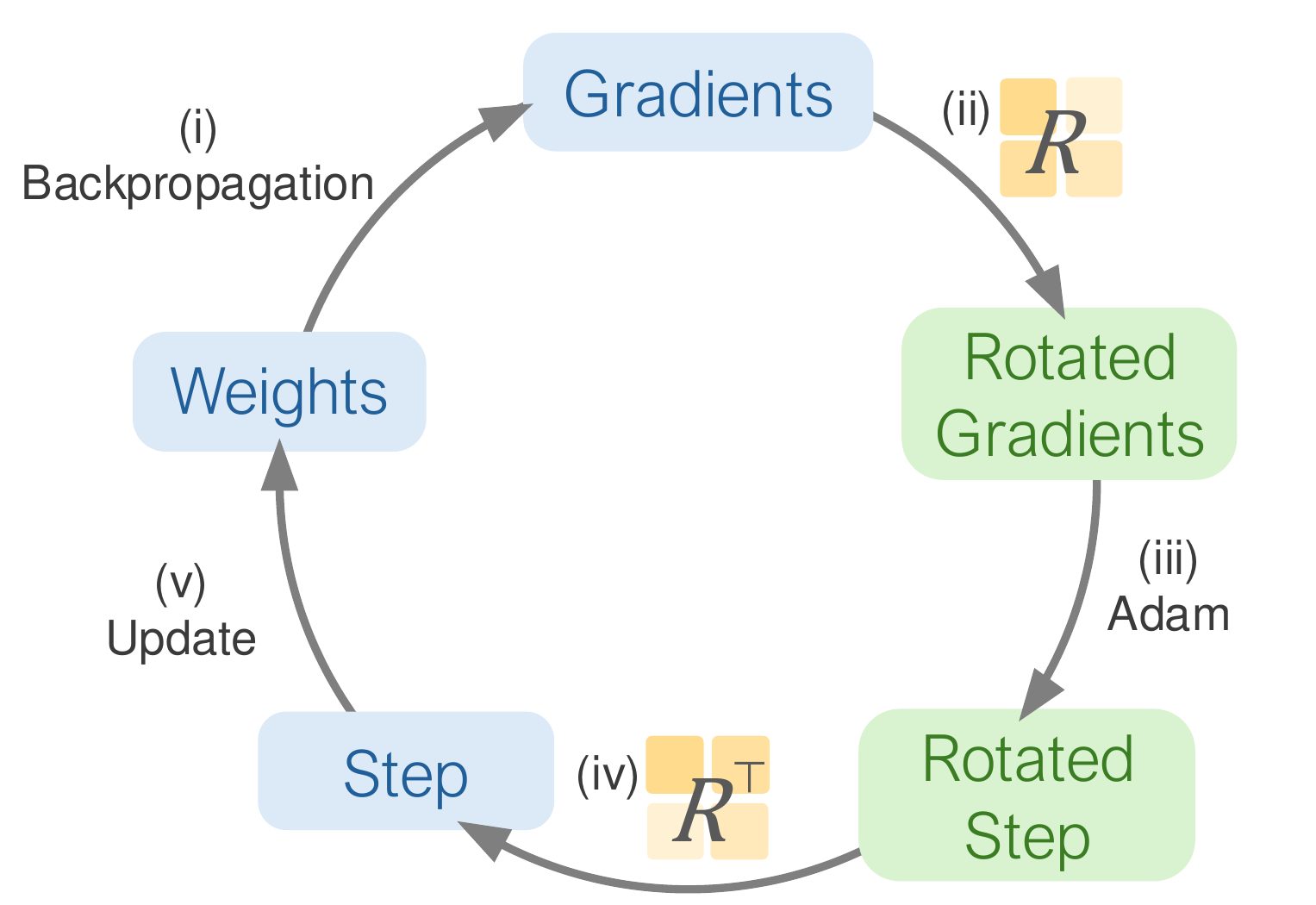}
\caption{Methodology to train neural networks under parameter space rotations. (i) Forward and backward passes in the standard space to retrieve the gradients. (ii) The  
    gradients are rotated using $\rot$. (iii) Adam receives the rotated gradients and produces an update $\Delta \w^{(\rot)}$ in the rotated space. (iv) $\Delta \w^{(\rot)}$ is rotated back to the original space using $\rot^\top$. (v) The parameters are updated with $\rot^\top \Delta \w^{(\rot)}$.}
    \label{fig:illustration_rotation_1}
    \end{minipage}%
    \begin{minipage}{0.03\textwidth}
    \hfill
    \end{minipage}
    \begin{minipage}{0.42\textwidth}
        \centering
\includegraphics[width=\linewidth]{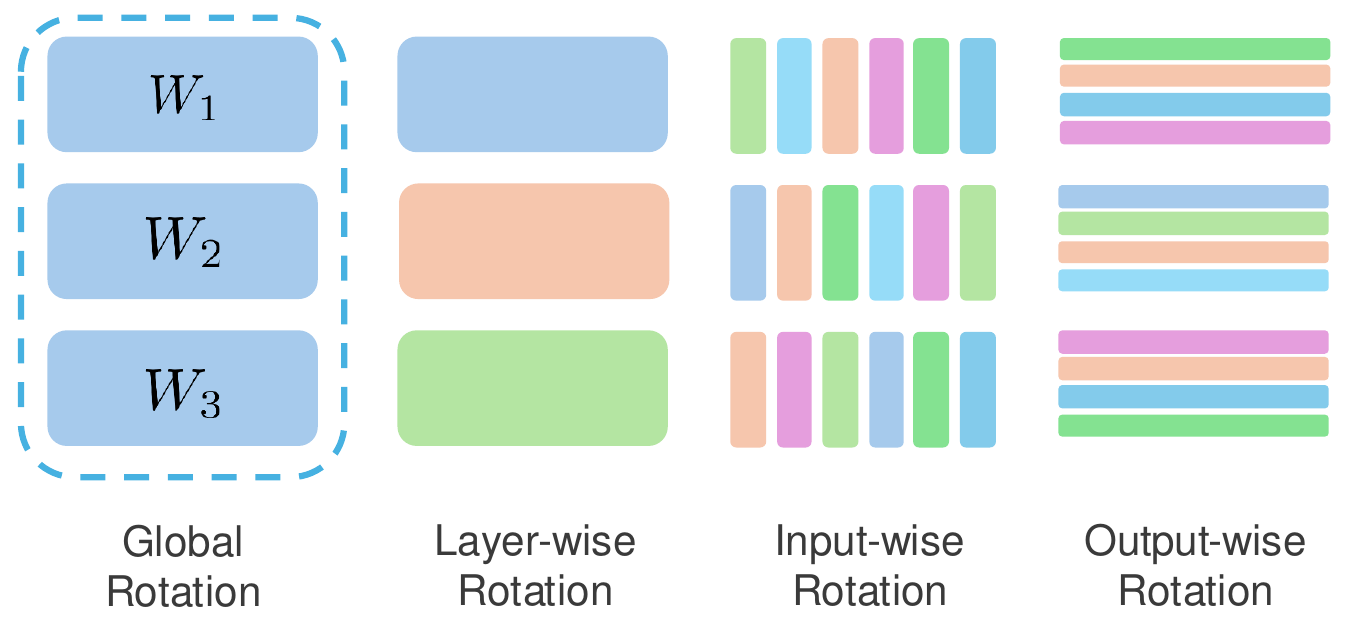}
    \caption{Illustration of different rotation scopes for a model with weights $\mathcal{W} \mdef \{\bW_1, \bW_2, \bW_3\}$. Global rotation rotates the entire parameter space at once, layer-wise only performs rotations within each layer subspace, and input-wise (resp. output-wise) rotates within the weights originating from a same input neuron (resp. leading to a same output neuron).}
    \label{fig:rot_types}
    \end{minipage}
    % \label{fig:overall}
\end{figure}

\textbf{Rotations in high dimension.} It is computationally intractable to operate with full $d\times d$ rotation matrices due to the size of modern neural networks. We employ a composite approach that combines block-diagonal rotations with strategic permutations to circumvent this limitation while preserving the essential characteristics of uniformly sampled rotations, effectively emulating the statistical properties of full-scale rotations. A detailed description and ablation studies are provided in \Cref{appendix:high_dim_rots}.

\textbf{Numerical considerations.} Neural network training is sensitive to numerical precision~\citep{li2018explorationnumericalprecisiondeep,NEURIPS2018_335d3d1c,sun2022surprising}, and it is crucial to ensure that rounding errors from rotations do not significantly confound the impact of the rotation. In particular, we apply rotations in single precision, and we refrain from using FlashAttention~\citep{NEURIPS2022_67d57c32}, which was found to increase numeric deviations~\citep{golden2024flashattentionstable}. We validate our methodology with ablations on SGD rotation equivalence, various rotation dimensions, and the use of FlashAttention in \Cref{appendix:high_dim_rots}.

\section{Influence of Rotation on Adam’s Efficiency}
\label{sec:rotations_influence_on_adam_efficiency}
This section examines the effect of random rotations from neuron-wise to global (\Cref{subsec:main_exp}), showing that broader rotations degrade performance, while smaller-scale rotations have little to no impact. We then demonstrate that specific structured rotations can enhance Adam's performance.
 % Our methodology for applying Adam in the rotated space is detailed in \Cref{subsec:train_in_rotated_space}.

\subsection{Random Rotations}
\label{subsec:main_exp}
%\subsection{Random rotations degrade Adam's performance}

% Damien: I removed this because it mixes introduction and conclusion this is very confusing.

% Applying high-dimensional rotations to the parameter space fundamentally alters the geometry that Adam exploits. These rotations, through the law of large numbers, tend to homogenize geometric properties across dimensions. This homogenization poses a challenge for Adam's adaptive mechanisms, which typically seeks to exploit heterogeneity in coordinate-wise behavior.

% This gives us an opportunity to better comprehend the behavior of Adam by controlling the scope of rotations. Intuitively, applying rotations to specific parameter subspaces selectively impedes Adam's adaptive capabilities within these regions while still allowing adaptivity across different subspaces. 

We first study the effect of four types of random rotations on Adam’s performance (\Cref{fig:rot_types}): \textbf{Global} (entire parameter space), \textbf{Layer-wise} (per-layer subspaces; in transformers, keys/queries/values are treated separately), \textbf{Output-wise} (rotate weights where connections terminate at the same neuron in the subsequent layer), and \textbf{Input-wise} (rotate weights originating from the same neuron).

% \textbf{Global Rotation:} Rotates to the entire parameter space. 

% \textbf{Layer-wise Rotation:} We partition the parameter space by layers (for transformers, the keys, queries, and values are considered separate layers), independently rotating each corresponding subspace. 

% \textbf{Output-wise:} Similar to layer-wise, except we partition the parameter space by output neuron. We apply rotations to the weight dimensions corresponding to connections terminating at the same neuron in the subsequent layer.

% \textbf{Input-wise:} Same as output-wise, except we rotate weight dimensions \textit{originating} from a same neuron.

\textbf{Experimental setting.} We conduct experiments across three distinct settings spanning both transformer and non-transformer architectures, and language and vision tasks. Technical details and hyperparameters are provided in \Cref{appendix:experimental_details}.

% \begin{enumerate} 
% \item \textbf{Language Model (GPT-2, Fig. \ref{fig:gpt_results}):} We use a 124M-parameters decoder-only transformer~\citep{radford2019language} model on the OpenWebText dataset \citep{Gokaslan2019OpenWeb}, a canonical setup for evaluating large language models.

% \item \textbf{Image Classification (Transformer, Fig. \ref{fig:vit_results}):} We use a small Vision Transformer (ViT/S, 22M parameters)~\citep{dosovitskiy2021an} for image classification on ImageNet-1K \citep{deng2009imagenet}, exemplifying modern, attention-based computer vision architectures.

% \item \textbf{Image Classification (ResNet, Fig. \ref{appendix:experimental_details}):} We employ a ResNet-50 \citep{7780459} for image classification on ImageNet-1K, a setting where SGD is more competitive than Adam~\citep{keskar2017improvinggeneralizationperformanceswitching, NIPS2017_81b3833e}.
% \end{enumerate}

\begin{itemize}
\item \textbf{Language modeling (GPT-2, Fig.~\ref{fig:gpt_results}):} 124M-parameter decoder-only Transformer~\citep{radford2019language} trained on OpenWebText~\citep{Gokaslan2019OpenWeb}.

\item \textbf{Image classification (ViT, Fig.~\ref{fig:vit_results}):} 22M-parameter Vision Transformer (ViT/S)~\citep{dosovitskiy2021an} evaluated on ImageNet-1K~\citep{deng2009imagenet}.

\item \textbf{Image classification (ResNet, Fig.~\ref{appendix:experimental_details}):} ResNet-50~\citep{7780459} on ImageNet-1K, where SGD often outperforms Adam~\citep{keskar2017improvinggeneralizationperformanceswitching, NIPS2017_81b3833e}.
\end{itemize}

\textbf{Results.} We make several key observations:

\begin{enumerate}
\item Adam's performance degrades under global rotations across all settings, confirming that the standard basis possesses advantageous properties. 

\item The performance further degrades with broader rotation scopes. Layer-wise rotations, which preserve some basis structure, consistently outperform global rotations, highlighting the importance of local coordinate alignment.

\item ResNets exhibit minimal performance degradation under rotations. This reduced sensitivity suggests Adam obtains limited benefit from the standard basis structure in ResNets, which possibly explains its historically smaller marginal gain in training these networks.

\item Output-wise rotations show no degradation across all settings, with GPT2 even slightly improving. This suggests that Adam's adaptivity within output neurons is minimal, supporting recent approaches to reduce redundancy in Adam's second moments \citep{zhang2024adamminiusefewerlearning}.

\end{enumerate}

Previous works \citep{DBLP:journals/corr/abs-2402-16788, zhang2024adamminiusefewerlearning} have highlighted the heterogeneity across different parameter block types in Transformer architectures. In \Cref{appendix:ablations}, we restrict rotations to specific parameter types and study their individual impact on Adam's rotation sensitivity.

% Given the performance preservation under output-wise rotations, we hypothesize that other specific rotations would preserve or even enhance Adam's convergence.

% \subsection{Existence of Performance-Improving Rotations}
\subsection{Investigating the Performance-Improving Rotations}
\label{subsec:improving_rots}
Inspired by GaLore~\citep{zhao2024galore}, which uses low-rank Singular Value Decomposition (SVD) to compress optimizer states, we extend this concept to full-rank SVD to rotate the parameter space. Our approach decomposes the gradient matrix $\mathbf{G}$ of each layer into $\mathbf{G}=\mathbf{U}\mathbf{S}\mathbf{V}^\top$. This decomposition yields a natural rotation of the parameter space through the transformation $\mathbf{G}\rightarrow \mathbf{U}^\top \mathbf{G}\mathbf{V}$, which corresponds to an output-wise rotation (via $\mathbf{U}^\top$) and an input-wise rotation (via $\mathbf{V}$). We train a GPT2 model under the same conditions as in \Cref{subsec:main_exp}, but in this SVD-rotated space. We update the SVD decompositions every 250 steps.

\begin{figure}[t]
    \centering
    \begin{minipage}{0.48\textwidth}
        \centering
    \includegraphics[width=0.9\linewidth]{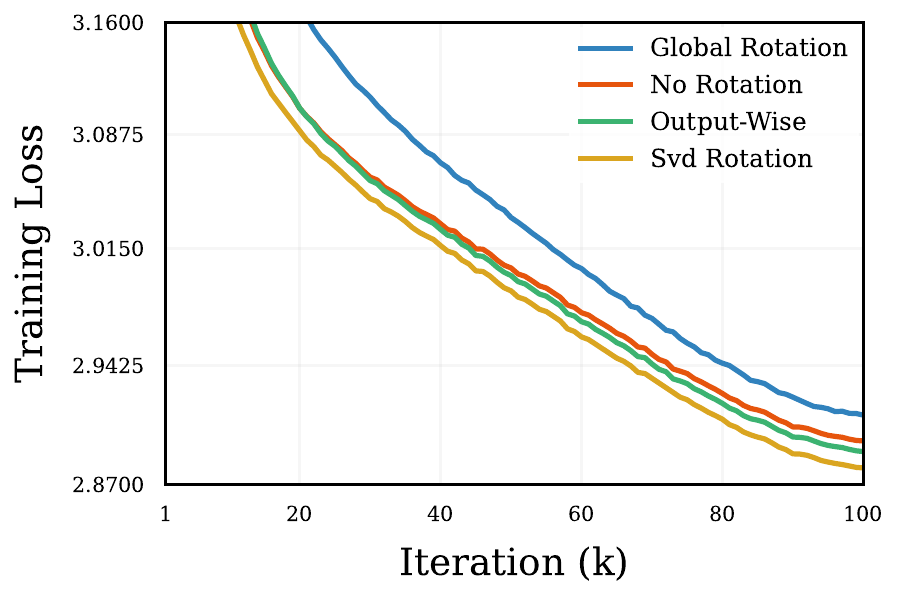}
    \caption{Performance of GPT2 trained with Adam in SVD-rotated space, without rotations, with random output-wise rotation and with random global rotation. \textbf{The rotations computed with SVD lead to sizeable improvement.}}
    \label{fig:svd-frankenrot}
    \end{minipage}%
    \begin{minipage}{0.03\textwidth}
    \hfill
    \end{minipage}
    \begin{minipage}{0.48\textwidth}
        \includegraphics[width=0.9\linewidth]{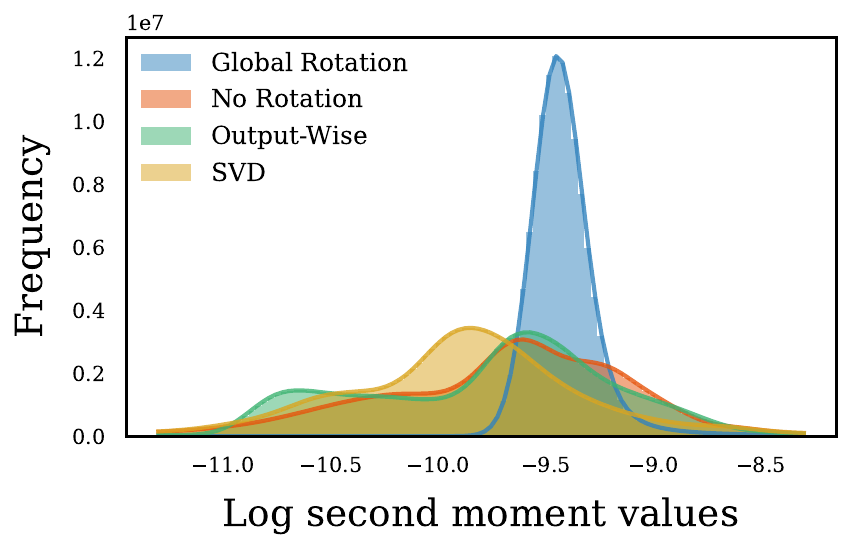}
    \caption{Distribution of second-moment values for the final checkpoint of a GPT2 model trained with Adam in various rotated spaces. \textbf{Second moments are more concentrated under random rotations, indicating reduced adaptivity.}}
    \label{fig:second_moment_hist}
    \end{minipage}
    \vspace{-1em}
\end{figure}

\Cref{fig:svd-frankenrot} shows that Adam’s performance improves under SVD-based rotations, with a low computational overhead. These results highlight the potential of rotation-based approaches and motivate a more principled understanding of how basis orientation affects optimizer behaviour. Instead of relying on intuition or heuristics, we advocate for theory-driven design grounded in rotation sensitivity.

To further understand this behaviour, \Cref{fig:second_moment_hist} shows second moment distributions after training under various rotations. Global random rotations yield more concentrated second moments, implying less variation in effective learning rates and reduced adaptivity, which explains Adam’s degraded performance. However, the benefits of SVD rotations are not apparent from second moments alone, suggesting a more subtle relationship between the parameter space and Adam’s adaptive behaviour.

\section{Examining Rotation Dependent Assumptions}
\label{sec:adequacy_existing_assumptions}
We have established that Adam's performance depends heavily on the choice of basis. Rotation-invariant analyses yield identical guarantees for all bases, failing to capture performance gaps between rotations. A rotation-dependent assumption is necessary to explain Adam’s practical advantage. In this section, we examine rotation-dependent assumption adequacy jointly with our GPT-2 experiments. 

\subsection{Adequacy of existing assumptions in theoretical frameworks}
\label{subsec:existing_assumptions}
While rotation-invariant assumptions dominate optimization literature, some frameworks incorporate rotation-dependent properties. This subsection examines three existing assumptions and whether they adequately capture Adam's rotation dependency. In particular, we focuses on two aspects: (i) \textbf{Practical Feasibility.} The assumption must be realistic in practical settings. (ii) \textbf{Alignment with Adam's Performance.} An adequate property should have favourable constants under rotations that improve performance and break down (or have unfavourable constants) under rotations that hinder performance. An ideal theoretical convergence analysis should be based on realistic assumptions that relate the problem's characteristics to the optimizer's performance, where faster theoretical rates correspond to better practical performance. 

\begin{figure}[t]
\begin{minipage}{0.48\textwidth}
    \centering
    \includegraphics[width=0.9\linewidth]{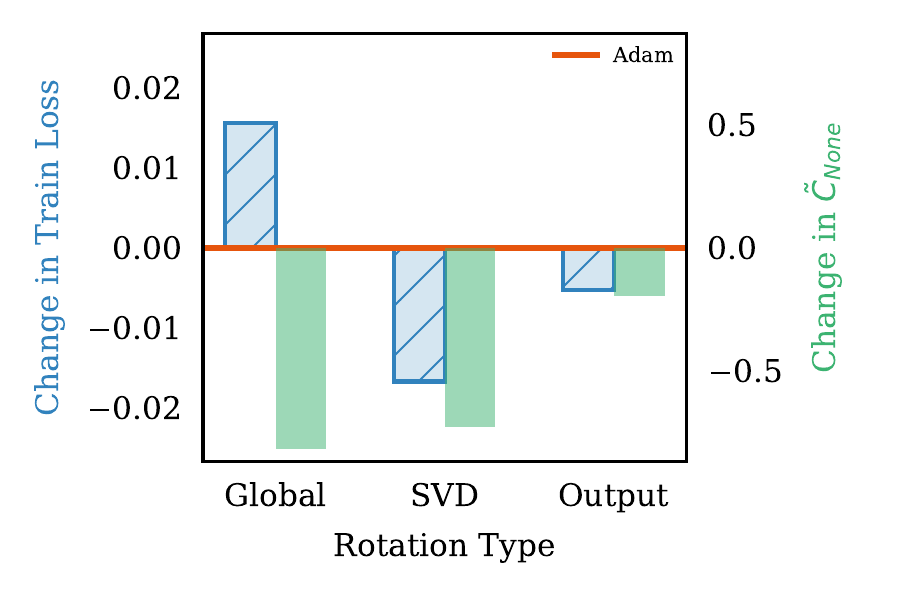}
    \caption{Empirical $L_\infty$ gradient bound $\Tilde{C}$ over $1000$ stochastic gradients at the last checkpoint for Global, SVD, and output-wise rotations, presented as differences from the non-rotated baseline. \textbf{The trend disagrees with Adam's performance, especially under global rotation.}}
    \label{fig:gradient_bound}
\end{minipage}
\begin{minipage}{0.03\textwidth}
\hfill
\end{minipage}
\begin{minipage}{0.48\textwidth}
\includegraphics[width=0.9\linewidth]{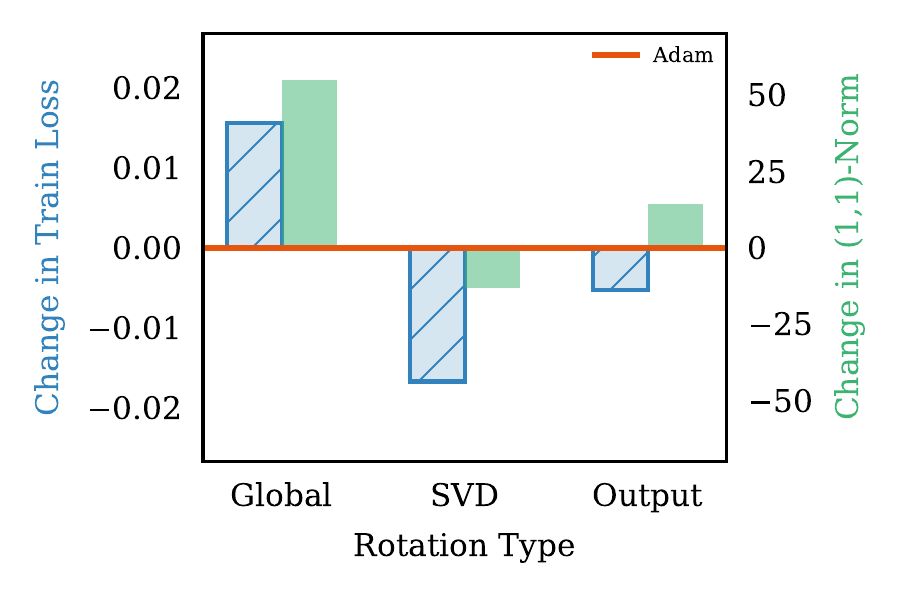}
    \caption{Estimated (1,1)-norm of the Hessian and final accuracy for Global, SVD, and output-wise rotations, presented as differences from the non-rotated baseline. \textbf{The $(1,1)$ norm correlates with Adam's performance on global and SVD rotations, but not on output-wise.}}
    \label{fig:oneonenorm}
\end{minipage}

% We focus on three prominent assumptions: $L_\infty$ bounded gradients (Sec.~\ref{sec:linfgrad}), Block-diagonality of the Hessian (Sec. \ref{sec:blockdiaghess}), and $L_\infty$-smoothness (Sec. \ref{sec:linfsmooth}). We determine whether those assumptions meet criteria \textbf{1.} and \textbf{2.} and, consequently, their utility in explaining Adam's behavior under various coordinate transformations. All results presented in this section are derived from experiments training GPT2 on the OpenWebText dataset, as detailed in \Cref{subsec:main_exp}.

\paragraph{\texorpdfstring{$L_{\infty}$}{L-infinity} bounded gradients.} \label{sec:linfgrad}

\citet{j.2018on, DBLP:journals/corr/KingmaB14} assume a bound on the $L_\infty$ norm of stochastic gradients,
\begin{equation*}
    \forall ~\w \in \mathbb{R}^d, \; \|\nabla f_B(\w)\|_\infty \leq C \quad \text{almost surely.}
\end{equation*}
The constant $C$ depends on the basis, as the $L_\infty$ norm is not preserved under rotations. To evaluate this assumption, we compute the empirical bound on the rotated gradients,
$$\Tilde{C_{\rot}}(\rot):=\max_{B_{i}}\|\nabla f_{B_i}^{(\rot)}(\w_{\rot})\|_\infty,$$
where $\w_{\rot}$ denotes the last checkpoint obtained by running Adam under rotation $\rot$.
The maximum is over $1000$ stochastic minibatches $B_i$, across different rotations $\rot$ (see \Cref{sec:rotations_influence_on_adam_efficiency}). \Cref{fig:gradient_bound} reveals that $\Tilde{C}$ significantly decreases under random global rotations, predicting better performance, but we observe degradation in Adam's convergence. This discrepancy shows that the $L_\infty$ gradient bound fails to capture the beneficial properties of the basis for Adam.

\end{figure}

% \begin{table}[H]
%     \centering

%Checkpoint X and Rotation X : 0.883 & 0.0087 & 0.0919 & 0.5335

% \begin{tabular}{c|c c c c}
%     \hline
%     $\rot$ & None & Global & Output-wise & SVD \\
%     \hline
%     $\Tilde{C}_{None}$  & 0.883 & 0.015 & 0.105 & 0.635\\
%     $\Tilde{C}_{Global}$ & 0.220 & 0.009 & 0.220 & 0.220\\
%     \hline
% \end{tabular}  \caption{\textcolor{pink}{TODO: a figure here} Empirical $L_\infty$ gradient bound $\Tilde{C}$ over $1000$ stochastic gradients at the last checkpoint. \textcolor{pink}{The trend disagrees with Adam's performance in practice.}}
%     \label{tab:gradient_bound}
% \end{table}

\paragraph{\texorpdfstring{$L_{\infty}$}{L-infinity}-smoothness and \texorpdfstring{$(1,1)$}{}-norm.} \label{sec:linfsmooth}

$L_\infty$-smoothness was recently shown to guarantee the convergence of Adam, and presented as a potential key property of the basis \citep{xie2025adam, balles2020geometrysigngradientdescent}. We first remember its definition.

\begin{Def}
A function $f$ is $C$-smooth wrt. $\|\cdot\|_\infty$ if $\|\nabla f(\mathbf{x})-\nabla f(\mathbf{y})\|_1\leq C\|\mathbf{x}-\mathbf{y}\|_\infty \; \forall\;\mathbf{x},\mathbf{y}\in\R^d$.
\end{Def}

Given the challenges in directly estimating the $L_\infty$-smoothness constant,  \citet{xie2025adam} proposed using the (1,1)-norm of the Hessian as a surrogate, defined as:
\[ 
 \|\mathbf{H}\|_{(1,1)}:=\sum_{m=1}^M\sum_{n=1}^N|\mathbf{H}_{mn}|,\]
where $\mathbf{H}_{mn}$ represents the element at the $m$-th row and $n$-th column of the Hessian matrix. Notably, they observed a degradation in their estimate of $\|\mathbf{H}\|_{(1,1)}$ under global random rotations. However, it remains unclear whether this degradation is a universal phenomenon for all rotations of the parameter space or if it specifically correlates with Adam's performance. To investigate this, we estimate the $(1,1)$-norm by averaging the $L_1$ norm of Hessian rows, sampled using the methodology described in \Cref{sec:blockdiaghess}. \Cref{fig:oneonenorm} illustrates the change in $\|\mathbf{H}\|_{(1,1)}$ under global, SVD, and output-wise rotations.

Under global rotations, we confirm the $(1,1)$-norm degradation reported in~\citep{xie2025adam}, while SVD rotations improve it in line with Adam’s performance gains, suggesting a link between this geometric property and optimizer efficiency. Output-wise rotations yield slight performance gains but reduced $(1,1)$-norm, indicating that the metric does not capture all relevant factors. Overall, the $(1,1)$-norm shows promise as a rotation-sensitive indicator, especially under global and SVD rotations, but its limitations motivate the development of refined or complementary measures.

\paragraph{Block-diagonality of the Hessian.} \label{sec:blockdiaghess}

A common hypothesis for understanding Adam's behaviour is that the Hessian is well-approximated by a block-diagonal matrix~\citep{DBLP:journals/corr/abs-2402-16788}. Then, random rotations likely disrupt block-diagonality and hinder convergence, while rotations within diagonal blocks preserve the structure, explaining the stable performance under output-wise rotations.

To examine this assumption's validity, we sample rows of the Hessian of $f^{(\rot)}$ at a checkpoint $\w_{\rot}$:
\begin{align*}
\textstyle \mathbf{r_i}(\rot) &= \textstyle \frac{1}{k}\sum_{j=1}^k\nabla^2 f_{B_j}^{(\rot)}(\w_{\rot})_{[i,:]} \textstyle  =\frac{1}{k}\sum_{j=1}^{k} \mathbf{e_i}^\top \rot\nabla^2 f_{B_j}(\w_{\rot}) \rot^\top,
\end{align*}
where $\mathbf{e_i}$ denotes the $i^{\textit{th}}$ canonical basis vector. The vector $\mathbf{r_i}$ represents the average of the $i$-th row of the stochastic Hessian over $k$ minibatches. As $k$ increases, $\mathbf{r_i}$ converges to the true Hessian row. We set  $k=5000$ in the experiments, and use efficient Hessian-vector products ~\citep{dagréou2024howtocompute}.

We partition the indices of the Hessian row $\mathbf{r_i}$ corresponding to weight $w_i$ into three disjoint subsets:
\[
    \mathbf{r_i} = \mathbf{r_i}_{[I_{N}]} + \mathbf{r_i}_{[I_{L}]} + \mathbf{r_i}_{[I_{\cancel{L}}]}, \quad \text{where}
\]
\begin{itemize}[noitemsep,topsep=0pt]

    \item $I_{N}$ are indices of weights leading to the same output neuron as $\w_i$,
    \item $I_{L}$ are indices of other weights from the same layer,
    \item $I_{\cancel{L}}$ are indices of weights of other layers,
\end{itemize}
and $\mathbf{r_i}_{[I_{N}]} = r_i$ (resp. $I_{L}$ and $I_{\cancel{L}}$) in the indices in $I_{N}$ (resp. $I_{L}$ and $I_{\cancel{L}}$) and zero elsewhere.

\Cref{fig:hessian_row} presents the distribution of absolute values for each subset. Our findings show that entries in $I_{N}$ and, to a lesser extent, $I_{L}$, are significantly larger than those from $I_{\cancel{L}}$, supporting an approximate block-diagonal Hessian structure.

\begin{figure*}[t]
    \centering
        \includegraphics[width= 0.95\linewidth]{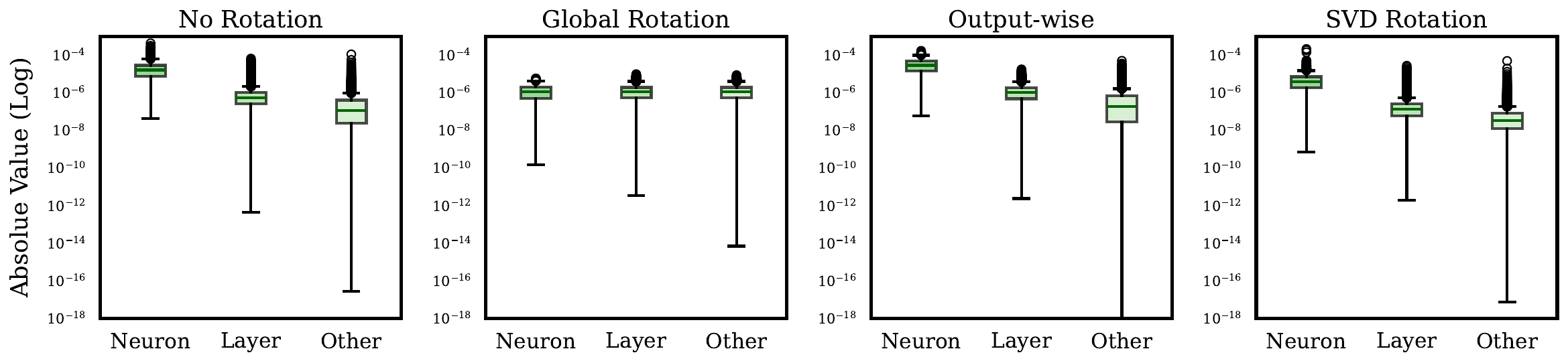}
        \caption{Distribution of Hessian values within output neuron, layer and non-layer in the second Transformer block attention projection layer. \textbf{In no rotation, values within the neuron are of magnitude higher than others, presuming a possible block-diagonal structure.} The structure is preserved in SVD and output-wise rotations, and lost in global rotation.}
    \label{fig:hessian_row}
    \vspace{-1em}
\end{figure*}

Given this approximately block-diagonal Hessian structure, previous work \citep{DBLP:journals/corr/abs-2402-16788} proposes a strict block-diagonal approximation, assuming that the off-diagonal elements are negligible. We further investigate whether this simplification can accurately reflect the local geometry by assessing the practical implications of the block diagonal Hessian structure. We evaluate how each block contributes to gradient variations for a given small direction $\delta w$ via:
\begin{align*}
    & [\nabla f(\w+\delta \w) - \nabla f(\w)]_i \approx \mathbf{e}_i^\top\nabla^2 f(\w)\delta \w = \mathbf{r_i}_{[I_{N}]}\cdot\delta \w+\mathbf{r_i}_{[I_{L}]}\cdot \delta \w+\mathbf{r_i}_{[I_{\cancel{L}}]}\cdot \delta \w.
\end{align*}

\begin{wraptable}{l}{0.6\textwidth}
\vspace{-0.5em}
    \centering
    \begin{tabular}{l|c c} 
    \hline
    $\delta \w$ direction & Random & Update \\
    \hline  & \\[-2ex]
    $\mathbf{r_i}_{[I_{N}]} \cdot \delta \w$ (Neuron) & $2.86\times10^{-5}$%2.86e-05 
    & $-4.60\times10^{-10}$\\%-4.60e-10 \\ 
    $\mathbf{r_i}_{[I_{L}]} \cdot \delta \w$ (Layer)  & $-8.71\times10^{-6}$%-8.71e-06
    & $1.30\times10^{-8}$\\%1.30e-08  \\
    $\mathbf{r_i}_{[I_{\cancel{L}}]} \cdot \delta \w$ (Other) & $\mathbf{1.48\times10^{-4}}$%\textbf{1.48e-04}
    & $\mathbf{2.02\times10^{-7}}$\\%\textbf{2.02e-07} \\
    \hline
    \end{tabular}
    \caption{Contribution $\mathbf{r_i}_{[I]}\cdot \delta \w_{[I]}$ of Hessian values in block $I$ to the variation of the $i$-th gradient component in direction $\delta \w$. Averaged over multiple $\delta \w$, \textbf{off-diagonal blocks contribute significantly} in both random and update directions.}
    \label{tab:change_rand_dir}
    \vspace{-1em}
\end{wraptable}

\Cref{tab:change_rand_dir} quantifies these contributions in a random direction or update direction. Surprisingly, our results reveal that Hessian values outside the block are the primary drivers of gradient evolution, despite their smaller magnitude. This finding challenges the strict block-diagonal Hessian assumption in theoretical analyses. While the diagonal blocks contain larger values, their limited size compared to the full parameter space means that off-diagonal elements collectively play a crucial role in shaping the loss landscape's geometry. Neglecting off-diagonal elements is an oversimplification, making the approximation inadequate and potentially misleading downstream results.

\subsection{Orthogonality of layer updates up to scalar factor}
\label{subsec:orthogonality}

In \Cref{subsec:improving_rots} we find that SVD-based rotations improve Adam's performance. We also discuss in \Cref{appendix:svd_and_muon} connections with the recently proposed Muon optimizer~\citep{jordan2024muon}, which achieves strong performance by performing updates in the orthogonalized first moment direction. 

% These findings motivate our investigation into the semi-orthogonality \citep{abadir2005matrix} of the layer update, up to rescaling, as a criterion for optimal basis alignment, which we refer to as \textit{orthogonality} of Adam’s updates for simplicity. 

\textbf{Scaled (semi-)orthogonality.} Since orthogonality is defined only for square matrices, we adopt a relaxed notion for rectangular matrices. Specifically, we say that a rectangular matrix is orthogonal if all its singular values are either 1 or 0 (this notion is commonly referred to as \textit{semi-orthogonality }\citep{abadir2005matrix}). Moreover, we say that a matrix is a scaled orthogonal matrix if its eigenvalues are either $\alpha$ or $0$, where $\alpha$ is the scaling parameter. To measure the scaled orthogonality of a matrix $\mathbf{A}$, we will use the coefficient of variation of its singular values $s_i$,
\[
    \textstyle \text{CV($s_i$)} = \frac{\sigma_s}{\mu_s} = \min_\alpha \frac{1}{\mu_s} \sqrt{\frac{1}{n} \sum_i (s_i - \alpha )^2},
\]
where $\mu_s$ and $\sigma_s$ are the mean and standard deviation of the $s_i$'s, respectively. We discuss in \cref{appendix:update_orthogonality} other measures of scaled orthogonality of a matrix

\textbf{Scaled orthogonality of the layer update.} We resume training from a checkpoint for each rotation type for the next 500 steps to measure the orthogonality of the update $\mathbf{W}^{(l)}_{t+1} - \mathbf{W}^{(l)}_t$, where $\mathbf{W}^{(l)}_t$ represents the weights of layer $l$ at time $t$. For simplicity, we omit the layer index $(l)$. To separate the effect of step size and weight decay, we measure the update as $\mathbf{A} = \mathbf{R}^\top\mathbf{M}_t^{(\rot)} / (\sqrt{\mathbf{V}_t^{(\rot)} }+ \epsilon)$. 

% We use the coefficient of variation (CV), defined as the ratio of the standard deviation to the mean $\frac{\sigma}{\mu}$ of the singular values. A lower CV suggests more orthogonal updates. Additionally, we consider other measures of orthogonality for which we find our results be consistent (see \Cref{appendix:update_orthogonality}).

%% Old text, remove by Damien
% \textbf{Measuring orthogonality.} We resume training from a checkpoint for each rotation type for the next 500 steps to measure the orthogonality of the update $W^{(l)}_{t+1} - W^{(l)}_t$, where $W^{(l)}_t$ represents the weights of layer $l$ at time $t$. For simplicity, we omit the layer index $(l)$. To separate the effect of step size and weight decay, we measure the update as $A = R^T\frac{M_t^{(R)}}{\sqrt{V_t^{(R)} }+ \epsilon}$. Since $A$ is rectangular in some layers, we say it is \textit{orthogonal} if all its non-zero singular values are equal to 1. We seek to measure how orthogonal the matrix $A$ is \textit{up to rescaling}. We use the coefficient of variation (CV), defined as the ratio of the standard deviation to the mean $\frac{\sigma}{\mu}$ of the singular values. A lower CV suggests more orthogonal updates. Additionally, we consider other measures of orthogonality for which we find our results be consistent (see \Cref{appendix:update_orthogonality}).
\begin{figure}[t]
    \centering
    \includegraphics[width=\textwidth]{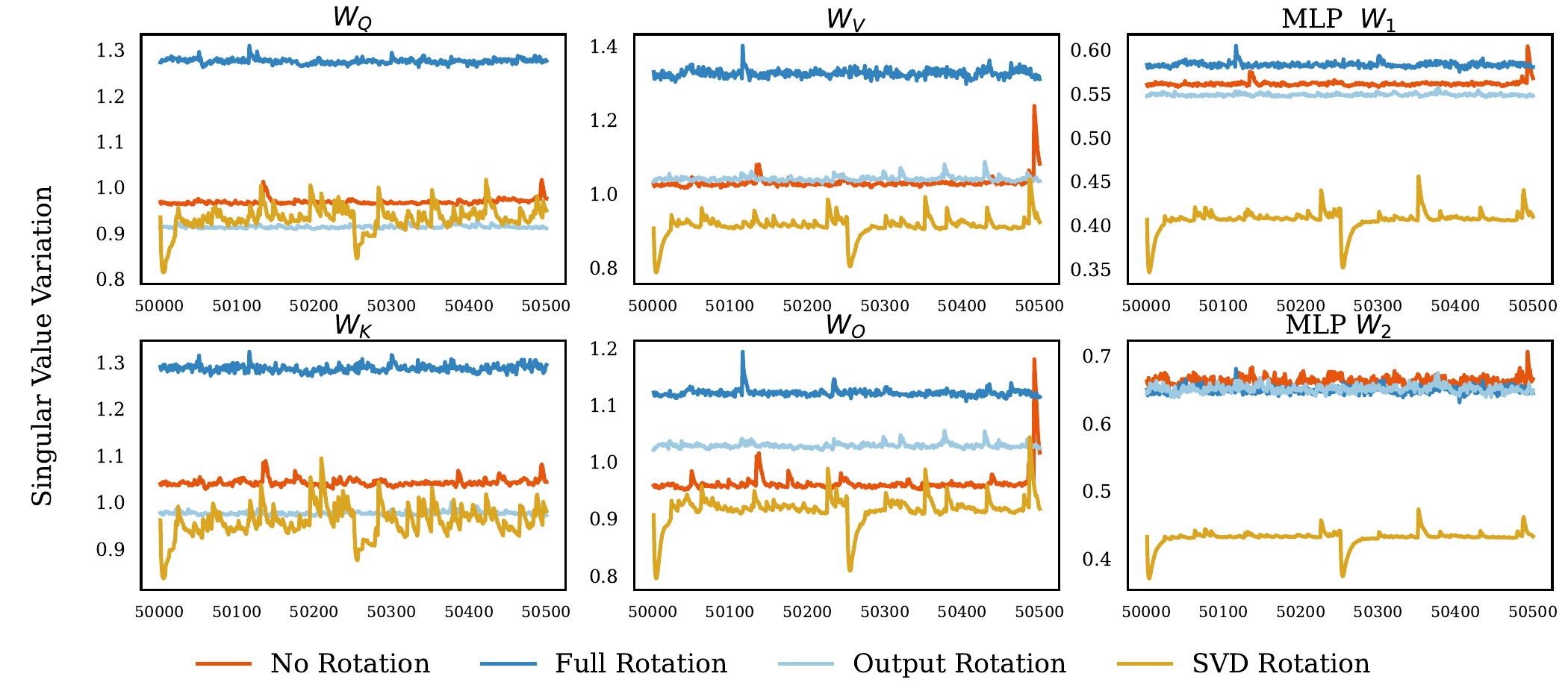}
    \caption{CV of singular values of layer updates over 500 steps, averaged over depth. 
    SVD rotation consistently yields lower CV and more orthogonal layer updates, whereas full rotation shows the opposite. 
    Downward spikes in the SVD rotation occur when the rotations are recomputed.}
    \label{fig:orthogonality_avg_layer}
    \vspace{-1em}
\end{figure}

\begin{wrapfigure}{r}{0.45\textwidth}
    \centering
    \includegraphics[width=0.42\textwidth]{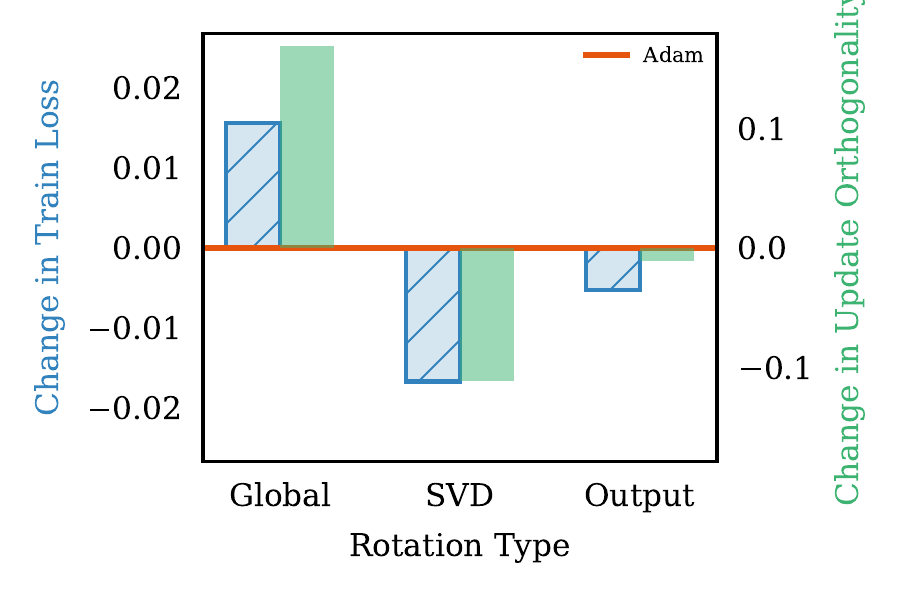}
    \caption{Each panel shows the difference in loss and average CV of singular values relative to the non-rotated baseline. 
    \textbf{The orthogonality of layer updates closely aligns with performance under rotation.}}
    \label{fig:orthogonality}
    \vspace{-1em}
\end{wrapfigure}

$\textbf{Observations.}$ Overall, we find that SVD consistently yields lower CV and more orthogonal updates under the coefficient of variation measure, and output-wise rotation behaves similarly to the no-rotation baseline. Full rotation consistently results in the least orthogonal updates. This ranking aligns clearly with the observed performance of Adam under rotations (\Cref{fig:orthogonality}), making it a promising quantity to understand Adam's behaviour.

% \begin{wrapfigure}{r}{0.45\textwidth}
%     \centering
%     \includegraphics[width=0.42\textwidth]{figures/ortho.pdf}
%     \caption{Each panel shows the difference in loss and average CV of singular values relative to the non-rotated baseline. 
%     \textbf{The orthogonality of layer updates closely aligns with performance under rotation.}}
%     \label{fig:orthogonality}
%     \vspace{-1em}
% \end{wrapfigure}

In Figure~\ref{fig:orthogonality_avg_layer}, we show the CV across time for each layer type, averaged over the depth. See \Cref{appendix:update_orthogonality} for full results per layer and over time. Notably, CV drops right after the SVD rotation is recomputed every 250 steps. This offers insight into the frequency tradeoff of SVD rotations, where more frequent updates improve performance but introduce computational overhead. Overall, our analysis suggests that update orthogonality strongly correlates with optimizer performance, supporting the approaches of Muon \citep{jordan2024muon} and SOAP \citep{vyas2025soap} and opening new avenues for rotation-aware theoretical frameworks.

While a robust theoretical analysis of Muon’s update rule remains open, attempts have been made to characterize in what context an orthogonal update is optimal. \citet{bernstein2025deriving} argues  that for linear  layers, the orthogonalized update controls the  scale at which the weight matrices  can  scale features. They  argue this encourages stable optimization and can limit the need for normalization layers. As we discuss in  \Cref{appendix:svd_and_muon}, after simplification and SVD rotation the Adam update simplifies to dividing  the  singular values by their magnitude. When rotated back, this results in the same update as Muon through a different mechanism, which is consistent with Muon's often superior performance.

\section{Related Work}
\label{sec:related_work}
\paragraph{Optimization under rotations.}  Shampoo \citep{47079} first demonstrated the benefits of optimizing under rotations by running AdaGrad under regularly updated rotations, identifying the singular values of the gradient matrices. More recently, SOAP \citep{vyas2025soap} improved Shampoo by applying it to Adam and appropriately updating the moments at each change of basis. Muon \citep{jordan2024muon} concurrently explores a similar approach, but removes the need to explicitly store rotation matrices and instead orthogonalizes gradient matrices with a fast matrix iteration. While these methods show promising empirical improvements via heuristics, our work highlights the importance of developing better theoretical tools to understand their success.

On the theoretical side, we consider \citep{xie2025adam} to be the closest related study, showing that Adam converges more slowly with a randomly rotated loss landscape. They provide convergence analysis based on $L_\infty$ geometry, demonstrating that this yields a better empirical smoothness constant for GPT-2 models. While their work offers valuable theoretical insights, our study takes a more experimental stance. We aim to paint a comprehensive picture of Adam's behaviour under a spectrum of rotations, from random to structured transformations, and evaluate how existing rotation-invariant assumptions correlate with Adam's performance. Notably, \citet{balles2020geometrysigngradientdescent} also provides relevant insights through the lens of sign gradient descent.

\paragraph{Understanding Adam.} Our work casts light on the critical interactions between Adam and the coordinate system, contributing to a growing body of research on Adam's behaviour and convergence. Recent works have attributed Adam's success to the heterogeneous block-diagonal structure of its Hessian~\citep{DBLP:journals/corr/abs-2402-16788}, though we find this assumption to be unrealistic. Others have improved convergence guarantees: \citet{defossez2022a} and \citet{guo2022novelconvergenceanalysisalgorithms} offered simplified and novel derivations, \citet{NEURIPS2022_b6260ae5} argued that vanilla Adam converges without modification, \citet{zhou2024on} provided a general convergence analysis for adaptive methods in non-convex settings, and \citet{NEURIPS2023_a3cc5012} proposed a convergence proof for Adam without relying on globally bounded gradients. \citet{li2023convex} developed a convergence analysis based on generalized smoothness conditions, and \citet{hubler2023parameteragnostic} proposed parameter-agnostic convergence results under these relaxed conditions. Finally, lower bounds for non-convex optimization  were established by \citet{arjevani2022lowerboundsnonconvexstochastic}, with \citet{wang2023closing} addressing the gap between upper and lower bounds for Adam’s iteration complexity.

\paragraph{Adam's advantages over SGD.}
Prior works have attempted to justify Adam's advantages over SGD. \citet{NEURIPS2020_b05b57f6, zhou2024on} suggest SGD suffers more from heavy-tailed noise, with Adam converging faster when gradients are sparse. However, \citet{kunstner2023noise} found that noise reduction through larger batch sizes benefits Adam but not SGD. Additionally, \citet{kunstner2024heavytailedclassimbalanceadam} ties Adam's advantage in language models to ill-conditioning caused by heavy-tailed class imbalance, and \citet{pan2022toward} to directional sharpness.

\section{Discussion}
\label{sec:conclusion}
\paragraph{Limitations.}
(i) Our purpose of the SVD rotation is not to introduce a new practical optimizer, but to demonstrate the existence of a more beneficial rotation and provide insights into the relationship between Adam and the standard basis. (ii) While our results reveal an alignment between the semi-orthogonality of layer updates and Adam’s empirical performance, we do not offer theoretical guarantees, and further work is needed to formalize formalize this quantity and incorporate it into theoretical analysis. (iii) We relate our findings to the Muon optimizer and discuss the theoretical motivation for scaled orthogonality; however, there is still a lacking of rigours understanding of why Adam under SVD rotations produces more orthogonal updates than under the canonical basis, and why this quantity leads to improved performance. (iv) Finally, although the observed gap appears smaller in recent experiments, more evidence is required to confirm whether the superior performance of SGD on ResNets primarily comes from reduced sensitivity to rotations.

\paragraph{Conclusion.}
In this work, we have conducted a comprehensive investigation into Adam's sensitivity to rotations of the parameter space, revealing key insights into its optimization dynamics. We demonstrated that some rotations possess advantageous properties, opening new avenues for algorithmic contributions to adaptive algorithms. Our study demonstrates that Adam's performance is intricately tied to the choice of basis, a relationship that existing theoretical frameworks struggle to capture adequately. This investigation highlights the limitations of current rotation-invariant assumptions in explaining Adam's behaviour, and identifies update orthogonality as a promising theoretical tool. As the field evolves, we hope these findings will spark new avenues of research, potentially leading to more robust optimization algorithms and deepening our understanding of the fundamental principles underlying successful deep learning optimization.\

\begin{ack}
This research was partially supported by the Canada CIFAR AI Chair program (Mila) and Samsung Electronics Co., Ltd. Simon Lacoste-Julien is a CIFAR Associate Fellow in the Learning in Machines \& Brains program and acknowledges support by NSERC Discovery grant (RGPIN-2025-05123). We also acknowledge that this research was partly enabled by computing resources, software, and technical assistance provided by Mila and the Digital Research Alliance of Canada. Ioannis Mitliagkas acknowledges support by an NSERC Discovery grant (RGPIN-2019-06512). We thank Adam Ibrahim for his helpful comments and insights, and Ayoub Echchahed, Frederik Kunstner, Mark Schmidt, Pedram Khorsandi, Ryan d’Orazio, and Vitória Barin Pacela for their valuable feedback.
\end{ack}

\bibliography{neurips_2025}

@inproceedings{luccioni2019quantifying,
  title={Quantifying the carbon emissions of machine learning},
  author={Luccioni, Sasha and Schmidt, Victor and Lacoste, Alexandre and Dandres, Thomas},
  booktitle={NeurIPS Workshop on Tackling Climate Change with Machine Learning},
  year={2019}
}

@article{JMLR:v24:23-0069,
  author  = {Alexandra Sasha Luccioni and Sylvain Viguier and Anne-Laure Ligozat},
  title   = {Estimating the Carbon Footprint of BLOOM, a 176B Parameter Language Model},
  journal = {JMLR},
  year    = {2023},
  volume  = {24},
  number  = {253},
  pages   = {1--15},
}

@article{dong2024large,
  title={Large Language Models (LLMs): Deployment, Tokenomics and Sustainability},
  author={Dong, Haiwei and Xie, Shuang},
  journal={arXiv:2405.17147},
  year={2024}
}

@article{sharir2020cost,
  title={The cost of training nlp models: A concise overview},
  author={Sharir, Or and Peleg, Barak and Shoham, Yoav},
  journal={arXiv:2004.08900},
  year={2020}
}

@inproceedings{NEURIPS2020_1457c0d6,
 author = {Brown, Tom and Mann, Benjamin and Ryder, Nick and Subbiah, Melanie and Kaplan, Jared D and Dhariwal, Prafulla and Neelakantan, Arvind and Shyam, Pranav and Sastry, Girish and Askell, Amanda and Agarwal, Sandhini and Herbert-Voss, Ariel and Krueger, Gretchen and Henighan, Tom and Child, Rewon and Ramesh, Aditya and Ziegler, Daniel and Wu, Jeffrey and Winter, Clemens and Hesse, Chris and Chen, Mark and Sigler, Eric and Litwin, Mateusz and Gray, Scott and Chess, Benjamin and Clark, Jack and Berner, Christopher and McCandlish, Sam and Radford, Alec and Sutskever, Ilya and Amodei, Dario},
 booktitle = {NeurIPS},
 title = {Language Models are Few-Shot Learners},
 year = {2020}
}

@article{kaplan2020scaling,
  title={Scaling laws for neural language models},
  author={Kaplan, Jared and McCandlish, Sam and Henighan, Tom and Brown, Tom B and Chess, Benjamin and Child, Rewon and Gray, Scott and Radford, Alec and Wu, Jeffrey and Amodei, Dario},
  journal={arXiv:2001.08361},
  year={2020}
}

@inproceedings{DBLP:journals/corr/KingmaB14,
  author       = {Diederik P. Kingma and
                  Jimmy Ba},
  title        = {Adam: {A} Method for Stochastic Optimization},
  booktitle    = {ICLR},
  year         = {2015},
}

@inproceedings{DBLP:conf/iclr/LoshchilovH19,
  author       = {Ilya Loshchilov and
                  Frank Hutter},
  title        = {Decoupled Weight Decay Regularization},
  booktitle    = {ICLR},
  year         = {2019},
}

@article{rubio2017convergence,
  title={Convergence analysis of an adaptive method of gradient descent},
  author={Rubio, David Mart{\i}nez},
  journal={University of Oxford, Oxford, M. Sc. thesis},
  year={2017}
}

@inproceedings{NEURIPS2023_a3cc5012,
 author = {Li, Haochuan and Rakhlin, Alexander and Jadbabaie, Ali},
 booktitle = {NeurIPS},
 title = {Convergence of Adam Under Relaxed Assumptions},
 year = {2023}
}

@article{defossez2022a,
title={A Simple Convergence Proof of Adam and Adagrad},
author={Alexandre D{\'e}fossez and Leon Bottou and Francis Bach and Nicolas Usunier},
journal={TMLR},
year={2022},
}

@article{
zhou2024on,
title={On the Convergence of Adaptive Gradient Methods for Nonconvex Optimization},
author={Dongruo Zhou and Jinghui Chen and Yuan Cao and Ziyan Yang and Quanquan Gu},
journal={TMLR},
year={2024},
}

@inproceedings{NEURIPS2020_b05b57f6,
 author = {Zhang, Jingzhao and Karimireddy, Sai Praneeth and Veit, Andreas and Kim, Seungyeon and Reddi, Sashank and Kumar, Sanjiv and Sra, Suvrit},
 booktitle = {NeurIPS},
 title = {Why are Adaptive Methods Good for Attention Models?},
 year = {2020}
}

@inproceedings{
pan2022toward,
title={Toward Understanding Why Adam Converges Faster Than {SGD} for Transformers},
author={Yan Pan and Yuanzhi Li},
booktitle={NeurIPS Workshop OPT: Optimization for Machine Learning},
year={2022},
}

@inproceedings{DBLP:journals/corr/abs-2402-16788,
  author       = {Yushun Zhang and
                  Congliang Chen and
                  Tian Ding and
                  Ziniu Li and
                  Ruoyu Sun and
                  Zhi{-}Quan Luo},
  title        = {Why Transformers Need Adam: {A} Hessian Perspective},
  booktitle    = {NeurIPS},
  year         = {2024},
}

@inproceedings{
kunstner2023noise,
title={Noise Is Not the Main Factor Behind the Gap Between Sgd and Adam on Transformers, But Sign Descent Might Be},
author={Frederik Kunstner and Jacques Chen and Jonathan Wilder Lavington and Mark Schmidt},
booktitle={ICLR},
year={2023},
}

@inproceedings{10.5555/3295222.3295349,
author = {Vaswani, Ashish and Shazeer, Noam and Parmar, Niki and Uszkoreit, Jakob and Jones, Llion and Gomez, Aidan N. and Kaiser, \L{}ukasz and Polosukhin, Illia},
title = {Attention is all you need},
year = {2017},
booktitle = {NeurIPS},
}

@inproceedings{
zhao2024galore,
title={GaLore: Memory-Efficient {LLM} Training by Gradient Low-Rank Projection},
author={Jiawei Zhao and Zhenyu Zhang and Beidi Chen and Zhangyang Wang and Anima Anandkumar and Yuandong Tian},
booktitle={ICLR Workshop on practical ML for limited/low resource settings},
year={2024},
}

@inproceedings{
xie2025adam,
title={Adam Exploits \${\textbackslash}ell\_{\textbackslash}infty\$-geometry of Loss Landscape via Coordinate-wise Adaptivity},
author={Shuo Xie and Mohamad Amin Mohamadi and Zhiyuan Li},
booktitle={ICLR},
year={2025},
}

@article{Nesterov1983AMF,
  title={A method for solving the convex programming problem with convergence rate $O(1/k^2)$},
  author={Yurii Nesterov},
  journal={Proceedings of the USSR Academy of Sciences},
  year={1983},
  volume={269},
  pages={543-547},
}

@article{POLYAK19641,
title = {Some methods of speeding up the convergence of iteration methods},
journal = {USSR Computational Mathematics and Mathematical Physics},
volume = {4},
number = {5},
pages = {1-17},
year = {1964},
author = {B.T. Polyak},
}

@inproceedings{he2016deep,
  title={Deep Residual Learning for Image Recognition},
  author={Kaiming He and X. Zhang and Shaoqing Ren and Jian Sun},
  booktitle={CVPR},
  year={2015},
}

@inproceedings{
dosovitskiy2021an,
title={An Image is Worth 16x16 Words: Transformers for Image Recognition at Scale},
author={Alexey Dosovitskiy and Lucas Beyer and Alexander Kolesnikov and Dirk Weissenborn and Xiaohua Zhai and Thomas Unterthiner and Mostafa Dehghani and Matthias Minderer and Georg Heigold and Sylvain Gelly and Jakob Uszkoreit and Neil Houlsby},
booktitle={ICLR},
year={2021},
}

@INPROCEEDINGS{7780459,
  author={He, Kaiming and Zhang, Xiangyu and Ren, Shaoqing and Sun, Jian},
  booktitle={CVPR}, 
  title={Deep Residual Learning for Image Recognition}, 
  year={2016},
}

@article{radford2019language,
  title={Language models are unsupervised multitask learners},
  author={Radford, Alec and Wu, Jeffrey and Child, Rewon and Luan, David and Amodei, Dario and Sutskever, Ilya and others},
  journal={OpenAI blog},
  year={2019}
}

@article{9eafeb2573aa4d7a9d3f0f17ec8c9af5,
title = "How to generate random matrices from the classical compact groups",
author = "Francesco Mezzadri",
year = "2007",
volume = "54",
pages = "592 -- 604",
journal = "Notices of the American Mathematical Society",
}

@misc{2009HowTG,
  title={How to generate a random unitary matrix},
  author={Maris Ozols},
note={Technical report},
  year={2009},
}

@misc{Gokaslan2019OpenWeb,  
	title={OpenWebText Corpus},
	author={Aaron Gokaslan and Vanya Cohen}, 
	year={2019}
}

@inproceedings{deng2009imagenet,
  title={Imagenet: A large-scale hierarchical image database},
  author={Deng, Jia and Dong, Wei and Socher, Richard and Li, Li-Jia and Li, Kai and Fei-Fei, Li},
  booktitle={CVPR},
  year={2009},
}

@inproceedings{kunstner2024heavytailedclassimbalanceadam,
      title={Heavy-Tailed Class Imbalance and Why Adam Outperforms Gradient Descent on Language Models}, 
      author={Frederik Kunstner and Alan Milligan and Robin Yadav and Mark Schmidt and Alberto Bietti},
    booktitle = {NeurIPS},
      year={2024},
}

@inproceedings{j.2018on,
title={On the Convergence of Adam and Beyond},
author={Sashank J. Reddi and Satyen Kale and Sanjiv Kumar},
booktitle={ICLR},
year={2018},
}

@inproceedings{
chen2018on,
title={On the Convergence of A Class of Adam-Type Algorithms  for Non-Convex Optimization},
author={Xiangyi Chen and Sijia Liu and Ruoyu Sun and Mingyi Hong},
booktitle={ICLR},
year={2019},
}

@article{guo2022novelconvergenceanalysisalgorithms,
      title={A Novel Convergence Analysis for Algorithms of the Adam Family and Beyond}, 
      author={Zhishuai Guo and Yi Xu and Wotao Yin and Rong Jin and Tianbao Yang},
      year={2022},
      journal={arXiv:2104.14840},
}

@inproceedings{NEURIPS2022_b6260ae5,
 author = {Zhang, Yushun and Chen, Congliang and Shi, Naichen and Sun, Ruoyu and Luo, Zhi-Quan},
 booktitle = {NeurIPS},
 title = {Adam Can Converge Without Any Modification On Update Rules},
 year = {2022}
}

@InProceedings{Zou_2019_CVPR,
author = {Zou, Fangyu and Shen, Li and Jie, Zequn and Zhang, Weizhong and Liu, Wei},
title = {A Sufficient Condition for Convergences of Adam and RMSProp},
booktitle = {CVPR},
year = {2019}
}

@article{arjevani2022lowerboundsnonconvexstochastic,
  title={Lower bounds for non-convex stochastic optimization},
  author={Yossi Arjevani and Yair Carmon and John C. Duchi and Dylan J. Foster and Nathan Srebro and Blake E. Woodworth},
  journal={Mathematical Programming},
  year={2019},
  volume={199},
  pages={165-214},
}

@inproceedings{
wang2023closing,
title={Closing the gap between the upper bound and lower bound of Adam's iteration complexity},
author={Bohan Wang and Jingwen Fu and Huishuai Zhang and Nanning Zheng and Wei Chen},
booktitle={NeurIPS},
year={2023},
}

@article{li2024ofracsqrtdt14convergenceratermsprop,
      title={On the $O(\frac{\sqrt{d}}{T^{1/4}})$ Convergence Rate of RMSProp and Its Momentum Extension Measured by $\ell_1$ Norm}, 
      author={Huan Li and Zhouchen Lin},
      year={2024},
      journal={arXiv:2402.00389},
}

@inproceedings{
li2023convex,
title={Convex and Non-convex Optimization Under Generalized Smoothness},
author={Haochuan Li and Jian Qian and Yi Tian and Alexander Rakhlin and Ali Jadbabaie},
booktitle={NeurIPS},
year={2023},
}

@inproceedings{
hubler2023parameteragnostic,
title={Parameter-Agnostic Optimization under Relaxed Smoothness},
author={Florian H{\"u}bler and Junchi YANG and Xiang Li and Niao He},
booktitle={Neurips workshop OPT: Optimization for Machine Learning},
year={2023},
}

@inproceedings{
sun2022surprising,
title={Surprising Instabilities in Training Deep Networks and a Theoretical Analysis },
author={Yuxin Sun and Dong Lao and Ganesh Sundaramoorthi and Anthony Yezzi},
booktitle={NeurIPS},
year={2022},
}

@article{li2018explorationnumericalprecisiondeep,
      title={Exploration of Numerical Precision in Deep Neural Networks}, 
      author={Zhaoqi Li and Yu Ma and Catalina Vajiac and Yunkai Zhang},
      year={2018},
      journal={arXiv:1805.01078},
}

@inproceedings{NEURIPS2018_335d3d1c,
 author = {Wang, Naigang and Choi, Jungwook and Brand, Daniel and Chen, Chia-Yu and Gopalakrishnan, Kailash},
 booktitle = {NeurIPS},
 title = {Training Deep Neural Networks with 8-bit Floating Point Numbers},
 year = {2018}
}

@inproceedings{
zhang2024adamminiusefewerlearning,
title={Adam-mini: Use Fewer Learning Rates To Gain More},
author={Yushun Zhang and Congliang Chen and Ziniu Li and Tian Ding and Chenwei Wu and Diederik P Kingma and Yinyu Ye and Zhi-Quan Luo and Ruoyu Sun},
booktitle={ICLR},
year={2025},
}

@inproceedings{NIPS2017_81b3833e,
 author = {Wilson, Ashia C and Roelofs, Rebecca and Stern, Mitchell and Srebro, Nati and Recht, Benjamin},
 booktitle = {NeurIPS},
 title = {The Marginal Value of Adaptive Gradient Methods in Machine Learning},
 year = {2017}
}

@article{keskar2017improvinggeneralizationperformanceswitching,
      title={Improving Generalization Performance by Switching from Adam to SGD}, 
      author={Nitish Shirish Keskar and Richard Socher},
      year={2017},
      journal={arXiv:1712.07628},
}

@inproceedings{crawshaw2022,
  title={Robustness to unbounded smoothness of generalized signsgd},
  author={Crawshaw, Michael and Liu, Mingrui and Orabona, Francesco and Zhang, Wei and Zhuang, Zhenxun},
  booktitle={NeurIPS},
  year={2022}
}

@inproceedings{NEURIPS2022_67d57c32,
 author = {Dao, Tri and Fu, Dan and Ermon, Stefano and Rudra, Atri and R\'{e}, Christopher},
 booktitle = {NeurIPS},
 title = {FlashAttention: Fast and Memory-Efficient Exact Attention with IO-Awareness},
 year = {2022}
}

@article{golden2024flashattentionstable,
      title={Is Flash Attention Stable?}, 
      author={Alicia Golden and Samuel Hsia and Fei Sun and Bilge Acun and Basil Hosmer and Yejin Lee and Zachary DeVito and Jeff Johnson and Gu-Yeon Wei and David Brooks and Carole-Jean Wu},
      year={2024},
      journal={arXiv:2405.02803},
}

@article{varoquaux2024hypesustainabilitypricebiggerisbetter,
      title={Hype, Sustainability, and the Price of the Bigger-is-Better Paradigm in AI}, 
      author={Gaël Varoquaux and Alexandra Sasha Luccioni and Meredith Whittaker},
      year={2024},
      journal={arXiv:2409.14160},
}

@inproceedings{dagréou2024howtocompute,
  author = {Dagréou, Mathieu and Ablin, Pierre and Vaiter, Samuel and Moreau, Thomas},
  title = {How to compute Hessian-vector products?},
  booktitle = {ICLR Blogposts},
  year = {2024},
}

@misc{Karpathy2022,
  author = {Andrej Karpathy},
  title = {\text{NanoGPT}},
  year = {2022},
  publisher = {GitHub},
  commit = {325be85d9be8c81b436728a420e85796c57dba7e}
}

@article{beyer2022betterplainvitbaselines,
      title={Better plain ViT baselines for ImageNet-1k}, 
      author={Lucas Beyer and Xiaohua Zhai and Alexander Kolesnikov},
      year={2022},
      journal={arXiv:2205.01580},
}

@inproceedings{NEURIPS2022_9daab3b4,
 author = {Guille-Escuret, Charles and Ibrahim, Adam and Goujaud, Baptiste and Mitliagkas, Ioannis},
 booktitle = {NeurIPS},
 title = {Gradient Descent Is Optimal Under Lower Restricted Secant Inequality And Upper Error Bound},
 year = {2022}
}

@article{luo_error_1993,
	Author = {Zhi-Quan Luo and Paul Tseng},
	Journal = {Annals of Operations Research},
	Number = {1},
	Pages = {157--178},
	Title = {Error bounds and convergence analysis of feasible descent methods: a general approach},
	Volume = {46},
	Year = {1993},}

@inproceedings{guilleescuret2020studyconditionnumbersfirstorder,
  title={A Study of Condition Numbers for First-Order Optimization},
  author={Charles Guille-Escuret and Baptiste Goujaud and Manuela Girotti and Ioannis Mitliagkas},
  booktitle={AISTATS},
  year={2020},
}

@article{polyak_gradient_1963,
	abstract = {Let tf(t) be a functional defined in the (real) Hubert space H. The problem consists in finding its minimum value tff∗ = inf tf(x) and some minimum point x∗ (if such exists).},
	author = {Boris T. Polyak},
	date-modified = {2020-09-03 11:58:07 -0400},
	journal = {USSR Computational Mathematics and Mathematical Physics},
	number = {4},
	pages = {864 -- 878},
	title = {Gradient methods for the minimisation of functionals},
	volume = {3},
	year = {1963},}

@InProceedings{guilleescuret2023wrongturnssimplegeometry,
  title = 	 {No Wrong Turns: The Simple Geometry Of Neural Networks Optimization Paths},
  author =       {Guille-Escuret, Charles and Naganuma, Hiroki and Fatras, Kilian and Mitliagkas, Ioannis},
  booktitle = {ICML},
  year = 	 {2024},
}

@article{goujaud2022optimalfirstordermethodsconvex,
      title={Optimal first-order methods for convex functions with a quadratic upper bound}, 
      author={Baptiste Goujaud and Adrien Taylor and Aymeric Dieuleveut},
      year={2022},
      journal={arXiv:2205.15033},
}

@article{duchi11a,
  author  = {John Duchi and Elad Hazan and Yoram Singer},
  title   = {Adaptive Subgradient Methods for Online Learning and Stochastic Optimization},
  journal = {JMLR},
  year    = {2011},
}

@inproceedings{
vyas2025soap,
title={{SOAP}: Improving and Stabilizing Shampoo using Adam for Language Modeling},
author={Nikhil Vyas and Depen Morwani and Rosie Zhao and Itai Shapira and David Brandfonbrener and Lucas Janson and Sham M. Kakade},
booktitle={ICLR},
year={2025},
}

@InProceedings{47079,
  title = 	 {Shampoo: Preconditioned Stochastic Tensor Optimization},
  author =       {Gupta, Vineet and Koren, Tomer and Singer, Yoram},
  booktitle = 	 {ICLR},
  year = 	 {2018},
}

@misc{jordan2024muon,
  author       = {Keller Jordan and Yuchen Jin and Vlado Boza and Jiacheng You and
                  Franz Cesista and Laker Newhouse and Jeremy Bernstein},
  title        = {Muon: An optimizer for hidden layers in neural networks},
  year         = {2024},
}

@misc{balles2020geometrysigngradientdescent,
      title={The Geometry of Sign Gradient Descent}, 
      author={Lukas Balles and Fabian Pedregosa and Nicolas Le Roux},
      year={2020},
      eprint={2002.08056},
      archivePrefix={arXiv},
}

@misc{liu2025muonscalable,
      title={Muon is Scalable for LLM Training}, 
      author={Liu, Jingyuan and Su, Jianlin et al.},
      year={2025},
      eprint={2502.16982},
      archivePrefix={arXiv},
      primaryClass={cs.LG}, 
}

@book{abadir2005matrix,
  author    = {K. M. Abadir and J. R. Magnus},
  title     = {Matrix Algebra},
  publisher = {Cambridge University Press},
  year      = {2005},
  address   = {Cambridge},
  edition   = {1st},
  note      = {See Section 4.23 on Semi-orthogonality}
}

@misc{bernstein2025deriving,
  author = {Jeremy Bernstein},
  title = {Deriving Muon},
  url = {https://jeremybernste.in/writing/deriving-muon},
  year = {2025}
}

\newpage

\section*{NeurIPS Paper Checklist}

\remove{
%%% BEGIN INSTRUCTIONS %%%
The checklist is designed to encourage best practices for responsible machine learning research, addressing issues of reproducibility, transparency, research ethics, and societal impact. Do not remove the checklist: {\bf The papers not including the checklist will be desk rejected.} The checklist should follow the references and follow the (optional) supplemental material.  The checklist does NOT count towards the page
limit. 

Please read the checklist guidelines carefully for information on how to answer these questions. For each question in the checklist:
\begin{itemize}
    \item You should answer \answerYes{}, \answerNo{}, or \answerNA{}.
    \item \answerNA{} means either that the question is Not Applicable for that particular paper or the relevant information is Not Available.
    \item Please provide a short (1–2 sentence) justification right after your answer (even for NA). 
   % \item {\bf The papers not including the checklist will be desk rejected.}
\end{itemize}

{\bf The checklist answers are an integral part of your paper submission.} They are visible to the reviewers, area chairs, senior area chairs, and ethics reviewers. You will be asked to also include it (after eventual revisions) with the final version of your paper, and its final version will be published with the paper.

The reviewers of your paper will be asked to use the checklist as one of the factors in their evaluation. While "\answerYes{}" is generally preferable to "\answerNo{}", it is perfectly acceptable to answer "\answerNo{}" provided a proper justification is given (e.g., "error bars are not reported because it would be too computationally expensive" or "we were unable to find the license for the dataset we used"). In general, answering "\answerNo{}" or "\answerNA{}" is not grounds for rejection. While the questions are phrased in a binary way, we acknowledge that the true answer is often more nuanced, so please just use your best judgment and write a justification to elaborate. All supporting evidence can appear either in the main paper or the supplemental material, provided in appendix. If you answer \answerYes{} to a question, in the justification please point to the section(s) where related material for the question can be found.

IMPORTANT, please:
\begin{itemize}
    \item {\bf Delete this instruction block, but keep the section heading ``NeurIPS Paper Checklist"},
    \item  {\bf Keep the checklist subsection headings, questions/answers and guidelines below.}
    \item {\bf Do not modify the questions and only use the provided macros for your answers}.
\end{itemize}

%%% END INSTRUCTIONS %%%
}

\begin{enumerate}

\item {\bf Claims}
    \item[] Question: Do the main claims made in the abstract and introduction accurately reflect the paper's contributions and scope?
    \item[] Answer:  \answerYes{}
    % , \answerNo{}, or \answerNA{}.
    \item[] Justification: 
    The main claims made in the abstract and introduction are supported by the following sections:
    \begin{itemize}
        \item In \Cref{subsec:main_exp}, we show that Adam's performance in training transformers degrades under random rotations of the parameter space. In \Cref{subsec:improving_rots}, we demonstrate that applying SVD-based rotations improves empirical performance.
        \item In \Cref{subsec:existing_assumptions}, we examine three existing quantities and show that they fail to explain the performance changes under rotation. In \Cref{subsec:orthogonality}, we show that update orthogonality better aligns with performance.
    \end{itemize}
    \item[] Guidelines:
    \begin{itemize}
        \item The answer NA means that the abstract and introduction do not include the claims made in the paper.
        \item The abstract and/or introduction should clearly state the claims made, including the contributions made in the paper and important assumptions and limitations. A No or NA answer to this question will not be perceived well by the reviewers. 
        \item The claims made should match theoretical and experimental results, and reflect how much the results can be expected to generalize to other settings. 
        \item It is fine to include aspirational goals as motivation as long as it is clear that these goals are not attained by the paper. 
    \end{itemize}

\item {\bf Limitations}
    \item[] Question: Does the paper discuss the limitations of the work performed by the authors?
    \item[] Answer:  \answerYes{}
    % , \answerNo{}, or \answerNA{}.
    \item[] Justification:
    \begin{itemize}
        \item In \Cref{subsec:improving_rots}, we clarify that the purpose of the SVD rotation is not to propose a new practical optimizer.
        \item In \Cref{subsec:orthogonality}, we note that the proposed quantity opens the door for future work to formalize it and incorporate it into theoretical analysis.
    \end{itemize}
    \item[] Guidelines:
    \begin{itemize}
        \item The answer NA means that the paper has no limitation while the answer No means that the paper has limitations, but those are not discussed in the paper. 
        \item The authors are encouraged to create a separate "Limitations" section in their paper.
        \item The paper should point out any strong assumptions and how robust the results are to violations of these assumptions (e.g., independence assumptions, noiseless settings, model well-specification, asymptotic approximations only holding locally). The authors should reflect on how these assumptions might be violated in practice and what the implications would be.
        \item The authors should reflect on the scope of the claims made, e.g., if the approach was only tested on a few datasets or with a few runs. In general, empirical results often depend on implicit assumptions, which should be articulated.
        \item The authors should reflect on the factors that influence the performance of the approach. For example, a facial recognition algorithm may perform poorly when image resolution is low or images are taken in low lighting. Or a speech-to-text system might not be used reliably to provide closed captions for online lectures because it fails to handle technical jargon.
        \item The authors should discuss the computational efficiency of the proposed algorithms and how they scale with dataset size.
        \item If applicable, the authors should discuss possible limitations of their approach to address problems of privacy and fairness.
        \item While the authors might fear that complete honesty about limitations might be used by reviewers as grounds for rejection, a worse outcome might be that reviewers discover limitations that aren't acknowledged in the paper. The authors should use their best judgment and recognize that individual actions in favor of transparency play an important role in developing norms that preserve the integrity of the community. Reviewers will be specifically instructed to not penalize honesty concerning limitations.
    \end{itemize}

\item {\bf Theory assumptions and proofs}
    \item[] Question: For each theoretical result, does the paper provide the full set of assumptions and a complete (and correct) proof?
    \item[] Answer: \answerYes{}
    % , \answerNo{}, or \answerNA{}.
    \item[] Justification: There are no major theoretical results. For \Cref{prop:sgd}, we provide a proof in \Cref{appendix:algorithms_and_rotations} for clarity and illustrative purposes, although this is not a novel result, as stated in the main paper.
    \item[] Guidelines:
    \begin{itemize}
        \item The answer NA means that the paper does not include theoretical results. 
        \item All the theorems, formulas, and proofs in the paper should be numbered and cross-referenced.
        \item All assumptions should be clearly stated or referenced in the statement of any theorems.
        \item The proofs can either appear in the main paper or the supplemental material, but if they appear in the supplemental material, the authors are encouraged to provide a short proof sketch to provide intuition. 
        \item Inversely, any informal proof provided in the core of the paper should be complemented by formal proofs provided in appendix or supplemental material.
        \item Theorems and Lemmas that the proof relies upon should be properly referenced. 
    \end{itemize}

    \item {\bf Experimental result reproducibility}
    \item[] Question: Does the paper fully disclose all the information needed to reproduce the main experimental results of the paper to the extent that it affects the main claims and/or conclusions of the paper (regardless of whether the code and data are provided or not)?
    \item[] Answer: \answerYes{}
    % , \answerNo{}, or \answerNA{}.
    \item[] Justification:
    We use standard architectures and datasets commonly used in previous work, as stated in \Cref{sec:rotations_influence_on_adam_efficiency}. The experimental details and hyperparameters needed to reproduce the main results are provided in \Cref{appendix:experimental_details}.
    \item[] Guidelines:
    \begin{itemize}
        \item The answer NA means that the paper does not include experiments.
        \item If the paper includes experiments, a No answer to this question will not be perceived well by the reviewers: Making the paper reproducible is important, regardless of whether the code and data are provided or not.
        \item If the contribution is a dataset and/or model, the authors should describe the steps taken to make their results reproducible or verifiable. 
        \item Depending on the contribution, reproducibility can be accomplished in various ways. For example, if the contribution is a novel architecture, describing the architecture fully might suffice, or if the contribution is a specific model and empirical evaluation, it may be necessary to either make it possible for others to replicate the model with the same dataset, or provide access to the model. In general. releasing code and data is often one good way to accomplish this, but reproducibility can also be provided via detailed instructions for how to replicate the results, access to a hosted model (e.g., in the case of a large language model), releasing of a model checkpoint, or other means that are appropriate to the research performed.
        \item While NeurIPS does not require releasing code, the conference does require all submissions to provide some reasonable avenue for reproducibility, which may depend on the nature of the contribution. For example
        \begin{enumerate}
            \item If the contribution is primarily a new algorithm, the paper should make it clear how to reproduce that algorithm.
            \item If the contribution is primarily a new model architecture, the paper should describe the architecture clearly and fully.
            \item If the contribution is a new model (e.g., a large language model), then there should either be a way to access this model for reproducing the results or a way to reproduce the model (e.g., with an open-source dataset or instructions for how to construct the dataset).
            \item We recognize that reproducibility may be tricky in some cases, in which case authors are welcome to describe the particular way they provide for reproducibility. In the case of closed-source models, it may be that access to the model is limited in some way (e.g., to registered users), but it should be possible for other researchers to have some path to reproducing or verifying the results.
        \end{enumerate}
    \end{itemize}

\item {\bf Open access to data and code}
    \item[] Question: Does the paper provide open access to the data and code, with sufficient instructions to faithfully reproduce the main experimental results, as described in supplemental material?
    \item[] Answer: \answerYes{}
    % , \answerNo{}, or \answerNA{}.
    \item[] Justification: The datasets we use are publicly available and standard. The code to reproduce our experiments will be made publicly available upon publication.
    \item[] Guidelines:
    \begin{itemize}
        \item The answer NA means that paper does not include experiments requiring code.
        \item Please see the NeurIPS code and data submission guidelines (\url{https://nips.cc/public/guides/CodeSubmissionPolicy}) for more details.
        \item While we encourage the release of code and data, we understand that this might not be possible, so “No” is an acceptable answer. Papers cannot be rejected simply for not including code, unless this is central to the contribution (e.g., for a new open-source benchmark).
        \item The instructions should contain the exact command and environment needed to run to reproduce the results. See the NeurIPS code and data submission guidelines (\url{https://nips.cc/public/guides/CodeSubmissionPolicy}) for more details.
        \item The authors should provide instructions on data access and preparation, including how to access the raw data, preprocessed data, intermediate data, and generated data, etc.
        \item The authors should provide scripts to reproduce all experimental results for the new proposed method and baselines. If only a subset of experiments are reproducible, they should state which ones are omitted from the script and why.
        \item At submission time, to preserve anonymity, the authors should release anonymized versions (if applicable).
        \item Providing as much information as possible in supplemental material (appended to the paper) is recommended, but including URLs to data and code is permitted.
    \end{itemize}

\item {\bf Experimental setting/details}
    \item[] Question: Does the paper specify all the training and test details (e.g., data splits, hyperparameters, how they were chosen, type of optimizer, etc.) necessary to understand the results?
    \item[] Answer: \answerYes{}
    % , \answerNo{}, or \answerNA{}.
    \item[] Justification: 
    The training and test details are publicly available and follow standard practices. We provide the full details in \Cref{appendix:experimental_details}.
    \item[] Guidelines:
    \begin{itemize}
        \item The answer NA means that the paper does not include experiments.
        \item The experimental setting should be presented in the core of the paper to a level of detail that is necessary to appreciate the results and make sense of them.
        \item The full details can be provided either with the code, in appendix, or as supplemental material.
    \end{itemize}

\item {\bf Experiment statistical significance}
    \item[] Question: Does the paper report error bars suitably and correctly defined or other appropriate information about the statistical significance of the experiments?
    \item[] Answer: \answerYes{}
    % , \answerNo{}, or \answerNA{}.
    \item[] Justification: Although the large experiments were too computationally expensive to run over multiple random seeds, we provide several ablation studies in \Cref{appendix:ablations}, examining the dimension of the rotation matrices, numerical stability, and verifying that the same procedure confirms the rotation invariance of SGD. For the analysis in \Cref{sec:adequacy_existing_assumptions}, each experiment is sampled from a specific number of steps from a checkpoint, as described in the corresponding subsections.
    \item[] Guidelines:
    \begin{itemize}
        \item The answer NA means that the paper does not include experiments.
        \item The authors should answer "Yes" if the results are accompanied by error bars, confidence intervals, or statistical significance tests, at least for the experiments that support the main claims of the paper.
        \item The factors of variability that the error bars are capturing should be clearly stated (for example, train/test split, initialization, random drawing of some parameter, or overall run with given experimental conditions).
        \item The method for calculating the error bars should be explained (closed form formula, call to a library function, bootstrap, etc.)
        \item The assumptions made should be given (e.g., Normally distributed errors).
        \item It should be clear whether the error bar is the standard deviation or the standard error of the mean.
        \item It is OK to report 1-sigma error bars, but one should state it. The authors should preferably report a 2-sigma error bar than state that they have a 96\% CI, if the hypothesis of Normality of errors is not verified.
        \item For asymmetric distributions, the authors should be careful not to show in tables or figures symmetric error bars that would yield results that are out of range (e.g. negative error rates).
        \item If error bars are reported in tables or plots, The authors should explain in the text how they were calculated and reference the corresponding figures or tables in the text.
    \end{itemize}

\item {\bf Experiments compute resources}
    \item[] Question: For each experiment, does the paper provide sufficient information on the computer resources (type of compute workers, memory, time of execution) needed to reproduce the experiments?
    \item[] Answer: \answerYes{}
    % , \answerNo{}, or \answerNA{}.
    \item[] Justification: The compute resources required are documented in \Cref{appendix:experimental_details}.
    \item[] Guidelines:
    \begin{itemize}
        \item The answer NA means that the paper does not include experiments.
        \item The paper should indicate the type of compute workers CPU or GPU, internal cluster, or cloud provider, including relevant memory and storage.
        \item The paper should provide the amount of compute required for each of the individual experimental runs as well as estimate the total compute. 
        \item The paper should disclose whether the full research project required more compute than the experiments reported in the paper (e.g., preliminary or failed experiments that didn't make it into the paper). 
    \end{itemize}
    
\item {\bf Code of ethics}
    \item[] Question: Does the research conducted in the paper conform, in every respect, with the NeurIPS Code of Ethics \url{https://neurips.cc/public/EthicsGuidelines}?
    \item[] Answer:  \answerYes{}
    % , \answerNo{}, or \answerNA{}.
    \item[] Justification: The authors confirm that the research was conducted conforming to the Code of Ethics.
    \item[] Guidelines:
    \begin{itemize}
        \item The answer NA means that the authors have not reviewed the NeurIPS Code of Ethics.
        \item If the authors answer No, they should explain the special circumstances that require a deviation from the Code of Ethics.
        \item The authors should make sure to preserve anonymity (e.g., if there is a special consideration due to laws or regulations in their jurisdiction).
    \end{itemize}

\item {\bf Broader impacts}
    \item[] Question: Does the paper discuss both potential positive societal impacts and negative societal impacts of the work performed?
    \item[] Answer: \answerNA{}
    \item[] Justification: This paper focuses on foundational research aimed at understanding the behaviour of generic algorithms used to optimize neural networks. It is not tied to any specific application, and we do not foresee a direct path to social impact at this stage.
    \item[] Guidelines:
    \begin{itemize}
        \item The answer NA means that there is no societal impact of the work performed.
        \item If the authors answer NA or No, they should explain why their work has no societal impact or why the paper does not address societal impact.
        \item Examples of negative societal impacts include potential malicious or unintended uses (e.g., disinformation, generating fake profiles, surveillance), fairness considerations (e.g., deployment of technologies that could make decisions that unfairly impact specific groups), privacy considerations, and security considerations.
        \item The conference expects that many papers will be foundational research and not tied to particular applications, let alone deployments. However, if there is a direct path to any negative applications, the authors should point it out. For example, it is legitimate to point out that an improvement in the quality of generative models could be used to generate deepfakes for disinformation. On the other hand, it is not needed to point out that a generic algorithm for optimizing neural networks could enable people to train models that generate Deepfakes faster.
        \item The authors should consider possible harms that could arise when the technology is being used as intended and functioning correctly, harms that could arise when the technology is being used as intended but gives incorrect results, and harms following from (intentional or unintentional) misuse of the technology.
        \item If there are negative societal impacts, the authors could also discuss possible mitigation strategies (e.g., gated release of models, providing defenses in addition to attacks, mechanisms for monitoring misuse, mechanisms to monitor how a system learns from feedback over time, improving the efficiency and accessibility of ML).
    \end{itemize}
    
\item {\bf Safeguards}
    \item[] Question: Does the paper describe safeguards that have been put in place for responsible release of data or models that have a high risk for misuse (e.g., pretrained language models, image generators, or scraped datasets)?
    \item[] Answer: \answerNA{}
    \item[] Justification:
    This paper does not release data or models.
    \item[] Guidelines:
    \begin{itemize}
        \item The answer NA means that the paper poses no such risks.
        \item Released models that have a high risk for misuse or dual-use should be released with necessary safeguards to allow for controlled use of the model, for example by requiring that users adhere to usage guidelines or restrictions to access the model or implementing safety filters. 
        \item Datasets that have been scraped from the Internet could pose safety risks. The authors should describe how they avoided releasing unsafe images.
        \item We recognize that providing effective safeguards is challenging, and many papers do not require this, but we encourage authors to take this into account and make a best faith effort.
    \end{itemize}

\item {\bf Licenses for existing assets}
    \item[] Question: Are the creators or original owners of assets (e.g., code, data, models), used in the paper, properly credited and are the license and terms of use explicitly mentioned and properly respected?
    \item[] Answer:  \answerYes{}
    \item[] Justification:
    \Cref{subsec:main_exp} and \Cref{appendix:experimental_details} includes references for the datasets, models, and code used in this project, along with citations to the original papers and URLs where available.
    \item[] Guidelines:
    \begin{itemize}
        \item The answer NA means that the paper does not use existing assets.
        \item The authors should cite the original paper that produced the code package or dataset.
        \item The authors should state which version of the asset is used and, if possible, include a URL.
        \item The name of the license (e.g., CC-BY 4.0) should be included for each asset.
        \item For scraped data from a particular source (e.g., website), the copyright and terms of service of that source should be provided.
        \item If assets are released, the license, copyright information, and terms of use in the package should be provided. For popular datasets, \url{paperswithcode.com/datasets} has curated licenses for some datasets. Their licensing guide can help determine the license of a dataset.
        \item For existing datasets that are re-packaged, both the original license and the license of the derived asset (if it has changed) should be provided.
        \item If this information is not available online, the authors are encouraged to reach out to the asset's creators.
    \end{itemize}

\item {\bf New assets}
    \item[] Question: Are new assets introduced in the paper well documented and is the documentation provided alongside the assets?
    \item[] Answer: \answerNA{}
    \item[] Justification: 
    This paper does not release new assets.
    \item[] Guidelines:
    \begin{itemize}
        \item The answer NA means that the paper does not release new assets.
        \item Researchers should communicate the details of the dataset/code/model as part of their submissions via structured templates. This includes details about training, license, limitations, etc. 
        \item The paper should discuss whether and how consent was obtained from people whose asset is used.
        \item At submission time, remember to anonymize your assets (if applicable). You can either create an anonymized URL or include an anonymized zip file.
    \end{itemize}

\item {\bf Crowdsourcing and research with human subjects}
    \item[] Question: For crowdsourcing experiments and research with human subjects, does the paper include the full text of instructions given to participants and screenshots, if applicable, as well as details about compensation (if any)? 
    \item[] Answer: \answerNA{}
    \item[] Justification: This paper does not involve crowdsourcing nor research with human subjects.
    \item[] Guidelines:
    \begin{itemize}
        \item The answer NA means that the paper does not involve crowdsourcing nor research with human subjects.
        \item Including this information in the supplemental material is fine, but if the main contribution of the paper involves human subjects, then as much detail as possible should be included in the main paper. 
        \item According to the NeurIPS Code of Ethics, workers involved in data collection, curation, or other labor should be paid at least the minimum wage in the country of the data collector. 
    \end{itemize}

\item {\bf Institutional review board (IRB) approvals or equivalent for research with human subjects}
    \item[] Question: Does the paper describe potential risks incurred by study participants, whether such risks were disclosed to the subjects, and whether Institutional Review Board (IRB) approvals (or an equivalent approval/review based on the requirements of your country or institution) were obtained?
    \item[] Answer: \answerNA{}
    \item[] Justification: This paper does not involve crowdsourcing nor research with human subjects.
    \item[] Guidelines:
    \begin{itemize}
        \item The answer NA means that the paper does not involve crowdsourcing nor research with human subjects.
        \item Depending on the country in which research is conducted, IRB approval (or equivalent) may be required for any human subjects research. If you obtained IRB approval, you should clearly state this in the paper. 
        \item We recognize that the procedures for this may vary significantly between institutions and locations, and we expect authors to adhere to the NeurIPS Code of Ethics and the guidelines for their institution. 
        \item For initial submissions, do not include any information that would break anonymity (if applicable), such as the institution conducting the review.
    \end{itemize}

\item {\bf Declaration of LLM usage}
    \item[] Question: Does the paper describe the usage of LLMs if it is an important, original, or non-standard component of the core methods in this research? Note that if the LLM is used only for writing, editing, or formatting purposes and does not impact the core methodology, scientific rigorousness, or originality of the research, declaration is not required.
    %this research? 
    \item[] Answer: \answerNA{}
    \item[] Justification: The core method development in this research does not involve LLMs as any important, original, or non-standard components. 
    \item[] Guidelines:
    \begin{itemize}
        \item The answer NA means that the core method development in this research does not involve LLMs as any important, original, or non-standard components.
        \item Please refer to our LLM policy (\url{https://neurips.cc/Conferences/2025/LLM}) for what should or should not be described.
    \end{itemize}

\end{enumerate}

%%%%%%%%%%%%%%%%%%%%%%%%%%%%%%%%%%%%%%%%%%%%%%%%%%%%%%%%%%%%

\newpage

\appendix
% \doparttoc

{\hrule height 4pt \vskip 0.25in \vskip -\parskip}
{\centering \LARGE\bf Understanding Adam Requires Better Rotation Dependent Assumptions\\(Appendix) \par}
{\vskip 0.29in \vskip -\parskip \hrule height 1pt \vskip 0.09in}

{
% \footnotesize
\tableofcontents
}

\vspace{2em}
% \setcounter{parttocdepth}{1}

% \parttoc
% \title{}
% \maketitle

\section{Sampling Random Rotations in High Dimension}
\label{appendix:high_dim_rots}

This section explains our method of sampling random rotations for high-dimensional spaces and the implementation details.

\subsection{High-Dimensional Rotations} Even small modern machine learning models typically have millions of parameters. Consequently, storing a $d\times d$ rotation matrix is often intractable, let alone performing the dot product required to rotate the gradient vector. To address this issue, we sample a $n\times n$ rotation matrix $\rot_n$ with $n\ll d$ uniformly (in the sense of the Haar measure) from the special orthogonal group $SO(n)$, and a random permutation $\pi$ of ${0,\dots,d-1}$. For now, we assume $\frac{d}{n} \in \mathbb{N}$, see \cref{randomrot:residuals} for a general case. To rotate a gradient $g$, we compute:

\begin{align}
\label{eq:naive_rotation}
g^{(\rot_n,\pi)}& :=\pi^{-1}\circ \left(\left[ \bigoplus_{i=1}^{d/n} \mathbf{R}_n \right] \left(\pi \circ g\right)\right),   \\
& =\pi^{-1} \circ \begin{bmatrix}
    \mathbf{R}_n & \\
    & \mathbf{R}_n & & 0\\
    & & \mathbf{R}_n \\
    & 0 & & \ddots  \\
    & & & & \mathbf{R}_n
\end{bmatrix}  \left(\pi \circ g\right),   
\end{align}

where $\bigoplus$ denotes the direct sum operation, producing a block-diagonal matrix with $d/n$ blocks $\mathbf{R}_n$. This procedure effectively computes a rotation by blocks of size $n$ picked from a random partition of indices, constituting a valid rotation.

Intuitively, if $n$ is sufficiently large, we expect this procedure to approximate well the effect of random rotations sampled uniformly from $SO(d)$, due to the law of large numbers homogenizing geometric properties across coordinates. To confirm this intuition, we perform an ablation study in \Cref{fig:rotdim_ablation}, finding that the impact on Adam's performance saturates well below our operational values.

Our approximation reduces the memory cost from $O(d^2)$ to $O(n^2+d)$, and the computational cost from $O(d^2)$ to $O(nd)$. Since batch matrix multiplications required for the rotation can be performed efficiently on modern GPUs, the final overhead of applying rotations is extremely small.

\begin{figure}[h]
    \centering
        \centering
        \includegraphics[width=0.75\linewidth]{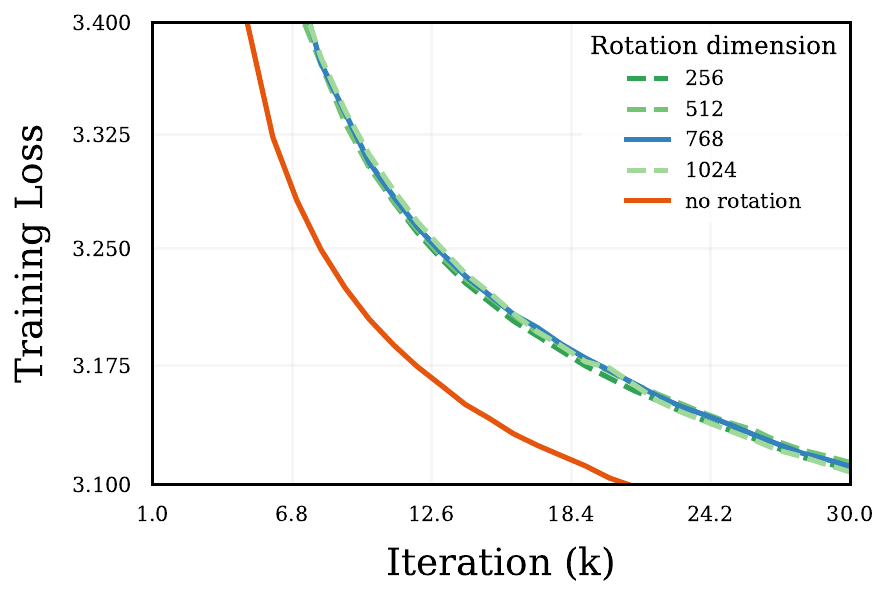}
        \caption{Training loss of GPT-2 when training with different rotation dimension $n$. The loss of performance is consistent across $n$ at our range.}
        \label{fig:rotdim_ablation}
\end{figure}

\subsection{Reflections and Sampling From The Haar Measure} To sample $R_n$ uniformly from $SO(n)$ with respect to the Haar measure, we employ the QR decomposition trick~\citep{9eafeb2573aa4d7a9d3f0f17ec8c9af5,2009HowTG}, which samples from the Haar measure $\mu$ of the orthogonal group $O(n)$. Let us consider the projection $\pi: O(n)\rightarrow SO(n)$, such that $\pi(\rot)$ is $\rot$ when $\rot\in SO(n)$, and $\pi(\rot)$ simply multiplies the first column of $\rot$ by $-1$ when $\rot\in O(n)\setminus SO(n)$. The push forward of $\mu$ by $\pi$ is the Haar measure on $SO(n)$. Since Adam is reflection equivariant, rotating with $\pi(\rot)$ and with $\rot$ will lead to identical performance for any $\rot\in O(n)$. Thus we can omit to apply $\pi$, and simply sample from $\mu$ using the QR decomposition method.

Similarly, Adam is permutation equivariant; thus we omit to apply the inverse permutation before providing the rotated gradients to Adam, and to apply the permutation before rotating the update, as removing these two steps does not affect performance.

\subsection{Rotation Residual}
\label{randomrot:residuals}

Based on the type of rotation and the chosen dimension \( n \), the number of blocks may not divide evenly, i.e., \( \frac{d}{n} \notin \mathbb{N} \). To address this issue, we introduce an additional rotation matrix, which we refer to as the \textit{residual} matrix, to complete the missing dimensions. More formally, let \( d \) represent the dimensionality of the parameter space, and let \( n \) denote the block dimensions of the rotation. We define \( b \overset{\Delta}{=} \lfloor \frac{d}{n} \rfloor \) as the number of complete blocks. The \textbf{residual} matrix \( \mathcal{R} \) is then sampled from $SO(p)$, where \( p \overset{\Delta}{=} d - nb \). Therefore, \cref{eq:naive_rotation} becomes

\begin{align}
\label{eq:naive_rotation_res}
g^{(\rot_n,\mathcal{R},\pi)}& :=\pi^{-1}\circ \left(\left[ \mathbf{B} \oplus \mathcal{R} \right] \left(\pi \circ g\right)\right), \\   \\
    & =\pi^{-1} \circ \begin{bmatrix}
        \mathbf{B}\\
        & \mathcal{R}
    \end{bmatrix}  \left(\pi \circ g\right),\\
    & =\pi^{-1} \circ \begin{bmatrix}
        \mathbf{R}_n & \\
        & \mathbf{R}_n & & & 0\\
        &  & & \ddots  \\
        & 0 & & & \mathbf{R}_n & \\
        & & & & & \mathcal{R}
    \end{bmatrix}  \left(\pi \circ g\right). 
\end{align}

where \(\mathbf{B} = \bigoplus_{i=1}^{b} \mathbf{R}_n \).

\subsection{Overall Validation and Impact of FlashAttention} 
\label{appendix:sgd_rot_equiv}

In \Cref{fig:sgd_ablation}, we present the training loss when training GPT-2 with SGD without rotations, with global random rotations using FlashAttention, and with global random rotations without FlashAttention. In particular, we confirm two important observations:
\begin{itemize}
    \item Without FlashAttention (the setting we use for our experiments) the performances of SGD under global random rotation and under no rotations are identical. This validates that our experimental setting is behaving as expected.
    \item When we use FlashAttention with rotations, we observe a slight difference in performance. As explained in \Cref{subsec:train_in_rotated_space}, this is due to FlashAttention amplifying numerical errors from the application of the rotation. Interestingly, likely due to a slight regularization effect, it it increases training performance.
    % slows down performance at first but actually provides a small improvement to the loss at the end of training.
\end{itemize}

\begin{figure}[h!]
        \centering
        \includegraphics[width=0.7\linewidth]{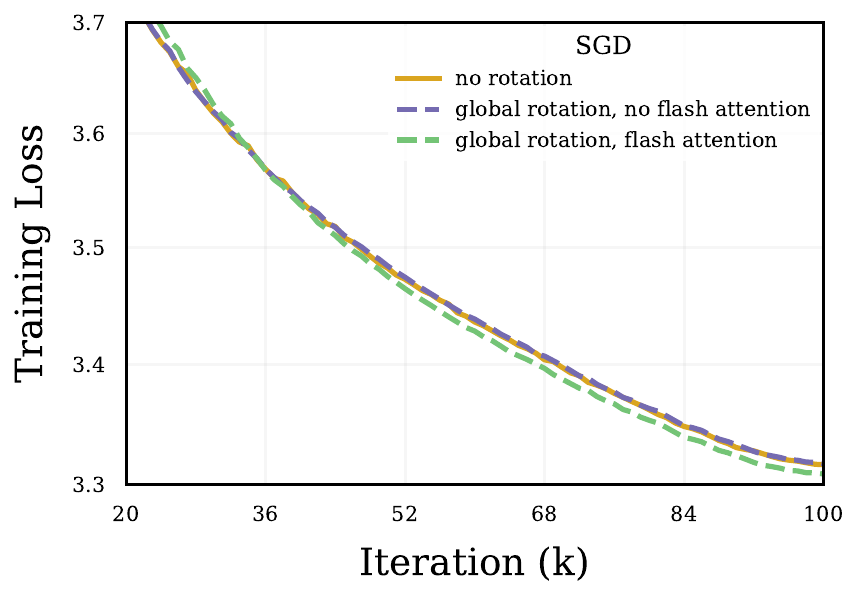}
        \caption{SGD performance when applying global random rotations, with and without FlashAttention.}
        \label{fig:sgd_ablation}
\end{figure}

\section{Experimental Details}
\label{appendix:experimental_details}
This section provides additional details about the hyperparameters used for the architecture mentioned in the paper, as well as their optimizer and rotations.

\subsection{Rotations Design Choices}

By default, for random rotations we fix the dimension of our rotation matrix $\mathbf{R}_n$ at 768 (which is the hidden dimension and thus makes residual rotations unnecessary for most rotation types). The matrix is sampled at the start of training, remains fixed throughout, and is shared across blocks the entire training process.

\paragraph{SVD Rotation.} Following \citep{zhao2024galore}, this is the only rotation that is dynamic rather than static. Specifically, we compute the full-rank SVD decomposition of the gradient for each layer every 250 steps (recommended frequency in \citep{zhao2024galore}).

\paragraph{Rotation in Transformers.} By default, many implementations store the query, key, and value parameters within a single linear layer. Thus, we split them to treat them as separate layers, reflecting the fundamental differences in how their parameters are involved in forward computations. Additionally, PyTorch stores parameters as tensors in the shape \texttt{(output\_dim, input\_dim)}, but embeddings are stored as lookup tables in the shape \texttt{(input\_dim, output\_dim)}. For output neuron and input neuron rotations to behave intuitively, we thus transpose embedding layers before and after rotations. 

\subsection{Architectures}
\paragraph{GPT2 (Transformer).} 

We trained a GPT-2 model with 124M parameters on the OpenWebText dataset \citep{Gokaslan2019OpenWeb} using a configuration designed for efficient pretraining. The model architecture includes 12 layers, 12 attention heads, and a 768-dimensional embedding space, with no bias in LayerNorm or Linear layers. We employed the AdamW optimizer with a peak learning rate of $6\times10^{-4}$, $\beta_1 = 0.9$, $\beta_2 = 0.95$, and a weight decay of $0.1$, applying gradient clipping of 1.0. Training ran for 100,000 iterations (or 30,000 for some smaller ablations), with learning cosine rate decay starting after a 2,000-iteration warm-up, decaying to a minimum of $6\times10^{-5}$. We used a sequence length of 1024 and micro batch size of 12 with gradient accumulation steps to simulate an effective batch size of 480 sequences. We additionally tried tuning  $\beta_2$ by using values $0.9$ and $0.99$. We  found that the base AdamW showed slightly better performance, but the globally rotated model's performance  decreased with both of these values, meaning further tuning will not close the  gap with the base model. All experiments were performed on four A100 80GB GPUs, leveraging mixed precision. Unless otherwise specified, all optimizer hyperparameters were shared across experiments and set to the default values specified in \citet{Karpathy2022}.

\paragraph{ViT (Vision Transformer).}

We trained a Vision Transformer (ViT) model on the ImageNet-1K dataset \citep{deng2009imagenet} using the SimpleViT architecture \citep{beyer2022betterplainvitbaselines}. The model consists of 12 layers, 6 attention heads, a hidden dimension of 384, and an MLP dimension of 1536, with a patch size of 16 and input image size of 224. The AdamW optimizer was employed with a learning rate of 0.001, $\beta_1 = 0.9$, $\beta_2 = 0.999$, $\epsilon = 10^{-8}$, and a weight decay of 0.1. We used a cosine learning rate schedule with 5 warm-up epochs. The training was conducted for 100 epochs with a batch size of 1024. All experiments were performed with mixed precision.

\paragraph{ResNet-50 (CNN).} We trained a ResNet-50 model \citep{he2016deep} on the ImageNet-1K dataset \citep{deng2009imagenet} using the AdamW optimizer. The optimizer was configured with a learning rate of 0.001, $\beta_1 = 0.9$, $\beta_2 = 0.999$, $\epsilon = 10^{-8}$, and a weight decay of 0.0001. We employed a cosine learning rate schedule with 5 warm-up epochs. The training ran for 100 epochs with a batch size of 256.

\subsection{Assumptions Estimation}

We now outline how we computed empirical estimations of assumptions in \Cref{sec:adequacy_existing_assumptions}.

\paragraph{$L_\infty$-bounded gradient.}

Algorithm \ref{algo:empirical_gradient_bound} describes the process we use to estimate the bound constant $\Tilde{C}$ of stochastic gradients under \( L_{\infty} \) norm, as detailed in \cref{sec:linfgrad}.

\begin{algorithm}
\caption{Empirical Gradient Bound Estimation for Adam}
\label{algo:empirical_gradient_bound}
    \begin{algorithmic}[1]
    \Require $T$: total number of iterations (1000)
    \Require $\w_{\rot}$: last checkpoint obtained by running Adam under rotation $\rot$
    \State Initialize $\Tilde{C} \gets 0$ \Comment{Maximum infinity norm of gradients}
    \For{$t \gets 1$ \textbf{to} $T$}
        \State Sample a minibatch $B_i$ \Comment{Select one minibatch}
        \State $\mathbf{g}_{B_i} \gets \nabla f_{B_i}^{(\rot)}(\w_{\rot})$ \Comment{Compute gradient for minibatch}
        \State $\Tilde{C}' \gets \|\mathbf{g}_{B_i}\|_\infty$ \Comment{Compute infinity norm of the gradient}
        \State $\Tilde{C} \gets \max(\Tilde{C}, \Tilde{C}')$ \Comment{Update the maximum gradient bound}
    \EndFor
    \State \Return $\Tilde{C}$ \Comment{Return the estimated gradient bound}
    \end{algorithmic}
\end{algorithm}

\paragraph{$(1,1)$-Norm.} Using the Hessian rows sampled from GPT-2 checkpoints that were trained under various rotations in \Cref{sec:blockdiaghess}, we estimate $\frac{\|\mathbf{H}\|_{(1,1)}}{d}$ by averaging the $L_1$ norm of sampled rows. While this could induce a large variance from the sampling of rows, we find that variations of the $L_1$ norms from rotations are fairly homogeneous across rows.

\section{Additional Results}
\label{appendix:ablations}
\subsection{Main Experiments}
We provide additional results from our main line of experiments.

% \paragraph{GPT2 (OpenWebText).}
% \TODO{results}

% \begin{figure}[t]
%     \centering
%     \begin{minipage}{0.5\textwidth}
%         \centering
%         \includegraphics[width=\linewidth]{figures/gpt2_train_loss.pdf}
%         \label{fig:gpt_results_train}
%     \end{minipage}%
%     \begin{minipage}{0.5\textwidth}
%         \centering
%         \includegraphics[width=\linewidth]{figures/gpt2_val_loss.pdf}
%         \label{fig:gpt_results_val}
%     \end{minipage}
%     \label{fig:gpt_results_appx}
% \end{figure}

\paragraph{ViT/S (ImageNet).}
\Cref{fig:vit_results_appx} extends the results from \Cref{fig:vit_results} with validation loss and accuracy

\begin{figure}[h!]
    \centering
    \begin{minipage}{.33\textwidth}
        \centering
        \includegraphics[width=\linewidth]{figures/losses/vit_train_loss.pdf}
        \label{fig:vit_results_train}
    \end{minipage}%
    \begin{minipage}{0.33\textwidth}
        \centering
        \includegraphics[width=\linewidth]{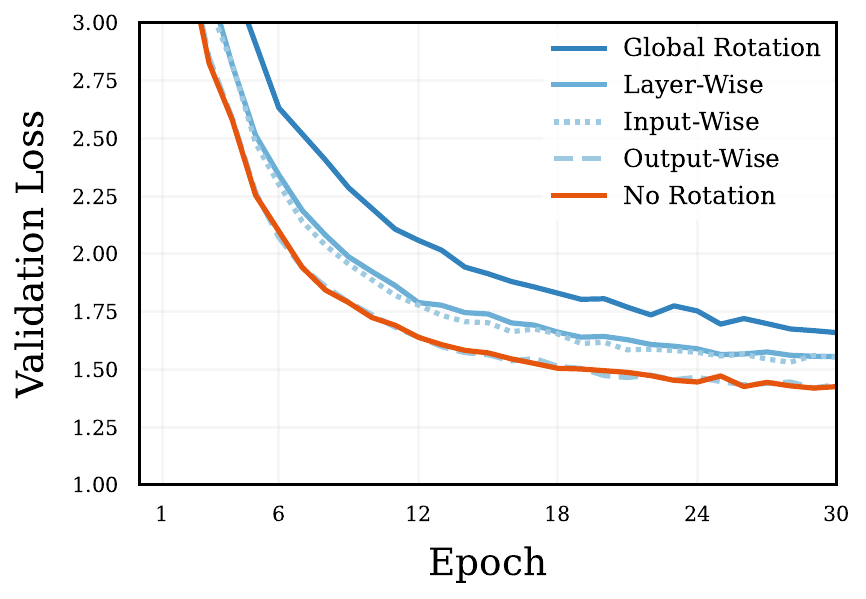}
        \label{fig:vit_results_val}
    \end{minipage}
        \begin{minipage}{0.33\textwidth}
        \centering
        \includegraphics[width=\linewidth]{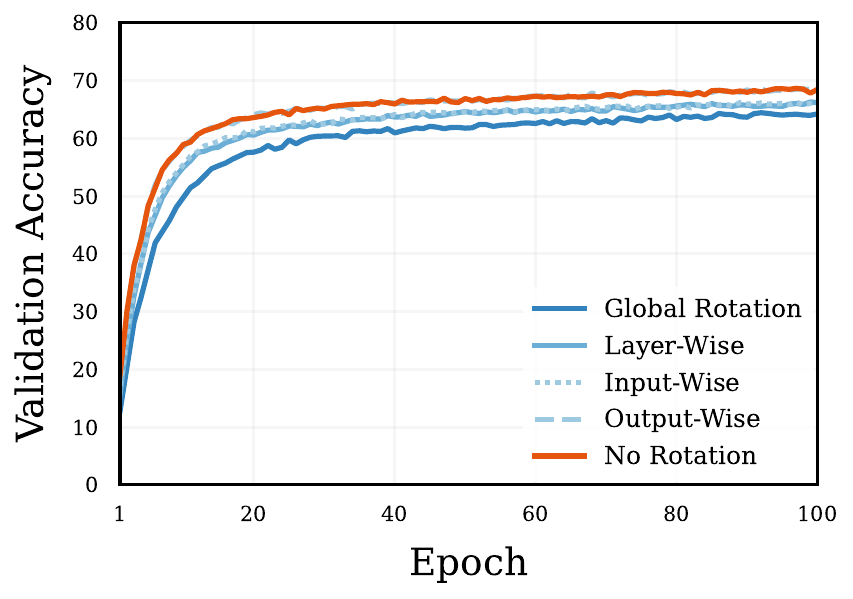}
        \label{fig:vit_results_acc}
    \end{minipage}
        \caption{SimpleViT - Imagenet training loss, validation loss and top-1 validation accuracy}
        \label{fig:vit_results_appx}
\end{figure}

\paragraph{ResNet50 (ImageNet).}
\Cref{fig:resnet_results} demonstrates that Adam maintains its performance well under rotational transformations for ResNets. This robustness to rotation implies that Adam gains little advantage from the standard basis structure in this setting. This finding aligns with the fact that SGD with extensive tuning can outperform Adam when training these networks.

\begin{figure}[h]
    \centering
    \begin{minipage}{.33\textwidth}
        \centering
        \includegraphics[width=\linewidth]{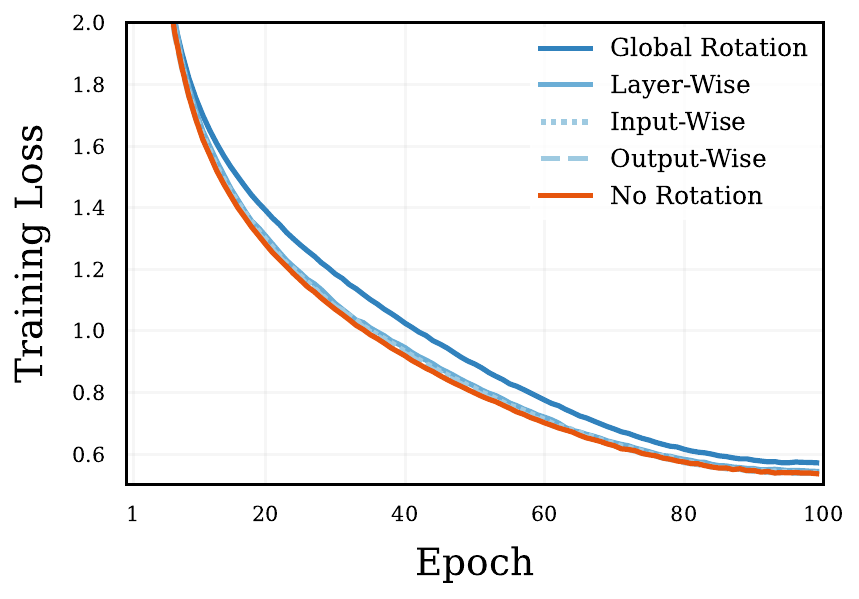}
        \label{fig:resnet_results_train}
    \end{minipage}%
    \begin{minipage}{0.33\textwidth}
        \centering
        \includegraphics[width=\linewidth]{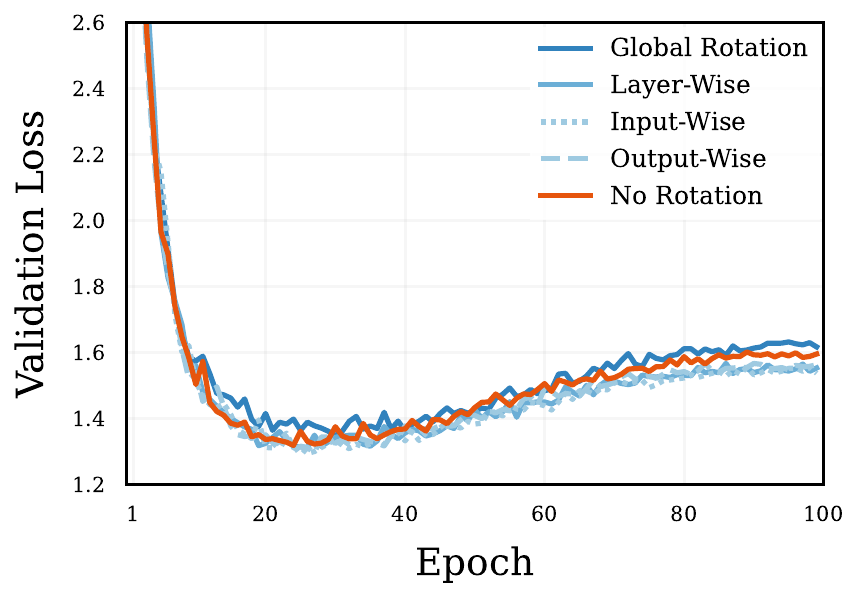}
        \label{fig:resnet_results_val}
    \end{minipage}
        \begin{minipage}{0.33\textwidth}
        \centering
        \includegraphics[width=\linewidth]{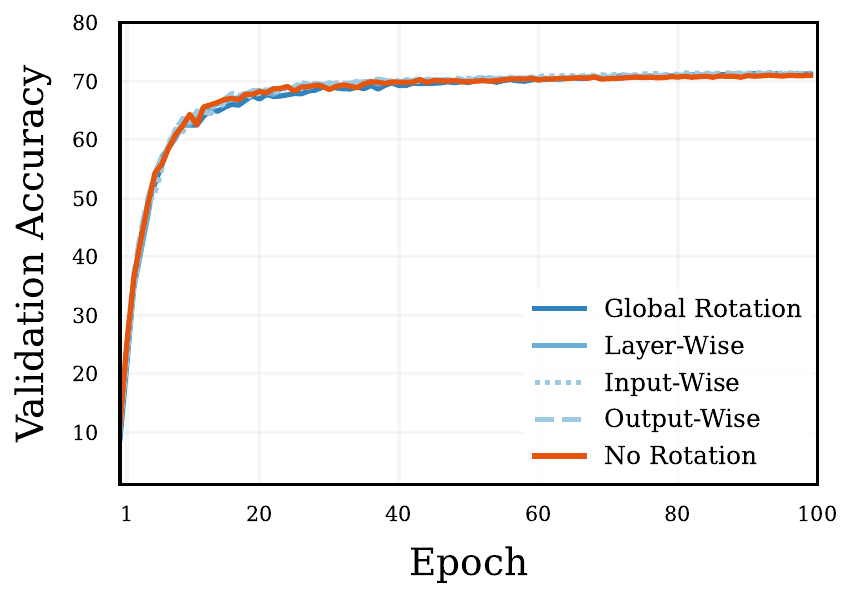}
        \label{fig:resnet_results_acc}
    \end{minipage}
    \caption{Training loss, validation loss and top 1 \% validation accuracy, when training a ResNet-50 with Adam on ImageNet across different scopes of rotations.}
    \label{fig:resnet_results}
\end{figure}

\subsection{Architecture Aware Rotation.}
\label{appendix:architecture_aware_rot}

We seek to identify whether certain transformer layer types are more sensitive to rotations than other, and contribute more to the overall performance degradation observed in \Cref{fig:gpt_results} when using layer-wise rotations.

\Cref{fig:arch_aware} shows the loss curves when rotating only one layer type at a time. We find that the performance degradation induced by layer-wise rotations is small for most layer types, seemingly balanced across these layers, with the exception of value and embedding layers.

Layerwise rotation of value layers seem to impact the loss more noticeably than with other layer types. In \Cref{fig:value}, we find that reducing the scope of rotations to output neuron wise does not improve the performance when rotating value layers.

The biggest drop of performances is observed for embedding layers, which we conjecture to be linked to the discrepancy in frequency across tokens. \Cref{fig:embedding} shows indeed that when rotating the embedding layer by output neuron (i.e., within weights corresponding to a same token) the degradation becomes unnoticeable. 

\begin{figure}[h]
    \centering
    \includegraphics[width=0.7\linewidth]{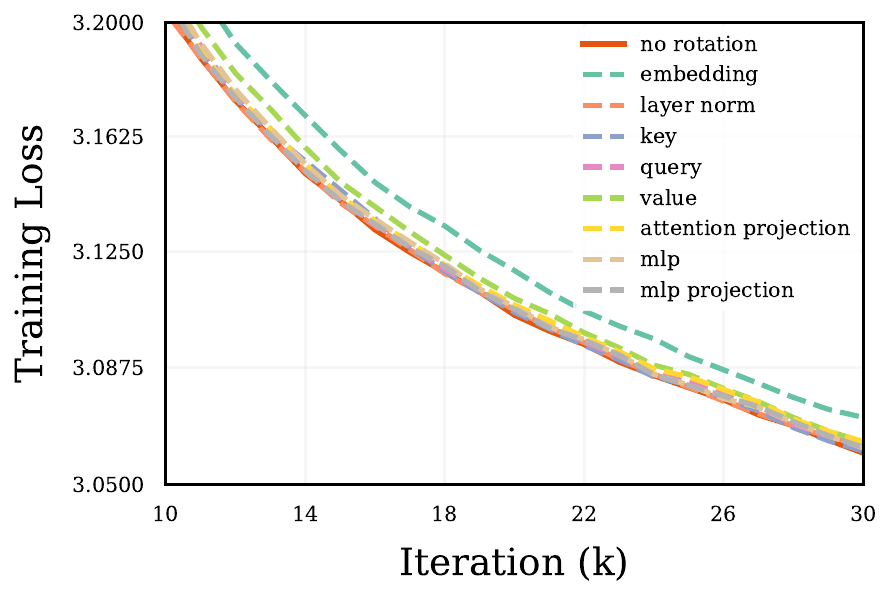}
    \caption{Layer-wise rotation applied to only specific layer types}
    \label{fig:arch_aware}
\end{figure}

\begin{figure}[!ht]
    \centering
    \begin{minipage}{.48\textwidth}
        \centering
        \includegraphics[width=\linewidth]{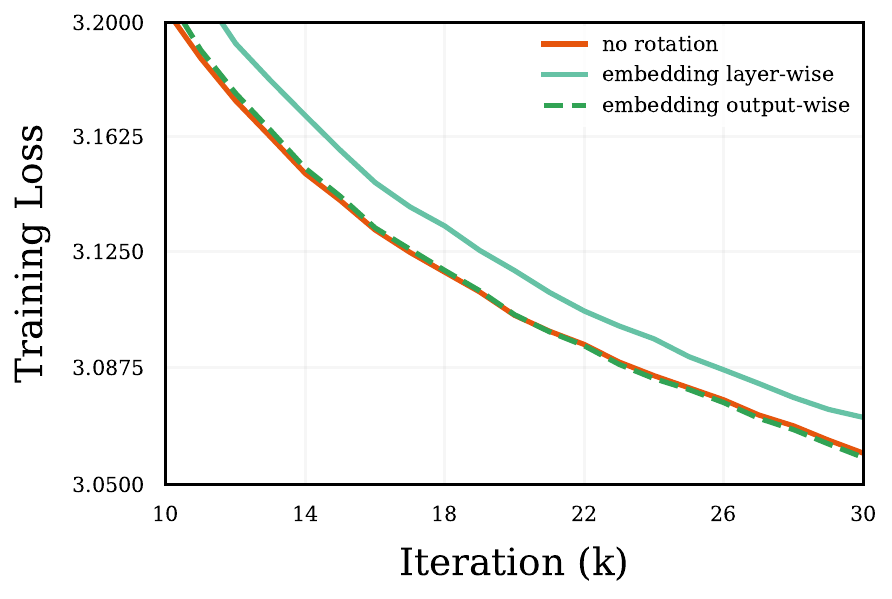}
        \caption{Layer-wise rotation and output-wise rotation on embedding layers only}
        \label{fig:embedding}
    \end{minipage}%
    \hfill
    \begin{minipage}{0.48\textwidth}
        \centering
        \includegraphics[width=\linewidth]{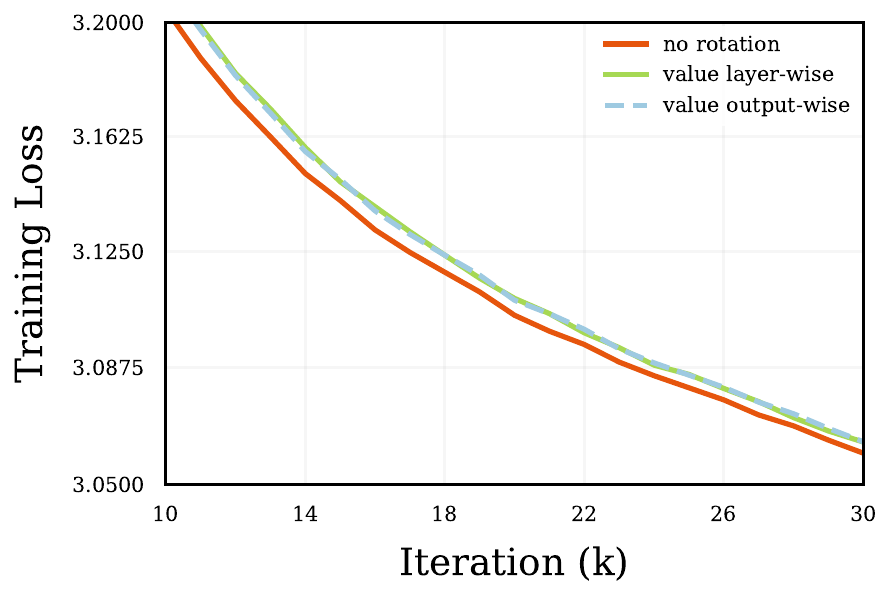}
        \caption{Layer-wise rotation and output-wise rotation on attention values only}
        \label{fig:value}
    \end{minipage}
\end{figure}

\subsection{Hessian Rows}
We use the end checkpoint of GPT-2 to sample rows from the Hessian in different rotated parameter spaces (see \Cref{sec:blockdiaghess}).

From \Cref{fig:11568597} to \Cref{fig:58790300}, we present the same figure as in \Cref{fig:hessian_row}, but for rows taken from different layer types, confirming that the behaviour we observed is consistent across parameter types. Except for embeddings, rows are always taken from the second Transformer block.

\begin{figure}[!ht]
    \centering
    \includegraphics[width=\linewidth]{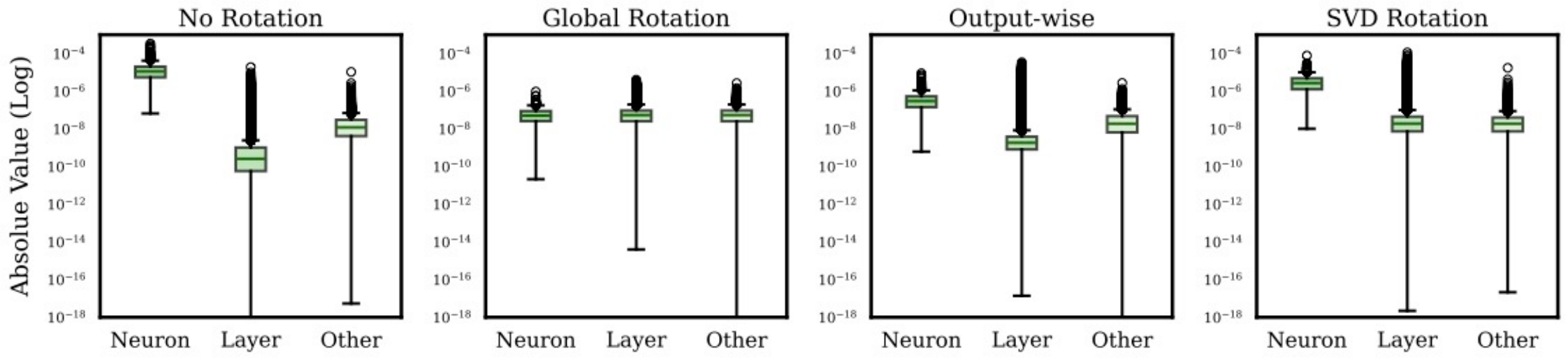}
    \caption{Hessian value distribution of a row in the embedding layer.}
    \label{fig:11568597}
\end{figure}

\begin{figure}[!ht]
    \centering
    \includegraphics[width=\linewidth]{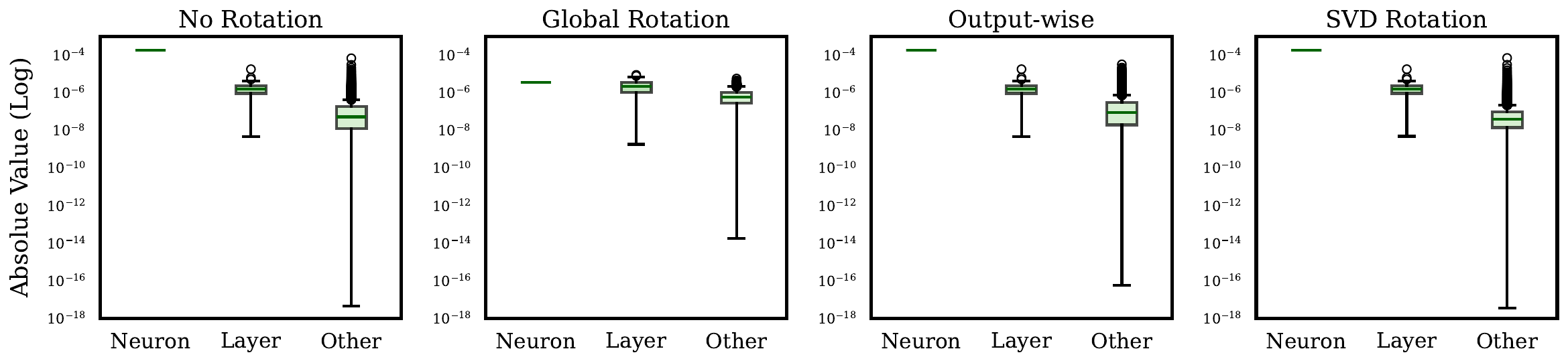}
    \caption{Hessian value distribution of a row in the second Transformer layer norm layer.}
    \label{fig:53578760}
\end{figure}
\begin{figure}[!ht]
    \centering
    \includegraphics[width=\linewidth]{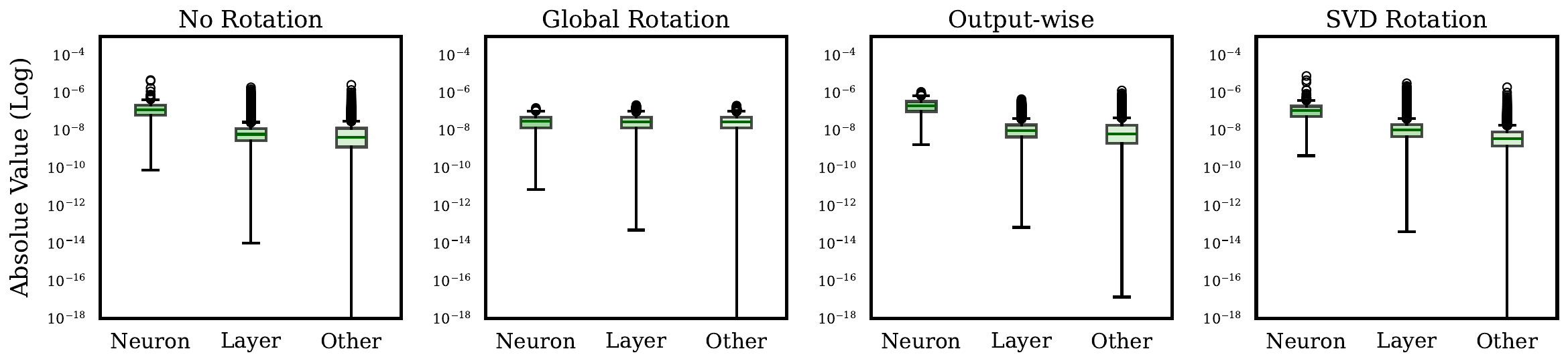}
    \caption{Hessian value distribution of a row in the second Transformer key layer.}
    \label{fig:53941818}
\end{figure}
\begin{figure}[!ht]
    \centering
    \includegraphics[width=\linewidth]{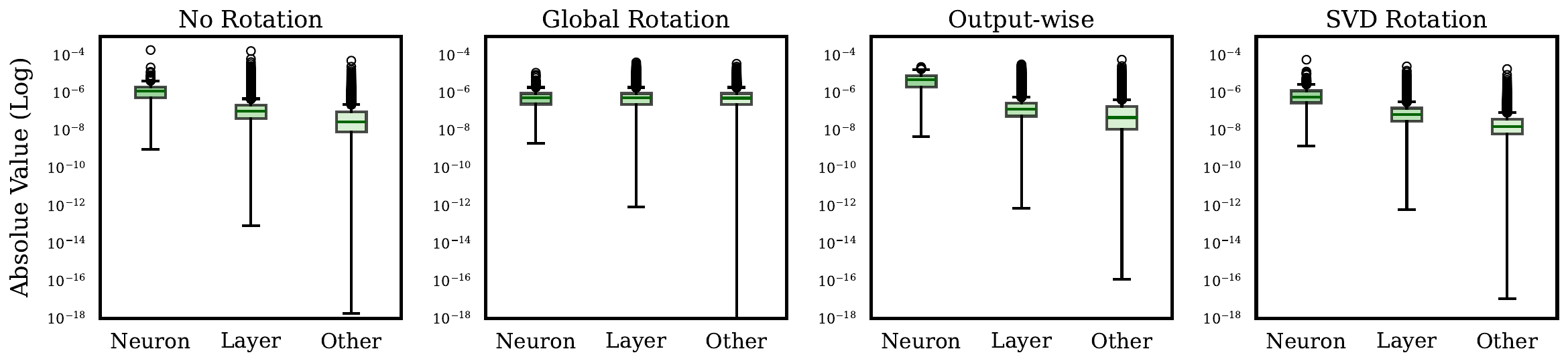}
    \caption{Hessian value distribution of a row in the second Transformer query layer.}
    \label{fig:54300741}
\end{figure}
\begin{figure}[!ht]
    \centering
    \includegraphics[width=\linewidth]{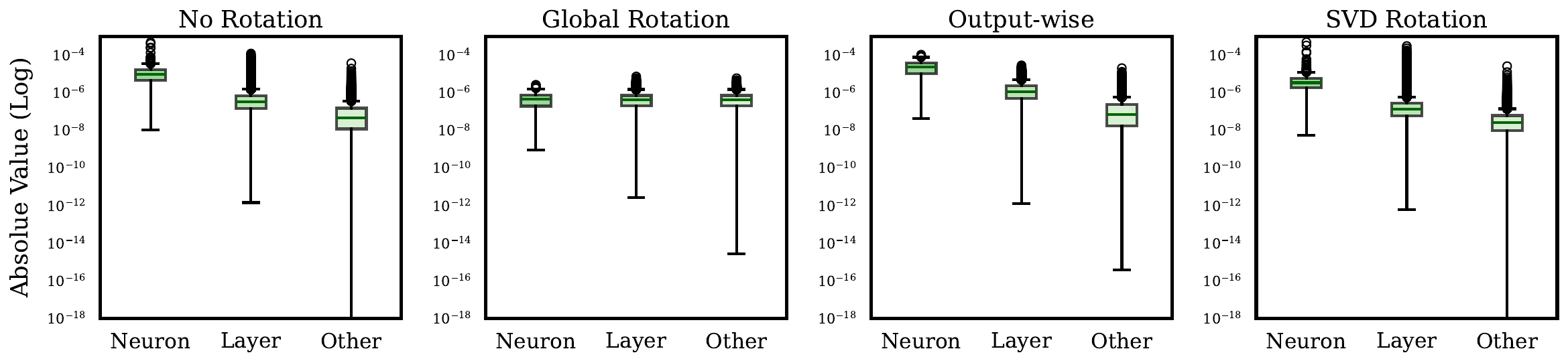}
    \caption{Hessian value distribution of a row in the second Transformer value layer.}
    \label{fig:54915532}
\end{figure}
\begin{figure}[!ht]
    \centering
    \includegraphics[width=\linewidth]{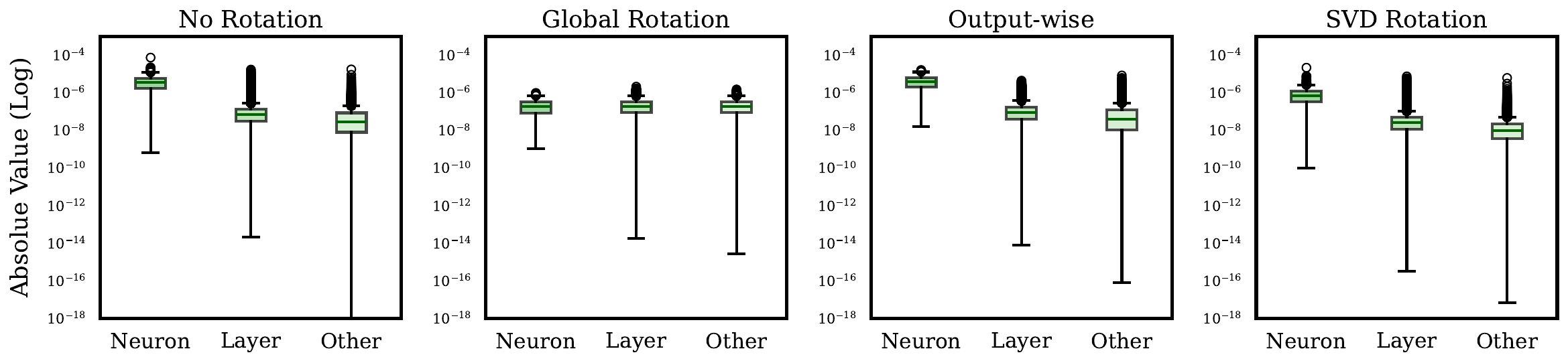}
    \caption{Hessian value distribution of a row in the second Transformer mlp layer.}
    \label{fig:57847212}
\end{figure}
\begin{figure}[!ht]
    \centering
    \includegraphics[width=\linewidth]{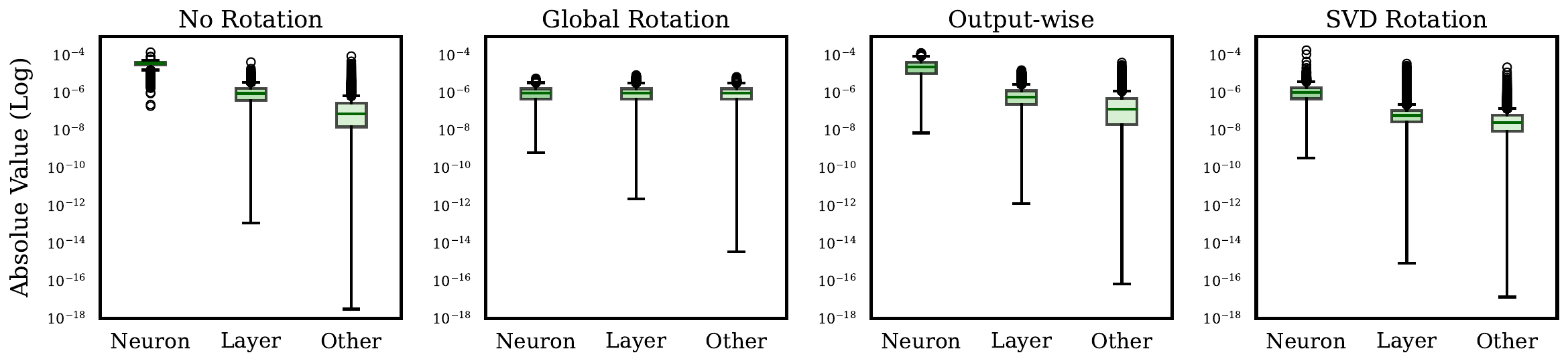}
    \caption{Hessian value distribution of a row in the second Transformer mlp projection layer.}
    \label{fig:58790300}
\end{figure}

\Cref{fig:97929821} shows a row in the attention projection layer of the $8$-th transformer block, showing our observations seem also consistent across depth.

\begin{figure}[!ht]
    \centering
    \includegraphics[width=0.5\linewidth]{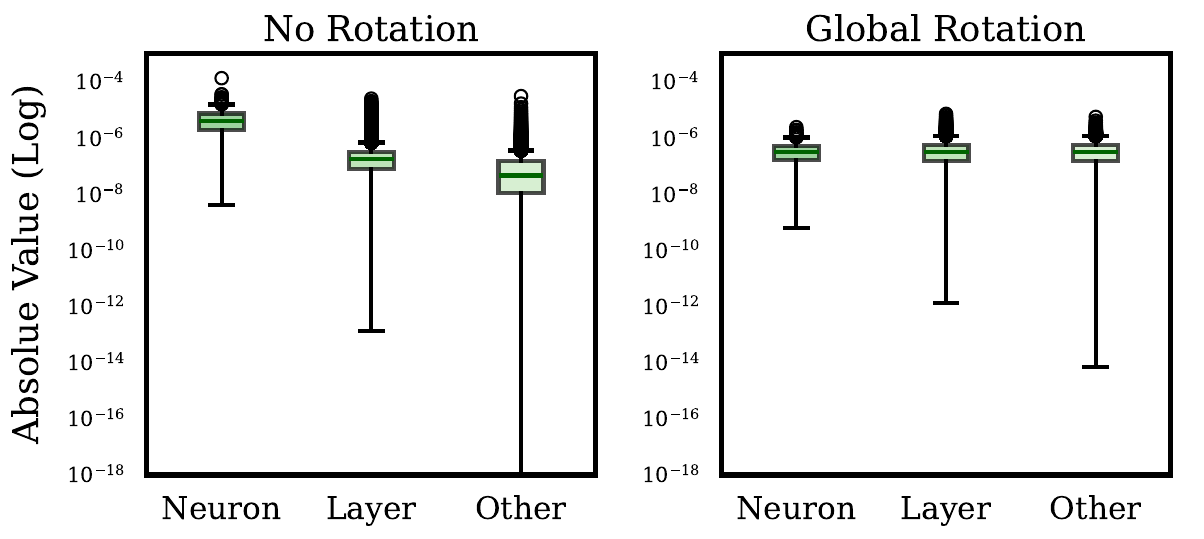}
    \caption{Hessian value distribution of a row in the eighth Transformer attention projection layer.}
    \label{fig:97929821}
\end{figure}

\Cref{fig:rotated_55680281} uses checkpoints trained with the same rotations as the one applied to the Hessian. We find the same behaviour for no rotations, global and output-wise, but we find that with the SVD-rotated checkpoints, there is increased variance in the Hessian values outside of the layer.

\begin{figure}[!ht]
    \centering
    \includegraphics[width=\linewidth]{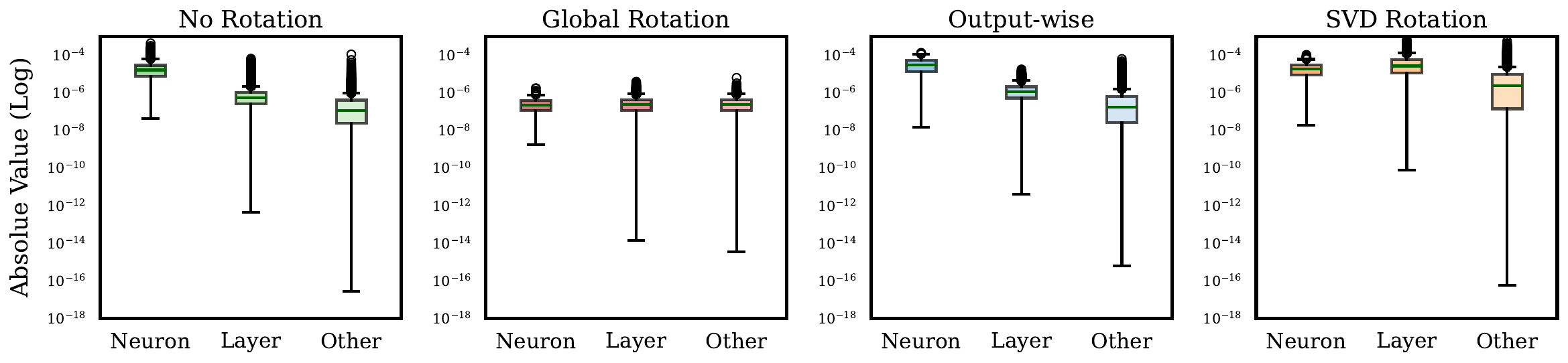}
    \caption{Hessian value distribution of a row in the second Transformer attention projection layer from checkpoints that are \textit{trained with different rotations}.}
    \label{fig:rotated_55680281}
\end{figure}

\subsection{Update orthogonality}
\label{appendix:update_orthogonality}

We aim to measure the orthogonality of the update. Since weight decay is applied directly in parameter space rather than to the gradient, the update rule (ignoring the bias corrections) becomes:
\begin{equation}
    \mathbf{W}_{t+1} - \mathbf{W}_t = -\alpha_t \mathbf{R}^\top \frac{\mathbf{M}_t}{\sqrt{\mathbf{V}_t} + \epsilon} - \alpha_t \lambda \mathbf{W}_t,
\end{equation}
where 
\begin{align*}
\mathbf{M}_t &= \beta_1 \mathbf{M}_{t-1} + (1 - \beta_1) \mathbf{R} \nabla \mathcal{L}(\mathbf{W}), \\
\quad \mathbf{V}_t &= \beta_2 \mathbf{V}_{t-1} + (1 - \beta_2)  \left( \mathbf{R}\nabla \mathcal{L}(\mathbf{W}) \circ \mathbf{R}\nabla \mathcal{L}(\mathbf{W}) \right).
\end{align*}

To isolate the effective update direction, we rewrite the expression as:
\begin{equation}
    \frac{\mathbf{W}_{t+1} - (1 - \alpha_t \lambda) \mathbf{W}_t}{-\alpha_t} = \mathbf{R}^\top \frac{\mathbf{M}_t}{\sqrt{\mathbf{V}_t} + \epsilon} := \mathbf{A}.
\end{equation}

\textbf{Coefficient of variation.} Suppose matrix $\mathbf{A}$ has singular values $\{s_i\}$. For a scale-invariant measure of orthogonality, we minimize the normalized root mean square deviation between singular values and a constant,
\[\min_\alpha \frac{1}{\mu} \sqrt{\frac{1}{n} \sum_i (s_i - \alpha )^2,}\]
where $\mu_s = \frac{1}{n} \sum_i s_i$ is the mean of $\{s_i\}$. The normalization facilitates the comparison between data with different scales. The solution for this objective is the \textit{coefficient of variation (CV)}, defined as
\[
\text{CV($s_i$)} = \frac{\sigma_s}{\mu_s},
\] 

where $\sigma_s$ is the standard deviation of the singular values. CV captures the \emph{relative dispersion} of the singular values and is invariant to uniform rescaling of $\mathbf{A}$. This makes it especially suitable for comparing updates or transformations that differ in magnitude but share underlying structure. Since an orthogonal matrix has all singular values equal to 1, a lower CV indicates that the singular values are more tightly clustered, suggesting that $\mathbf{A}$ is closer to being orthogonal up to a global scaling. 

In \Cref{fig:orthogonality_avg_layer}, we provided a mean variation of the singular values in the update for each layer type, averaged across the depth of the network using GPT-2. We now expand on this and show the variation for each individual weight matrix in the network. We display the coefficient of variation of the singular values of each linear layer's update. Each of the GPT-2 model's twelve encoder layer has six linear layers, four in the attention layer after splitting (to compute $\mathbf{Q,K,V}$ and the output projection $\mathbf{O}$) and two in the MLP block (labeled $\mathbf{W}_1$ and $\mathbf{W}_2$). We compute the SVD of each update, and compute  the coefficient of variation of the singular values and show the results in 
\Cref{fig:full_cov_50000}. For completeness, we also show this result at the beginning  and end of training in \Cref{fig:full_cov_0} and \Cref{fig:full_cov_100000} respectively. The trend is clearer and steadier at the end of training.

\textbf{Other metrics.} 
In our experimentation, we considered other metrics to measure orthogonality, with each finding similar results.

One alternative method is to measure how far the scaled singular values are from 1:
\[\frac{1}{n}\sum_{i=1}^n (\alpha s_i  -  1)^2\] where $\alpha$ is a scaling factor shared across the singular values $s_i$ of the update. For a set of singular values $\{s_1,\ldots,s_n\}$, the optimal scaling factor is 
\[\alpha^* = \sum_{i=1}^n\frac{s_i}{\sum_{i=1}^n s_i^2}.\] We display measures at the start, middle, and end of training in \Cref{fig:layer_avg_ss_scaled} respectively. We found this to be strongly correlated with the coefficient of variation of $\{s_i\}$.

Similarly, another measure of semi-orthogonality is how close $\mathbf{AA}^\top$ is to the identity matrix $\mathbf{I}$. The objective being 

\[ \min_\alpha \frac{1}{\mu} \sqrt{\frac{1}{n} \| \mathbf{A A}^\top - \alpha \mathbf{I}\|_F^2}= \frac{\sigma_\lambda}{\mu_\lambda} = \text{CV}(\lambda),\]

where $\lambda = s_i^2$ are the eigenvalues of $\mathbf{AA}^\top$, and $\sigma_\lambda$ and $\mu_\lambda$ are the standard deviation and mean of $\lambda$.

To see this, since $ \mathbf{A  A}^\top$ is symmetric, so it has an eigendecomposition by the Spectral theorem, i.e. $\mathbf{ A  A}^\top =  \mathbf{Q}\boldsymbol{\Lambda} \mathbf{Q}^{-1}$ where $ \mathbf{Q}$ is an orthogonal matrix. Then, 
\begin{align*}
 \mathbf{Q}\boldsymbol{\Lambda} \mathbf{Q}^{-1} -  \alpha \mathbf{I}\\
 \mathbf{Q}\boldsymbol{\Lambda} \mathbf{Q}^{-1} -  \alpha \mathbf{Q Q}^{-1}\\
 \mathbf{Q}\boldsymbol{\Lambda} \mathbf{Q}^{-1} -  \alpha \mathbf{Q Q}^{-1}\\
 \mathbf{Q}(\boldsymbol{\Lambda} -  \alpha \mathbf{I}) \mathbf{Q}^{-1}\\
\end{align*}
Then, using the circulant property of the trace, we have
\newcommand{\Q}{\mathbf{Q}}
\begin{align*}
\| \Q(\boldsymbol{\Lambda} -  \alpha \mathbf{I}) \Q^{-1}\|_F^2 & = \Tr\left(( \Q(\boldsymbol{\Lambda} -  \alpha \mathbf{I}) \Q^{-1})^\top \Q(\boldsymbol{\Lambda} -  \alpha \mathbf{I}) \Q^{-1}\right)\\
& = \Tr\left( \Q(\boldsymbol{\Lambda} -  \alpha \mathbf{I})^\top \Q^{-1} \Q(\boldsymbol{\Lambda} -  \alpha\mathbf{I}) \Q^{-1}\right)\\
& = \Tr\left( \Q(\boldsymbol{\Lambda} -  \alpha \mathbf{I})^\top(\boldsymbol{\Lambda} -  \alpha \mathbf{I}) \Q^{-1}\right)\\
& = \Tr\left( \Q^{-1} \Q(\boldsymbol{\Lambda} -  \alpha \mathbf{I})^\top(\boldsymbol{\Lambda} -  \alpha \mathbf{I})\right)\\
& = \Tr\left((\boldsymbol{\Lambda} -  \alpha \mathbf{I})^\top(\boldsymbol{\Lambda} -  \alpha \mathbf{I})\right)\\
& = \sum_{i=1}^n(\lambda_i - \alpha)^2\\
& = n \sigma_\lambda ^2 ~~~~~~~~~\text{ Substituting optimal $\alpha^*$}
\end{align*}\\

The optimal $\alpha^*$ in this case is $\mu_\lambda$. The normalization by mean assures scale invariance, and yields a coefficient of variation of $\lambda$. We show this metric in \Cref{fig:layer_eig_cov}. 

Instead, if the scaling factor is used on the symmetric matrix $\mathbf{AA}^\top$, we have the objective

\[ \min_\alpha \sqrt{\frac{1}{n} \|\mathbf \alpha \mathbf{AA}^\top - \mathbf{I}\|_F^2},\]

which is already scale invariant and yields the optimal scaling factor $\alpha^* = \sum_{i=1}^n\frac{\lambda_i}{\sum_{i=1}^n \lambda_i^2}$. We also plot this metric in  \Cref{fig:layer_eig_ss}. 

Additionally, \citet{liu2025muonscalable} use a metric inspired by the signal processing literature called the SVD entropy to study Muon updates, which is defined as follows.
\[H(s) = -\frac{1}{\log(n)}\sum_{i=1}^n\frac{s_i^2}{\sum_{j=1}^n s_j^2}\log\left(\frac{s_i^2}{\sum_{j=1}^n s_j^2}\right)\]
We also computed this metric, and it again strongly correlated with the coefficient of variation. We display measures at the start, middle, and end of training in \Cref{fig:layer_avg_entropy} and  note that larger is better for this metric, in contrast to the others we consider. While all metrics showed roughly the same trends, we focus on the coefficient of variation of singular values for simplicity.

   \begin{figure*}[p]
            \centering
        \includegraphics[width=\linewidth]{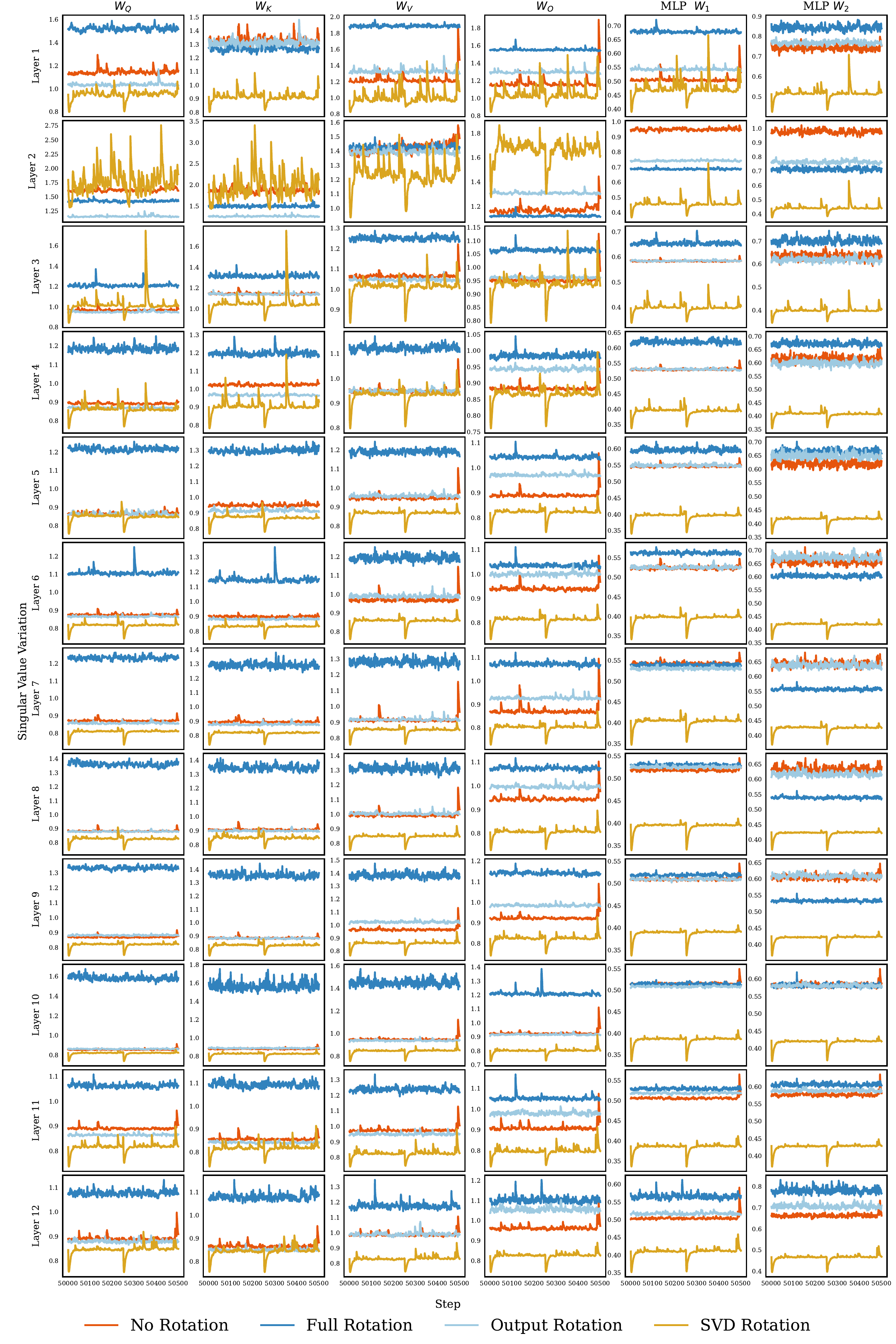}
        \caption{Singular value variation during training. We measure the variation of the singular values of each update of AdamW under various rotations for every linear layer in  each GPT2 encoder block.  While not universal, we find that a  majority of the  time, the SVD rotations  leads  to the lowest variation, while the global rotations leads to the highest.  Additionally, we see that recomputing the SVD matrices (at 50000  and 50250 steps) leads to a downward spike  in the variation.}
        \label{fig:full_cov_50000}        
    \end{figure*}

    \begin{figure*}[p]
            \centering
        \includegraphics[width=\linewidth]{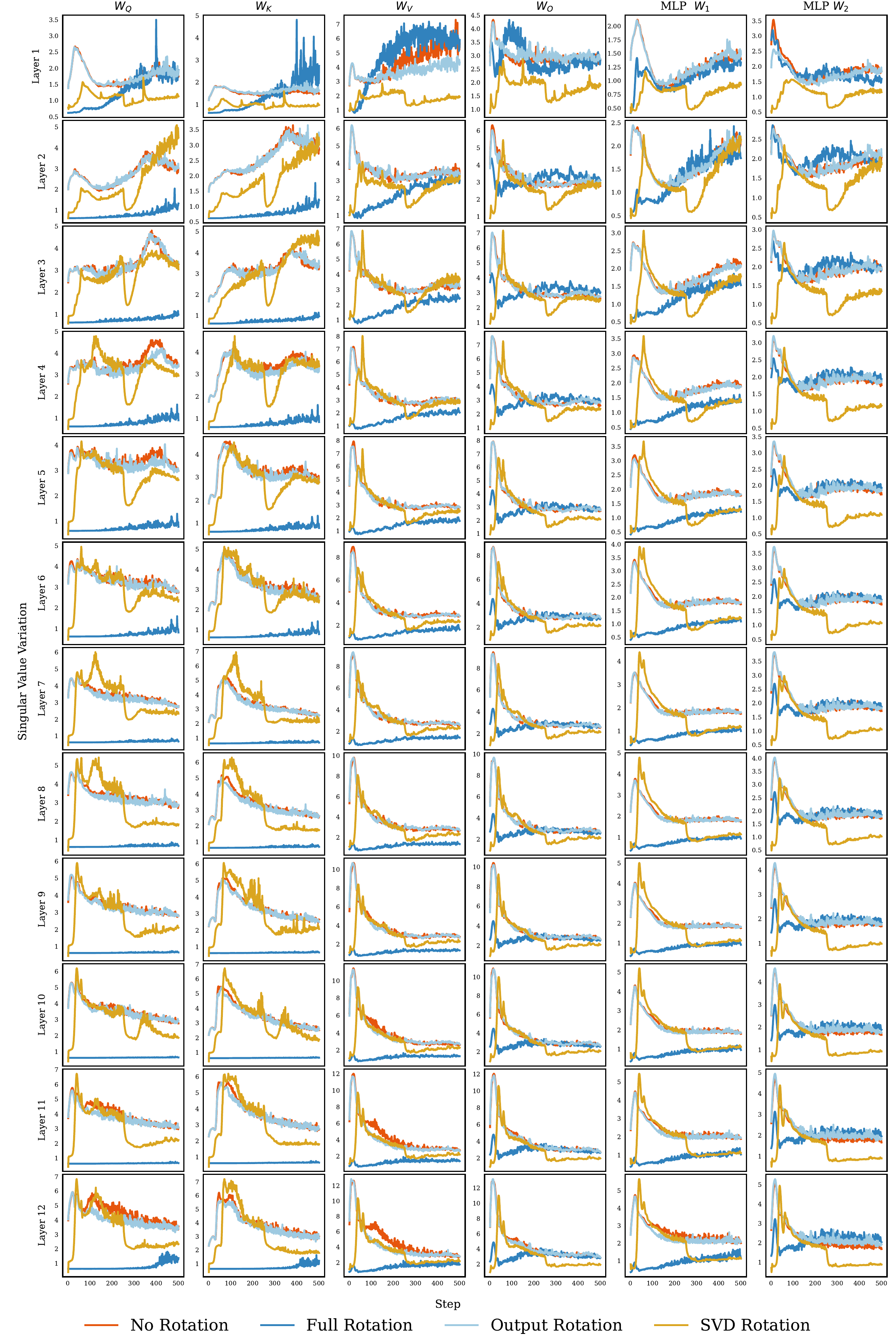}
        \caption{Singular value variation at the beginning of training. While the variation increases at initialization, we see it begin to trend downwards, notably after the SVD is recomputed.}
        \label{fig:full_cov_0}        
    \end{figure*}
        \begin{figure*}[p]
            \centering
        \includegraphics[width=\linewidth]{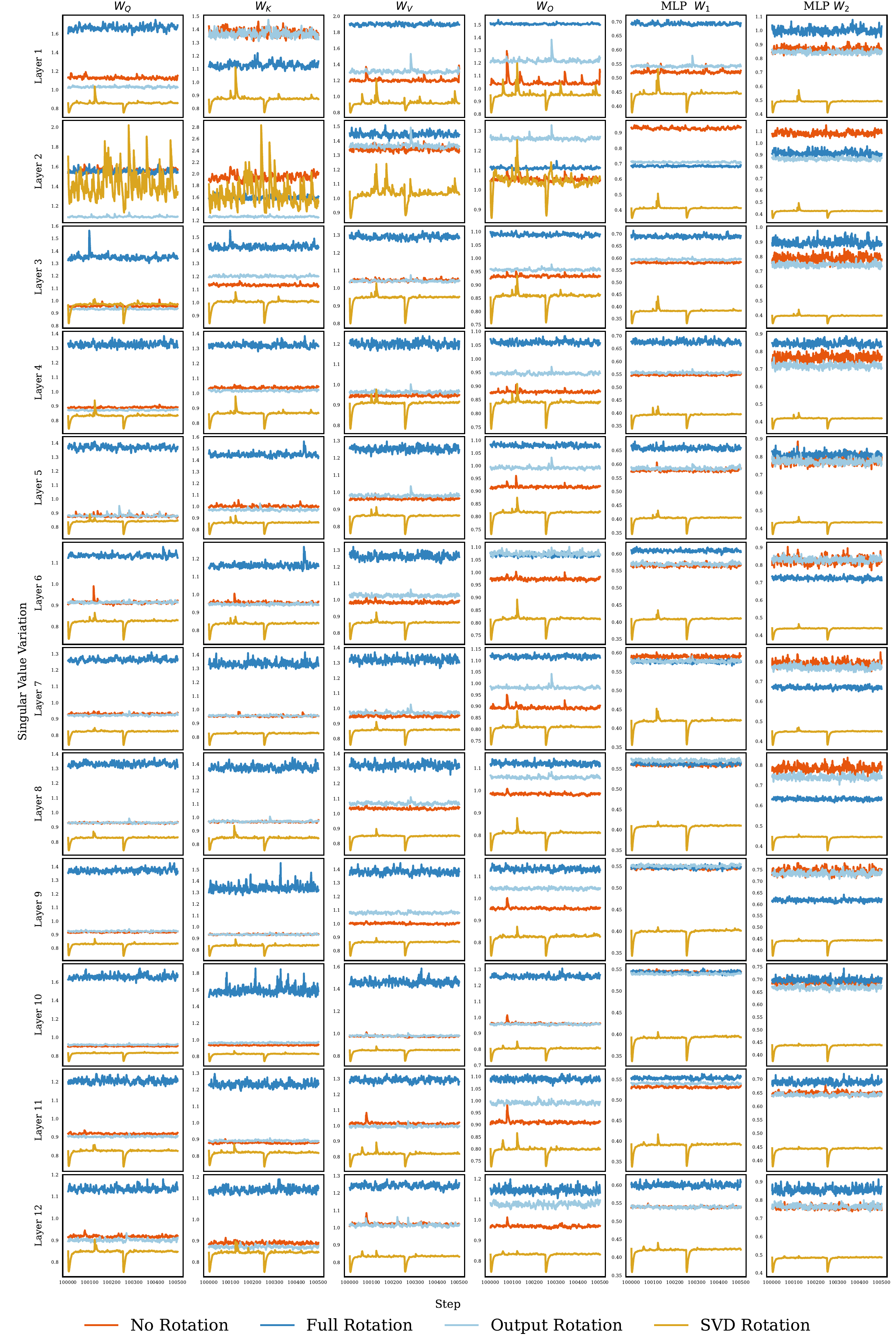}
        \caption{Singular value variation at the end of training. Except for a few layers, we see the variation hold relatively stably aside from  when the SVD is recomputed.}
        \label{fig:full_cov_100000}        
    \end{figure*}

    \begin{figure*}[p]
        \centering
    \includegraphics[trim={0 1.3cm 0 0}, clip, width=\linewidth]{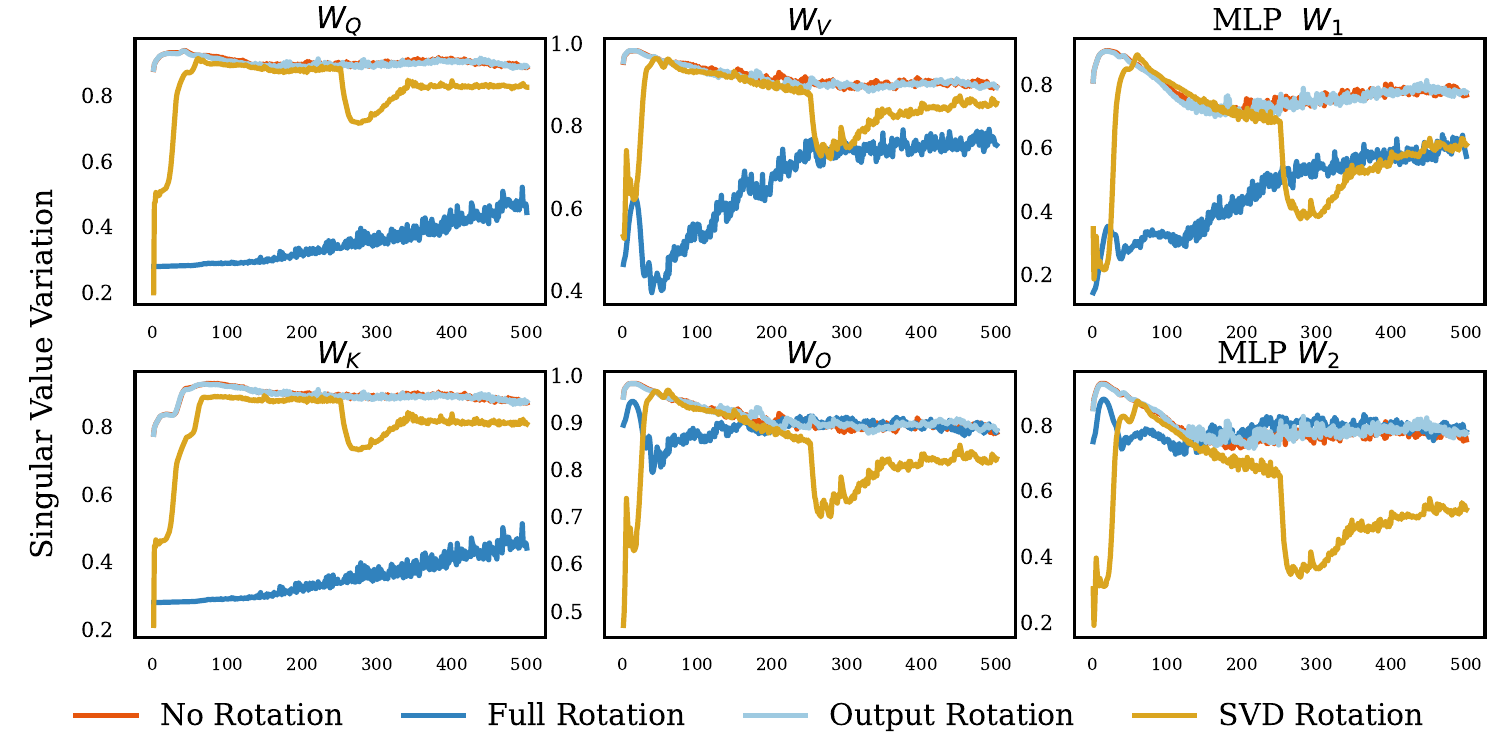}
    \includegraphics[trim={0 1.3cm 0 0}, clip,width=\linewidth]{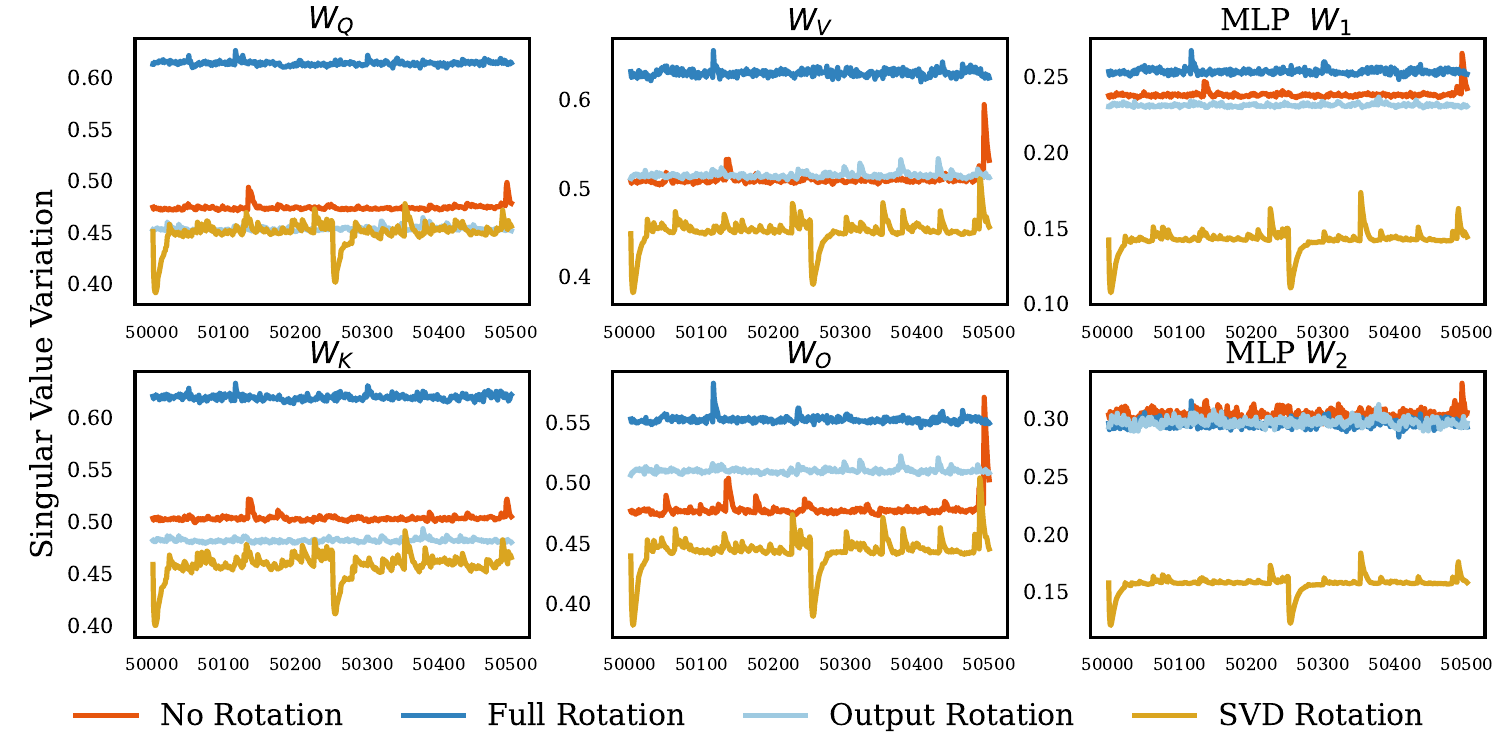}
    \includegraphics[trim={0 0 0 0}, clip,width=\linewidth]{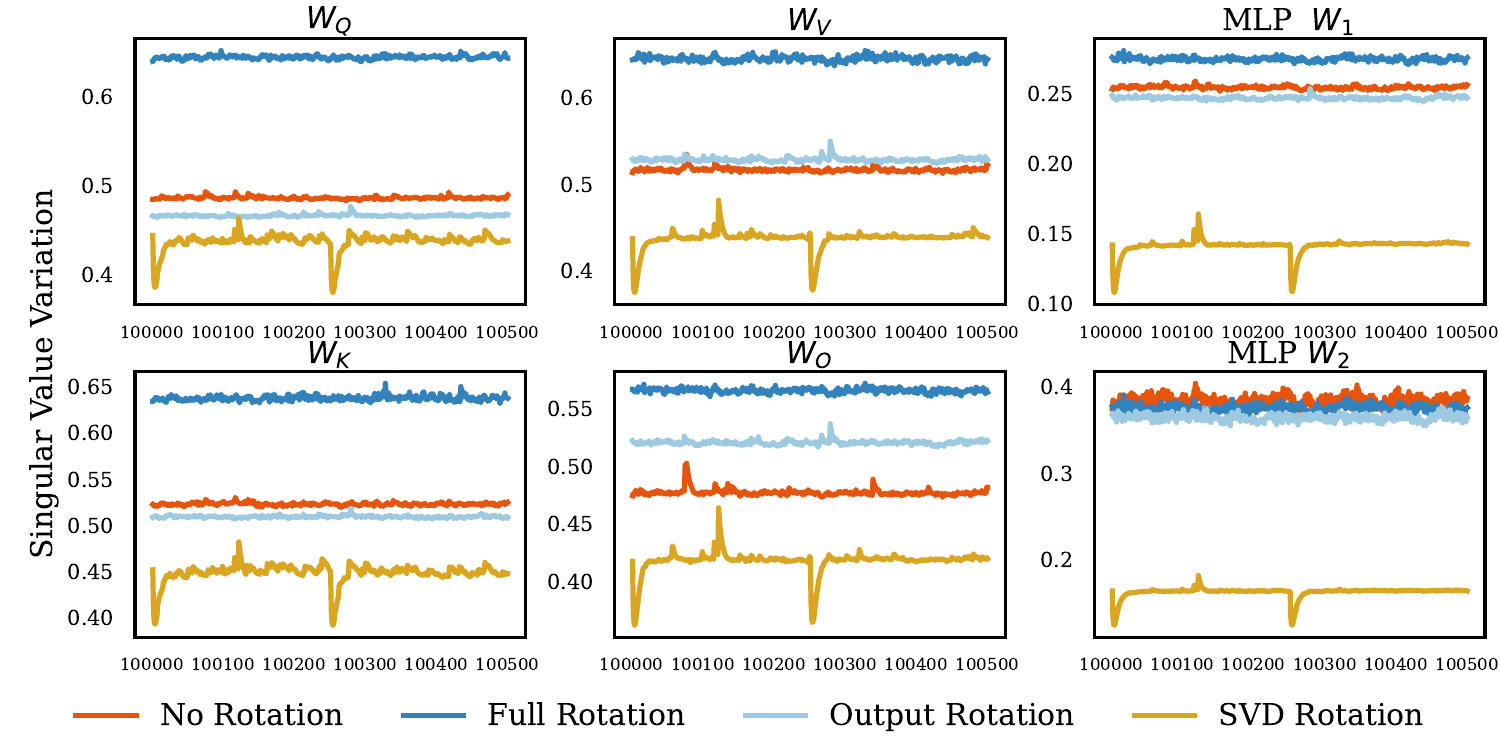}
    \caption{An alternative variation metric for  singular values  described in \Cref{appendix:update_orthogonality} throughout training, averaged over network depth. We see similar results to  the coefficient of variation.}
    \label{fig:layer_avg_ss_scaled}        
\end{figure*}

   \begin{figure*}[p]
        \centering
    \includegraphics[trim={0 1.3cm 0 0}, clip, width=\linewidth]{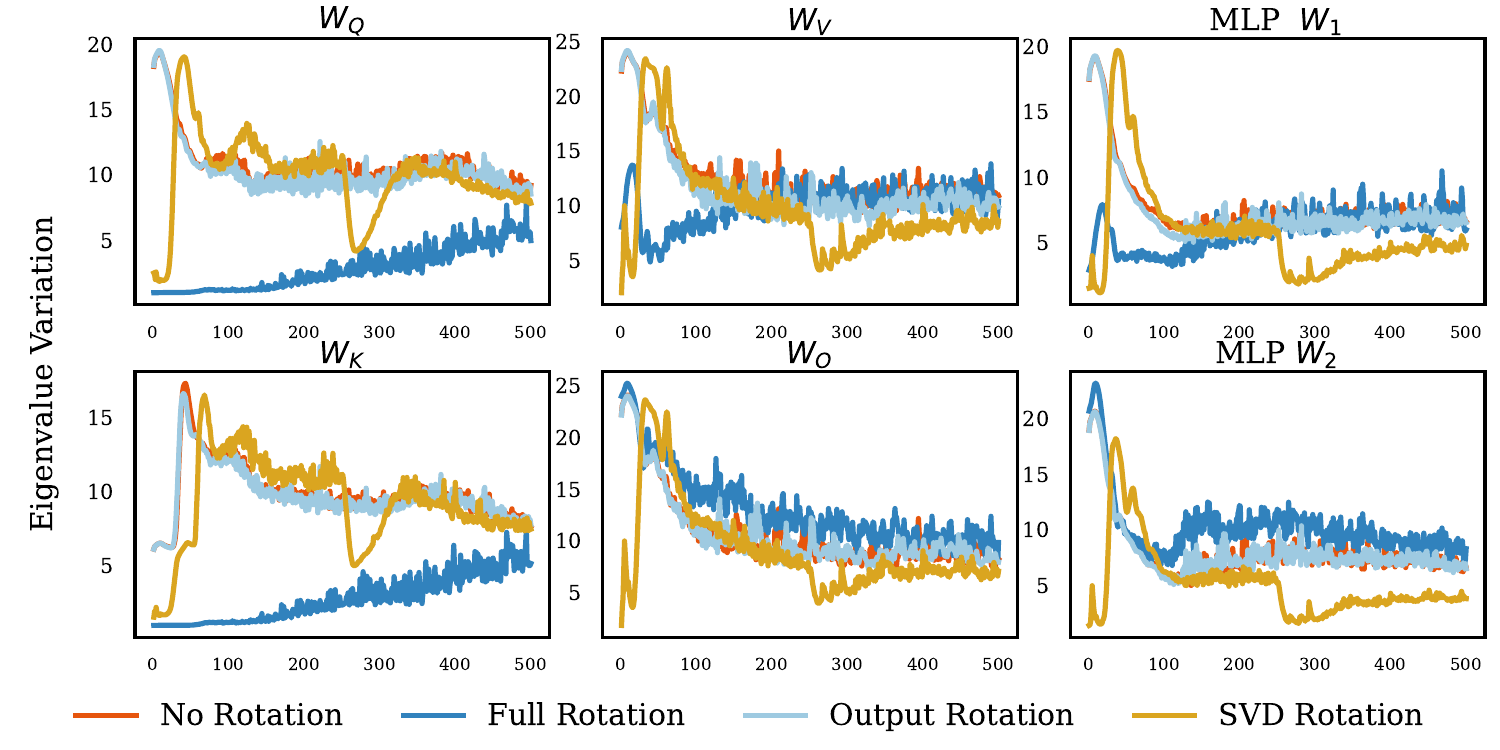}
    \includegraphics[trim={0 1.3cm 0 0}, clip,width=\linewidth]{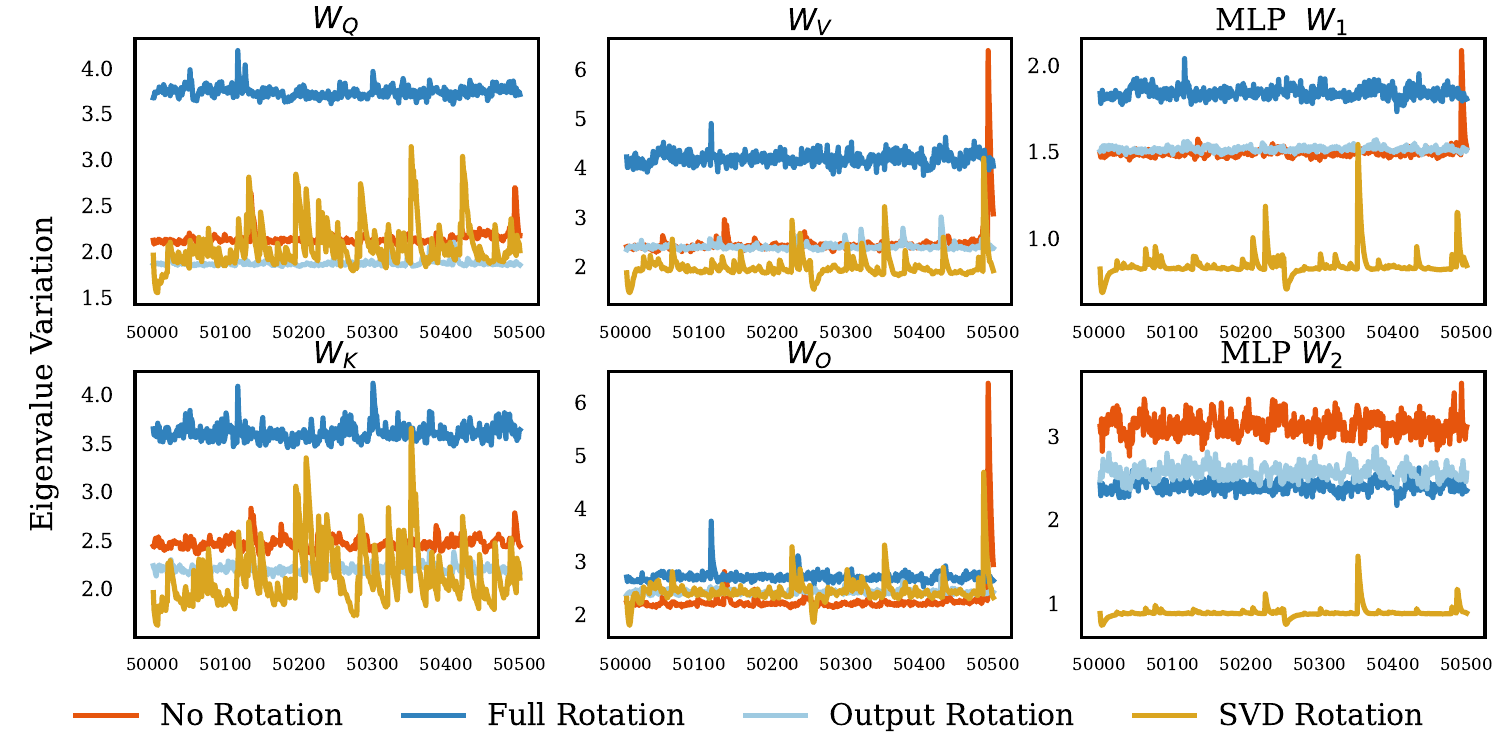}
    \includegraphics[trim={0 0 0 0}, clip,width=\linewidth]{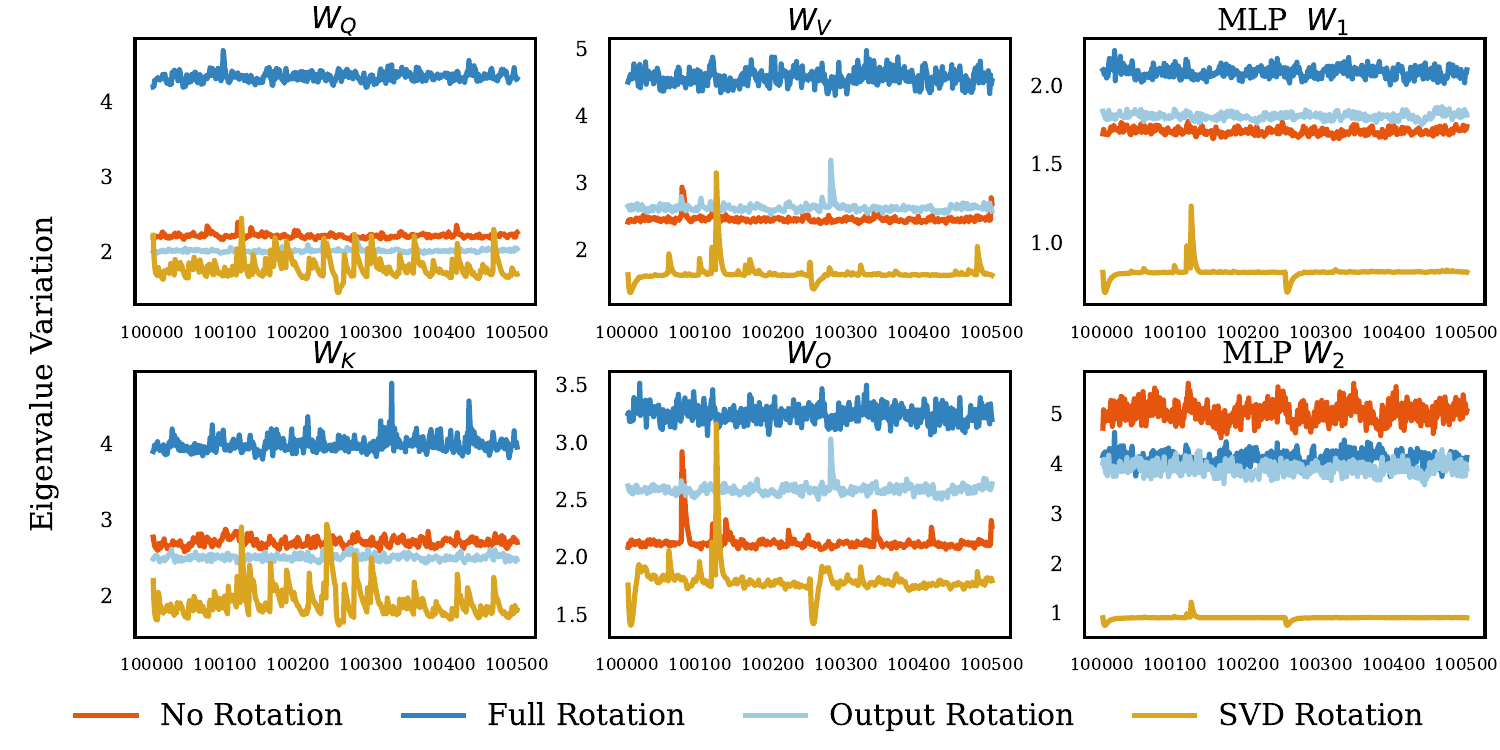}
    \caption{The coefficient of variation of the eigenvalues described in \Cref{appendix:update_orthogonality} throughout training, averaged over network depth. We see similar results to the coefficient of variation.}
    \label{fig:layer_eig_cov}        
\end{figure*}

   \begin{figure*}[p]
        \centering
    \includegraphics[trim={0 1.3cm 0 0}, clip, width=\linewidth]{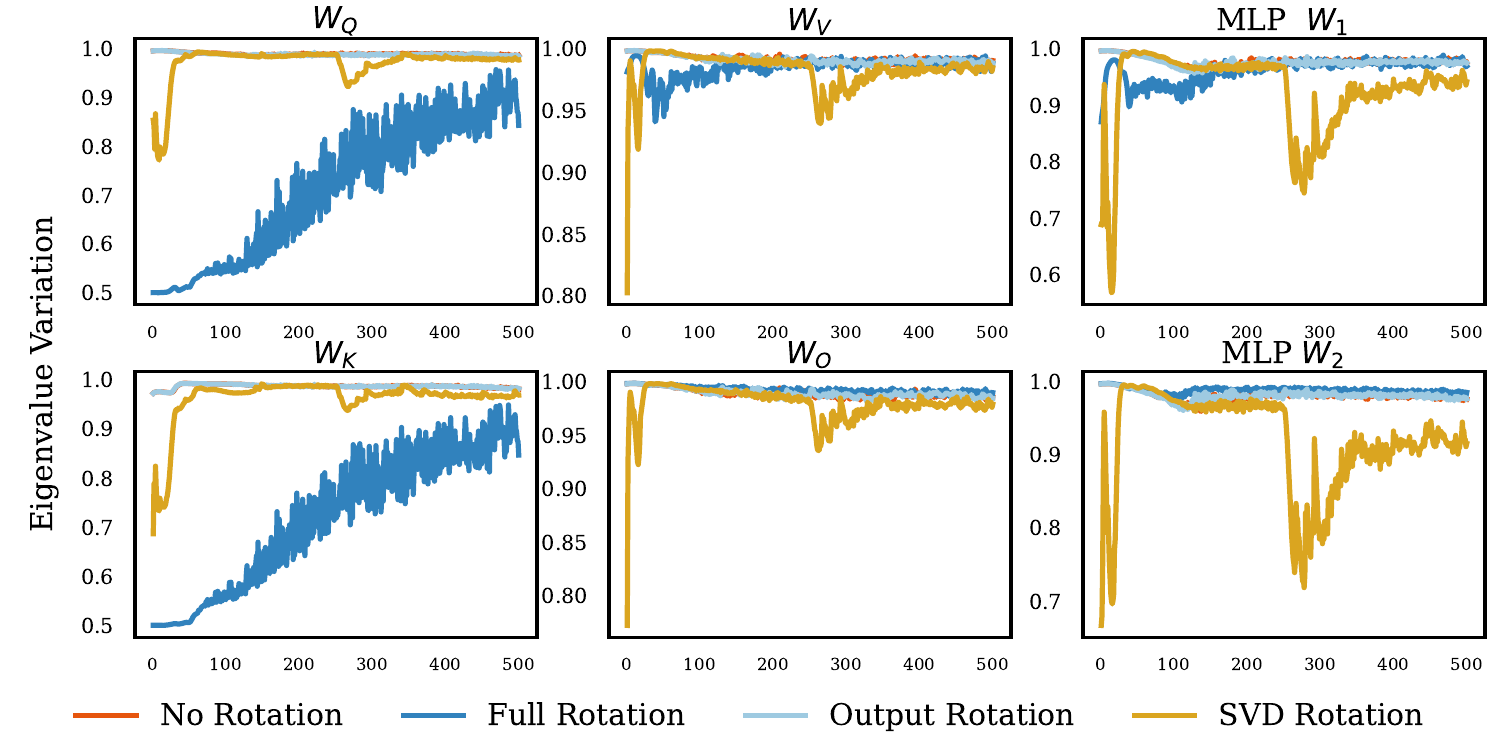}
    \includegraphics[trim={0 1.3cm 0 0}, clip,width=\linewidth]{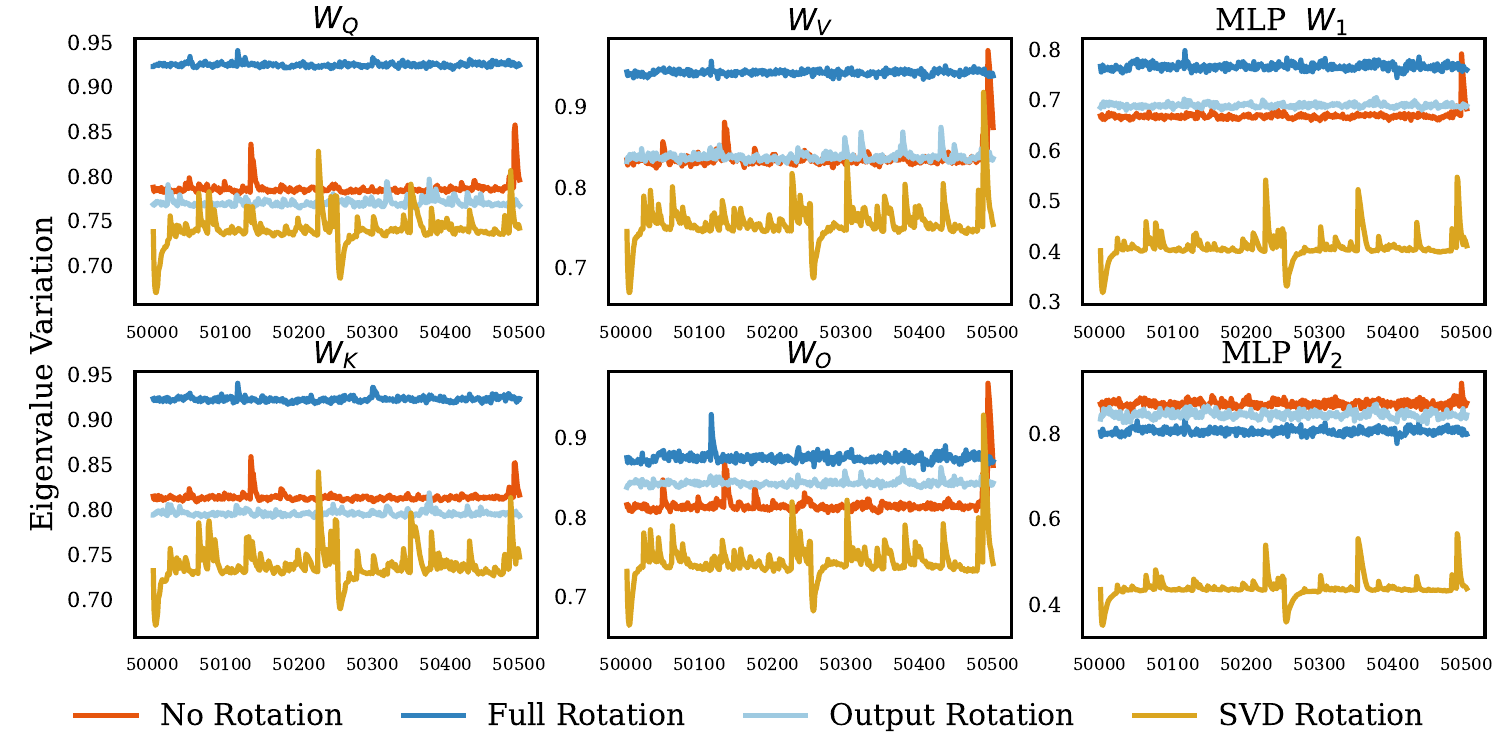}
    \includegraphics[trim={0 0 0 0}, clip,width=\linewidth]{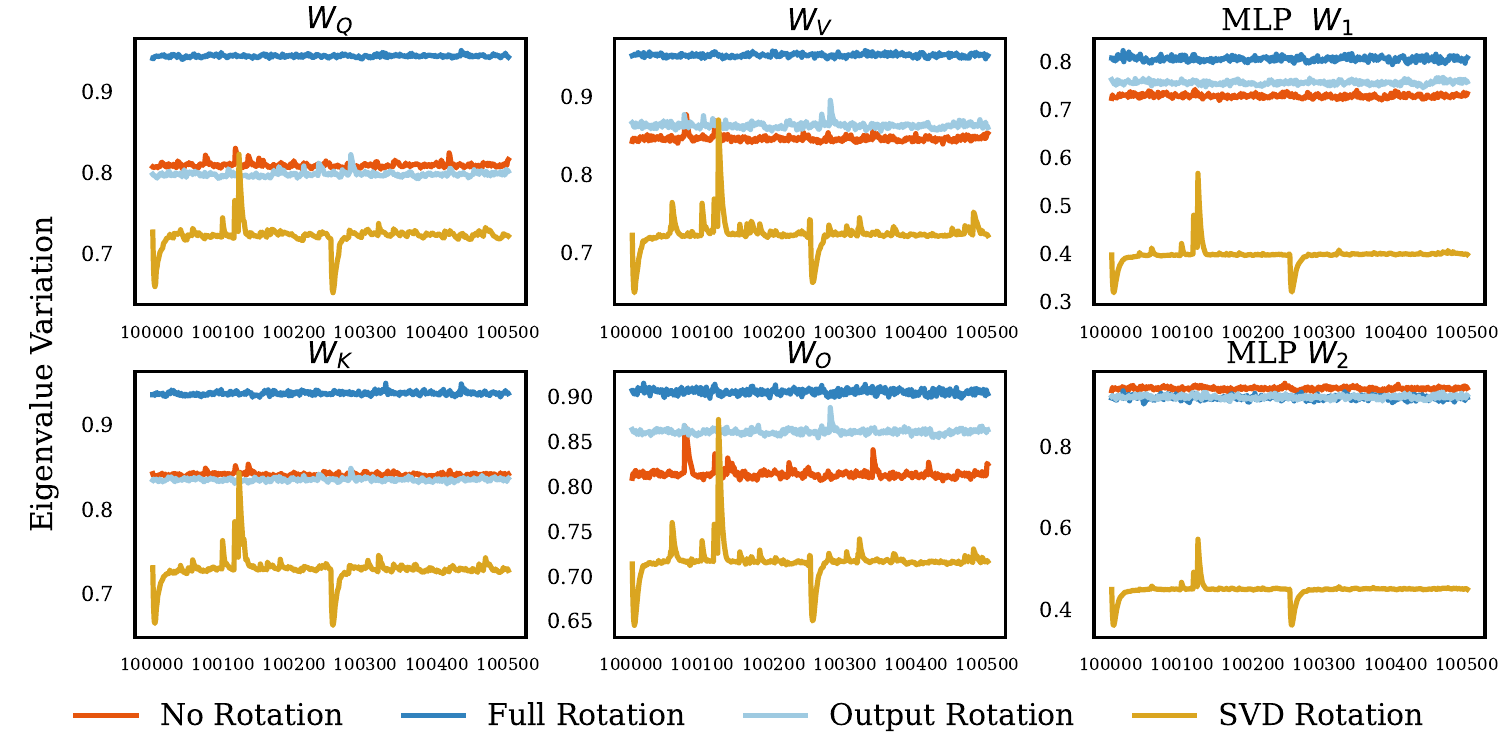}
    \caption{The variation eigenvalues of the scaled $\mathbf{AA}^\top$  described in \Cref{appendix:update_orthogonality} throughout training, averaged over network depth. We see similar results to the coefficient of variation.}
    \label{fig:layer_eig_ss}        
\end{figure*}

    \begin{figure*}[p]
        \centering
    \includegraphics[trim={0 1.3cm 0 0}, clip, width=\linewidth]{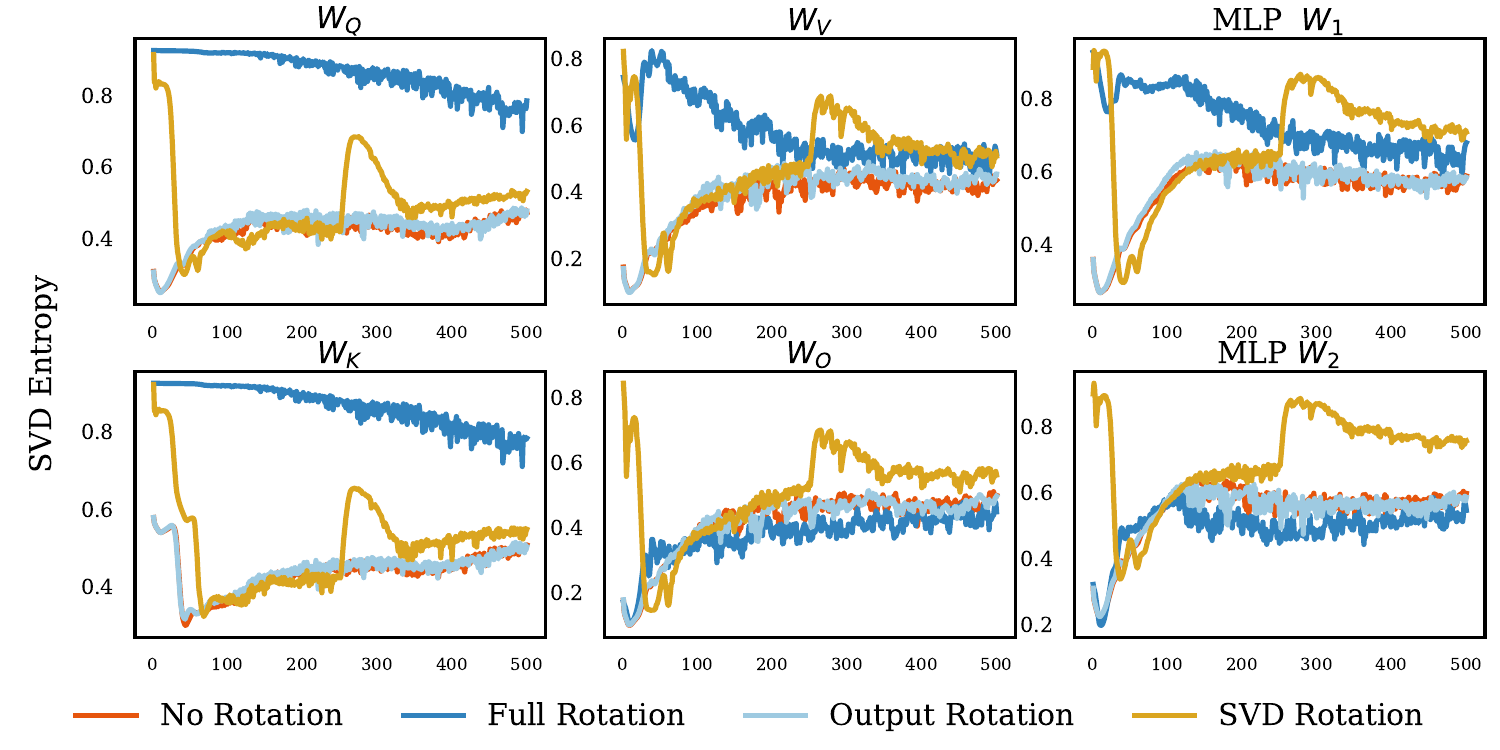}
    \includegraphics[trim={0 1.3cm 0 0}, clip,width=\linewidth]{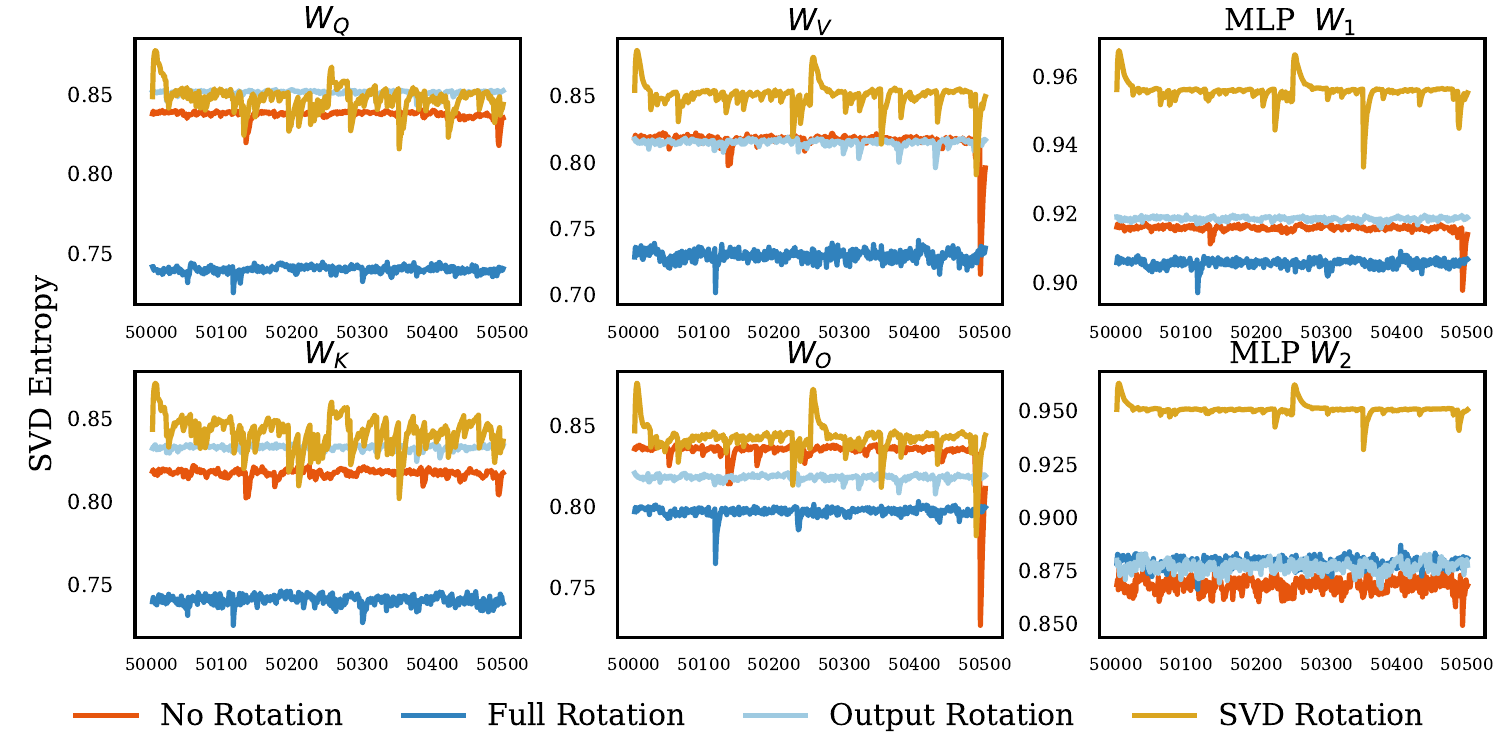}
    \includegraphics[trim={0 0 0 0}, clip,width=\linewidth]{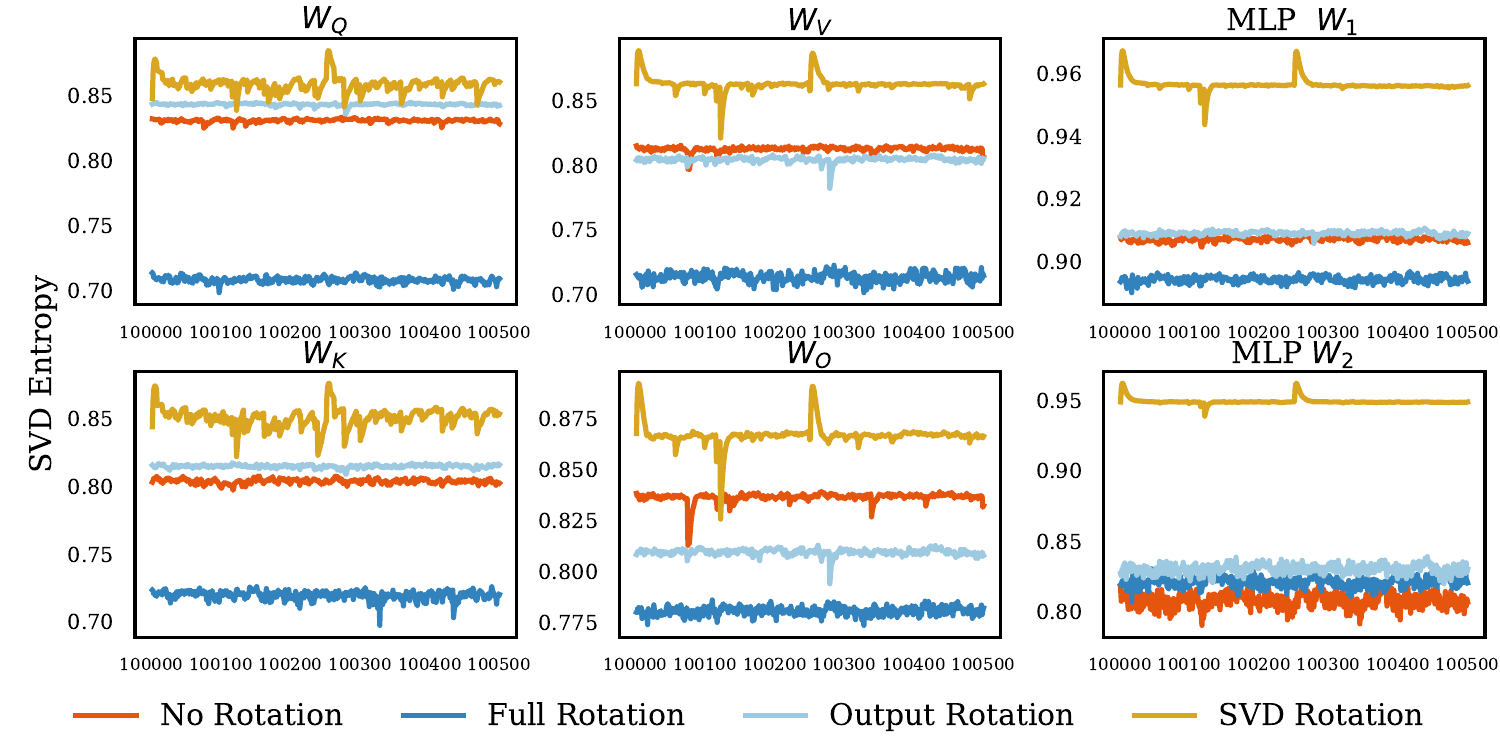}
    \caption{The SVD Entropy metric described in \citep{liu2025muonscalable}  throughout training, averaged over network depth. We see similar results to the coefficient of variation, noting that higher is better for this metric.}
    \label{fig:layer_avg_entropy}        
\end{figure*}

\section{Optimization Algorithms with Rotations}
\label{appendix:algorithms_and_rotations}
We remind here the SGD-M algorithm in \Cref{algo:sgd}, AdamW algorithm (pseudocode) in \Cref{algo:adamw}, and provide a rotated version in \Cref{algo:adamw_rotated}.

\begin{algorithm}[!ht]
\caption{SGD Momentum Optimization Algorithm}
\label{algo:sgd}
\begin{algorithmic}[1]
\Require $\alpha$: stepsize
\Require $\beta$: momentum parameter
\Require $\lambda$: weight decay coefficient
\Require $f(\boldsymbol{\theta})$: stochastic objective function with parameters $\boldsymbol{\theta}$
\State Initialize $\boldsymbol{\theta}_0$, $t \gets 0$
\While{$\boldsymbol{\theta}_t$ not converged}
    \State $t \gets t + 1$
    \State $\mathbf{g}_t \gets \nabla_\theta f_t(\boldsymbol{\theta}_{t-1})$ \Comment Get gradients w.r.t. stochastic objective at timestep $t$
    \State $\boldsymbol{\theta}_t \gets \boldsymbol{\theta}_{t-1} - \alpha  \mathbf{g}_t  + \beta(\boldsymbol{\theta}_{t-1} - \boldsymbol{\theta}_{t-2})- \alpha  \lambda  \boldsymbol{\theta}_{t-1}$ \Comment Update parameters
\EndWhile
\State \Return $\boldsymbol{\theta}_t$ \Comment Return the final parameters
\end{algorithmic}
\end{algorithm}

\begin{proof}[Proof of \cref{prop:sgd}]
Using the notation of \Cref{subsec:notations}, we consider SGD with momentum\footnote{We consider the heavy ball formulation here~\citep{POLYAK19641}, but the same would hold for Nesterov Accelerated Gradient~\citep{Nesterov1983AMF}.} with learning rate $\eta$, momentum parameter $\beta$, and fixed batches $B_t$:
\begin{equation*}
    \w_{t+1}= \mathcal{A}(\{\w_i\},f,t) = \w_t-\eta\nabla f_{B_t}(\w_t)+\beta(\w_t-\w_{t-1}),
\end{equation*}
By the chain rule, $\nabla f_{B_t}^{(\rot)}(\w_t)=\rot\nabla f_{B_t}(\rot^\top \w_t)$, hence:
\begin{align*}
    \w_{t+1}^{(\rot)}&:=\mathcal{A}(\{\rot \w_i\}_{i=0,\dots, t}, f^{(\rot)},t) \\
    &= \rot \w_t-\eta\rot \nabla f_{B_t}(\rot^\top\rot \w_t)+\beta(\rot \w_t - \rot \w_{t-1}) \\
    & = \rot \w_t-\eta\rot \nabla f_{B_t}(\w_t)+\beta\rot(\w_t-\w_{t-1}) \\
    & =\rot \w_{t+1}
\end{align*}
matching Definition \ref{def:rotation-invariance}.
\end{proof}

\begin{algorithm}[!ht]
\caption{AdamW Optimization Algorithm}
\label{algo:adamw}
\begin{algorithmic}[1]
\Require $\alpha$: stepsize
\Require $\beta_1, \beta_2 \in [0, 1)$: exponential decay rates for moment estimates
\Require $\lambda$: weight decay coefficient
\Require $\epsilon$: small constant for numerical stability
\Require $f(\boldsymbol{\theta})$: stochastic objective function with parameters $\boldsymbol{\theta}$
\State Initialize $\boldsymbol{\theta}_0$, $\mathbf{m}_0 \gets \mathbf{0}$, $\mathbf{v}_0 \gets \mathbf{0}$, $t \gets 0$
\While{$\boldsymbol{\theta}_t$ not converged}
    \State $t \gets t + 1$
    \State $\mathbf{g}_t \gets \nabla_{\boldsymbol{\theta}} f_t(\boldsymbol{\theta_{t-1}})$ \Comment Get gradients w.r.t. stochastic objective at timestep $t$
    \State $\mathbf{m}_t \gets \beta_1  \mathbf{m}_{t-1} + (1 - \beta_1)  \mathbf{g}_t$ \Comment Update biased first moment estimate
    \State $\mathbf{v}_t \gets \beta_2  \mathbf{v}_{t-1} + (1 - \beta_2)  \mathbf{g}_t^2$ \Comment Update biased second raw moment estimate
    \State $\hat{\mathbf{m}}_t \gets \mathbf{m}_t / (1 - \beta_1^t)$ \Comment Compute bias-corrected first moment estimate
    \State $\hat{\mathbf{v}}_t \gets \mathbf{v}_t / (1 - \beta_2^t)$ \Comment Compute bias-corrected second raw moment estimate
    \State $\boldsymbol{\theta_t} \gets \boldsymbol{\theta_{t-1}} - \alpha  \hat{\mathbf{m}}_t / (\sqrt{\hat{\mathbf{v}}_t} + \epsilon) - \alpha  \lambda  \boldsymbol{\theta_{t-1}}$ \Comment Update parameters
\EndWhile
\State \Return $\boldsymbol{\theta_t}$ \Comment Return the final parameters
\end{algorithmic}
\end{algorithm}

\begin{algorithm}[!ht]
\caption{AdamW Optimization Algorithm with Rotation}
\label{algo:adamw_rotated}
\begin{algorithmic}[1]
\Require $\alpha$: stepsize
\Require $\beta_1, \beta_2 \in [0, 1)$: exponential decay rates for moment estimates
\Require $\lambda$: weight decay coefficient
\Require $\epsilon$: small constant for numerical stability
\Require $f(\boldsymbol{\theta})$: stochastic objective function with parameters $\boldsymbol{\theta}$
\State Initialize $\boldsymbol{\theta}_0$, $\mathbf{m}_0 \gets \mathbf{0}$, $\mathbf{v}_0 \gets \mathbf{0}$, $t \gets 0$
\While{$\boldsymbol{\theta}_t$ not converged}
    \State $t \gets t + 1$
    \State $\mathbf{g}_t \gets \nabla_{\boldsymbol{\theta}} f_t(\boldsymbol{\theta}_{t-1})$ \Comment Get gradients w.r.t. stochastic objective at timestep $t$
    \State $\Tilde{\mathbf{g}}_t = \rot \mathbf{g}_t$  \Comment Apply rotation to gradients
    \State $\mathbf{m}_t \gets \beta_1  \mathbf{m}_{t-1} + (1 - \beta_1)  \Tilde{\mathbf{g}}_t$ \Comment Update biased first moment estimate
    \State $\mathbf{v}_t \gets \beta_2  \mathbf{v}_{t-1} + (1 - \beta_2)  \Tilde{\mathbf{g}}_t^2$ \Comment Update biased second raw moment estimate
    \State $\hat{\mathbf{m}}_t \gets \mathbf{m}_t / (1 - \beta_1^t)$ \Comment Compute bias-corrected first moment estimate
    \State $\hat{\mathbf{v}}_t \gets \mathbf{v}_t / (1 - \beta_2^t)$ \Comment Compute bias-corrected second raw moment estimate
    \State $\boldsymbol{\theta}_t \gets \boldsymbol{\theta}_{t-1} - \alpha  \rot^{-1} (\hat{\mathbf{m}}_t / (\sqrt{\hat{\mathbf{v}}_t} + \epsilon)) - \alpha  \lambda  \boldsymbol{\theta}_{t-1}$ \Comment Update parameters
\EndWhile
\State \Return $\boldsymbol{\theta}_t$ \Comment Return the final parameters
\end{algorithmic}
\end{algorithm}

While the SVD rotation we use in the AdamW algorithm can be represented as in \Cref{algo:adamw_rotated} mathematically for a specific choice in $R$, for clarity and to match our implementation, we write the SVD rotated AdamW in \Cref{algo:adamw_svd}. In our experiments, the SVD update frequency $\digamma$ was set to 250 steps. We note that while the other rotations we study are written as matrix-vector products, the SVD rotation is written as a left and right matrix product on the gradient matrix. These can be shown to be mathematically equivalent, but we clarify given the standard practice of writing the gradient as a vector.

\begin{algorithm}[!ht]
\caption{AdamW Optimization Algorithm with SVD Rotation}
\label{algo:adamw_svd}
\begin{algorithmic}[1]
\Require $\alpha$: stepsize
\Require $\beta_1, \beta_2 \in [0, 1)$: exponential decay rates for moment estimates
\Require $\lambda$: weight decay coefficient
\Require $\epsilon$: small constant for numerical stability
\Require $\digamma$: a frequency at which to update the SVD matrices
\Require $f(\boldsymbol{\theta})$: stochastic objective function with parameters $\boldsymbol{\theta}$
\State Initialize $\boldsymbol{\theta}_0$, $\mathbf{m}_0 \gets \mathbf{0}$, $\mathbf{v}_0 \gets \mathbf{0}$, $t \gets 0$
\While{$\boldsymbol{\theta}_t$ not converged}
    \State $t \gets t + 1$
    \State $\mathbf{g}_t \gets \nabla_{\boldsymbol{\theta}} f_t(\boldsymbol{\theta}_{t-1})$ \Comment Get gradients w.r.t. stochastic objective at timestep $t$
    \If {$t\mod \digamma  = 0$}
        \State $\mathbf{U}, \mathbf{S}, \mathbf{V}^\top \gets \text{SVD}(\mathbf{g}_t)$ \Comment Calculate the Singular Value Decomposition of $\mathbf{g}_t$
    \EndIf
    \State $\Tilde{\mathbf{g}}_t = \mathbf{U}^\top \mathbf{g}_t \mathbf{V}$  \Comment Apply rotation to gradients
    \State $\mathbf{m}_t \gets \beta_1 \mathbf{m}_{t-1} + (1 - \beta_1) \Tilde{\mathbf{g}}_t$ \Comment Update biased first moment estimate
    \State $\mathbf{v}_t \gets \beta_2\mathbf{v}_{t-1} + (1 - \beta_2)  \Tilde{\mathbf{g}_t}_t^2$ \Comment Update biased second raw moment estimate
    \State $\hat{\mathbf{m}}_t \gets \mathbf{m}_t / (1 - \beta_1^t)$ \Comment Compute bias-corrected first moment estimate
    \State $\hat{\mathbf{v}}_t \gets \mathbf{v}_t / (1 - \beta_2^t)$ \Comment Compute bias-corrected second raw moment estimate
    \State $\boldsymbol{\theta}_t \gets \boldsymbol{\theta}_{t-1} - \alpha  \mathbf{U} (\hat{\mathbf{m}}_t / (\sqrt{\hat{\mathbf{v}}_t} + \epsilon))\mathbf{V}^\top - \alpha  \lambda \boldsymbol{\theta}_{t-1}$ \Comment Update parameters
\EndWhile
\State \Return $\boldsymbol\theta_t$ \Comment Return the final parameters
\end{algorithmic}
\end{algorithm}

\section{SVD Rotations and Muon}
\label{appendix:svd_and_muon}

Recently, \citet{jordan2024muon} proposed Muon, an optimization algorithm for the internal linear layers of neural networks. This algorithm departed from various modifications of Adam on favour of using an ``orthogonalized" matrix update. We write the Muon algorithm in \Cref{algo:muon}. We note that a simpler version of the algorithm is described in \citep{jordan2024muon}, however, their implementation and the description in subsequent work is as described in \Cref{algo:muon}.  We additionally make a notational switch to emphasize that Muon acts only on the  matrices of internal layers, \citet{jordan2024muon} recommends using a different update scheme (e.g., AdamW) on vector-valued parameters such as bias vectors or LayerNorm parameters (along with the embedding and prediction layer in transformers). We write parameters at time $t$ as $\boldsymbol\Theta_t$ and their gradients as $\mathbf G_t$.

\begin{algorithm}[t]
\caption{Muon Optimization Algorithm}
\label{algo:muon}
\begin{algorithmic}[1]
\Require $\alpha$: stepsize
\Require $\mu$: Nesterov momentum parameter
\Require $\lambda$: weight decay coefficient
\Require $\epsilon$: small constant for numerical stability
\Require $f(\boldsymbol\Theta)$: stochastic objective function with parameters $\boldsymbol\Theta$
\State Initialize $\boldsymbol\Theta_0$, $\mathbf B_0 \gets \mathbf 0$, $t \gets 0$
\While{$\boldsymbol\Theta_t$ not converged}
    \State $t \gets t + 1$
    \State $\mathbf G_t \gets \nabla_{\boldsymbol{\Theta}} f_t(\boldsymbol\Theta_{t-1})$ \Comment Get gradients w.r.t. stochastic objective at timestep $t$
    \State $\mathbf B_t\gets \mu\mathbf B_{t-1} + \mathbf G_{t}$ \Comment Update momentum buffer
    \State $\mathbf B_t'\gets \mu\mathbf B_t + \mathbf G_{t}$ \Comment Apply Nesterov Momentum
    \State $\Tilde{\mathbf B}_t\gets \mathbf B_t'/(\|\mathbf B_t'\|_{F}+\epsilon)$ \Comment Normalize the update
    \State $\mathbf O_t\gets \text{NewtonSchulz5}(\Tilde{\mathbf B}_t)$ \Comment Approximately orthogonalize the update
    \State $\boldsymbol\Theta_t \gets \boldsymbol\Theta_{t-1} - \alpha \mathbf O_t - \alpha  \lambda  \boldsymbol\Theta_{t-1}$ \Comment Update parameters
\EndWhile
\State \Return $\boldsymbol\Theta_t$ \Comment Return the final parameters
\end{algorithmic}
\end{algorithm}

Muon's normalization and orthogonalization step aims to drive the singular values of the update towards one.
That is, if $\mathbf B_t'$ has the Singular Value Decomposition $\mathbf U\mathbf S\mathbf V^\top$, Muon aims to have the update approximate $\mathbf U\mathbf V^\top$. The approximation is computed through an iterative algorithm inspired  by  the Newton-Schulz method. 
We show that under simplifications, this is the same update recovered by SVD Rotated Adam(W). 
If we let $\beta_1   =  \beta_2 = \epsilon = 0$ in Adam(W) and compute a single step update with rotation, the numerator and denominator terms become the singular values of the gradient matrix, which cancel, and the rotation back to the original basis leaves us with $\mathbf U\mathbf V^\top$.
Mathematically, let $\mathbf U\mathbf S\mathbf V^\top$ be the SVD of gradient $\mathbf G$. 
Then, the SVD Rotated Adam(W) numerator becomes $\mathbf M = \mathbf U^\top\mathbf G\mathbf V = \mathbf U^\top\mathbf U\mathbf S\mathbf V^\top\mathbf V =\mathbf S$.
Similarly, the denominator is  the entry-wise square of the rotated gradient, which leaves us with $\mathbf V = \mathbf S^2$.
Then, computing the update $\mathbf M/\sqrt{\mathbf V}$ leaves us with $\mathbf S/\mathbf S$ which is the identity.
The final step in the algorithm is to rotate this update back, which is done by $\mathbf U\mathbf I\mathbf V^\top$, leaving us with the Muon update. 

While this setting is an oversimplification (the momentum parameters are often crucial for performance), it does offer an interesting connection between Adam's update in a different basis and more recent algorithms like Muon or Shampoo \citep{47079}.

\section{Common Assumptions in First-Order Optimization Theory}
\label{appendix:rot_dep_assumptions_proof}
We present a non-exhaustive summary of common assumptions used in theoretical works for first-order optimization, see \Cref{tab:assumptions_rotations}. For each assumption, we indicate whether it is rotation invariant.

\begin{table}[t]
    \centering
    \begin{tabular}{lc}
        \toprule
        \textbf{Assumption} & \textbf{Rotation-Invariant} \\
        \midrule
        (Strong-) Convexity & \checkmark \\
        Polyak-Lojasiewicz \citep{polyak_gradient_1963} & \checkmark \\
        Star-(Strong)-Convexity \citep{guilleescuret2020studyconditionnumbersfirstorder} & \checkmark \\
        Quadratic Growth \citep{goujaud2022optimalfirstordermethodsconvex} & \checkmark \\
        L-Smoothness ($L_2$ norm) \citep{defossez2022a, zhou2024on} & \checkmark  \\
        Gradient Growth Condition \citep{NEURIPS2022_b6260ae5} & \checkmark \\
        Bounded Expected Gradient Squared Norm \citep{Zou_2019_CVPR} & \checkmark  \\
        $(L_0, L_1)$-Smoothness \citep{NEURIPS2023_a3cc5012}  & \checkmark \\
        Restricted Secant Inequality \citep{NEURIPS2022_9daab3b4} & \checkmark \\
        Error Bound \citep{luo_error_1993,guilleescuret2023wrongturnssimplegeometry} & \checkmark \\
        L-smoothness ($L_{\infty}$ norm)\citep{guo2022novelconvergenceanalysisalgorithms}  & \xmark \\
        Coordinate-wise $(L_0, L_1)$-Smoothness \citep{crawshaw2022}  & \xmark \\
        Coordinate-wise “Affine” Variance Noise \citep{ li2024ofracsqrtdt14convergenceratermsprop}  & \xmark \\
        Bounded Gradient ($L_{\infty}$) \citep{j.2018on} \textbf{} & \xmark \\
        \bottomrule
    \end{tabular}
    \caption{Common assumptions involved in first-order optimization algorithm, indicating whether they are rotation-invariant. Rotation-dependent assumptions are comparatively rare in the literature.}
    \label{tab:assumptions_rotations}
\end{table}

%%%%%%%%%%%%%%%%%%%%%%%%%%%%%%%%%%%%%%%%%%%%%%%%%%%%%%%%%%%%

\end{document}